\theoremstyle{plain}
\let\hat\widehat
\let\tilde\widetilde
\newtheorem{lemma}{{\bf Lemma}}
\newtheorem{theorem}{{\bf Theorem}}
\newtheorem{assumption}{{\bf Assumption}}
\newtheorem{definition}{{\bf Definition}}
\newtheorem{remark}{{\bf Remark}}
\tikzstyle{intt}=[draw,text centered,minimum size=6em,text width=5.25cm,text height=0.34cm]
\tikzstyle{intl}=[draw,text centered,minimum size=2em,text width=2.75cm,text height=0.34cm]
\tikzstyle{int}=[draw,minimum size=2.5em,text centered,text width=6.5cm]
\tikzstyle{intg}=[draw,minimum size=2.5em,text centered,text width=6.cm]
\tikzstyle{sum}=[draw,shape=circle,inner sep=2pt,text centered,node distance=3.5cm]
\tikzstyle{summ}=[drawshape=circle,inner sep=4pt,text centered,node distance=3.cm]
\title{\Large{\textbf{Contextual Dynamic Pricing with Strategic Buyers}} } 
\author
{
Pangpang Liu\thanks{Mitchell E. Daniels, Jr. School of Business, Purdue University. Email: liu3364@purdue.edu.}\qquad 
Zhuoran Yang\thanks{Department of Statistics and Data Science, Yale University, Email: zhuoran.yang@yale.edu.} \qquad 
Zhaoran Wang\thanks{Department of Industrial Engineering and Management Sciences, Northwestern University, Email: zhaoranwang@gmail.com.} \qquad 
Will Wei Sun\thanks{Mitchell E. Daniels, Jr. School of Business, Purdue University. Email: sun244@purdue.edu. Corresponding author.}
}
\date{}
\begin{document} 

\maketitle

\begin{abstract}
\noindent
Personalized pricing, which involves tailoring prices based on individual characteristics, is commonly used by firms to implement a consumer-specific pricing policy.  In this process, buyers can also strategically manipulate their feature data to obtain a lower price, incurring certain manipulation costs. Such strategic behavior can hinder firms from maximizing their profits. In this paper, we study the contextual dynamic pricing problem with strategic buyers. The seller does not observe the buyer's true feature, but a manipulated feature according to buyers' strategic behavior. In addition, the seller does not observe the buyers' valuation of the product, but only a binary response indicating whether a sale happens or not. Recognizing these challenges, we propose a strategic dynamic pricing policy that incorporates the buyers' strategic behavior into the online learning to maximize the seller's cumulative revenue. We first prove that existing non-strategic pricing policies that neglect the buyers' strategic behavior result in a linear $\Omega(T)$ regret with $T$ the total time horizon, indicating that these policies are not better than a random pricing policy. We then establish an $O(\sqrt{T})$ regret upper bound of our proposed policy and an $\Omega(\sqrt{T})$ regret lower bound for any pricing policy within our problem setting. This underscores the rate optimality of our policy. Importantly, our policy is not a mere amalgamation of existing dynamic pricing policies and strategic behavior handling algorithms. Our policy can also accommodate the scenario when the marginal cost of manipulation is unknown in advance. To account for it, we simultaneously estimate the valuation parameter and the cost parameter in the online pricing policy, which is shown to also achieve an $O(\sqrt{T})$ regret bound. Extensive experiments support our theoretical developments and demonstrate the superior performance of our policy compared to other pricing policies that are unaware of the strategic behaviors. 

% \wei{we need to reflect new changes in the Abstract and Introduction. E.g., in the abstract we should say we have proved lower bound, our rate is optimal. in the introduction, we should highlight this as well as the new changes in the experiments and other parts. Mark them in red. Currently, the main paper does not have much changes.}
\end{abstract}

\bigskip
\noindent{\bf Key Words:} Bandit algorithm; Contextual dynamic pricing; Online learning; Strategic buyers; Reinforcement learning.

\newpage
\baselineskip=25pt %JASA allows at most 26 lines per page. Do not change

\section{Introduction}
\label{sec:introduction}
Price discrimination based on customer features, such as web browser, purchasing history, job status, is a common practice among firms \citep{Mikians2013, Hannak2014}.  Personalized pricing uses information on each individual’s observed characteristics to implement consumer-specific price discrimination. However, consumers can also manipulate their data to obtain a lower price, thereby contaminating the data that firms use for targeted pricing. These facts result in firms not always benefiting from acquiring more data to infer consumer preferences. These manipulating behaviors do not alter the true valuation of the costumers, but affect the offered price. Also, the manipulating behavior incurs some costs, which are determined by factors such as laws, technology, educational programs \citep{Li2023}.  \par
Strategic behaviors often arise when buyers become aware of personalized pricing strategies. One specific example is that Home Depot discriminates
against Android users \citep{Hannak2014}. The buyers can use browser plugins
such as the User-Agent Switcher to manipulate their device information. The feature manipulation does not change the buyer's valuation of the product, but it incurs some costs. One cost is to find the fact that Android users get a higher price on Home Depot. The other cost is to learn how to manipulate device information. Another example is loan fraud. To acquire a loan, the borrower may manipulate the income, job status, the value of the car or house \citep{Jerzy2021}. The borrower's valuation of the loan does not change due to the manipulation, but the manipulation causes some costs, such as, preparing documents to prove the income and job status, paying a price to the assets appraisal agency to obtain a high  assessed value of the asset. \par
\begin{figure}
    \centering
    \includegraphics[scale=0.35]{./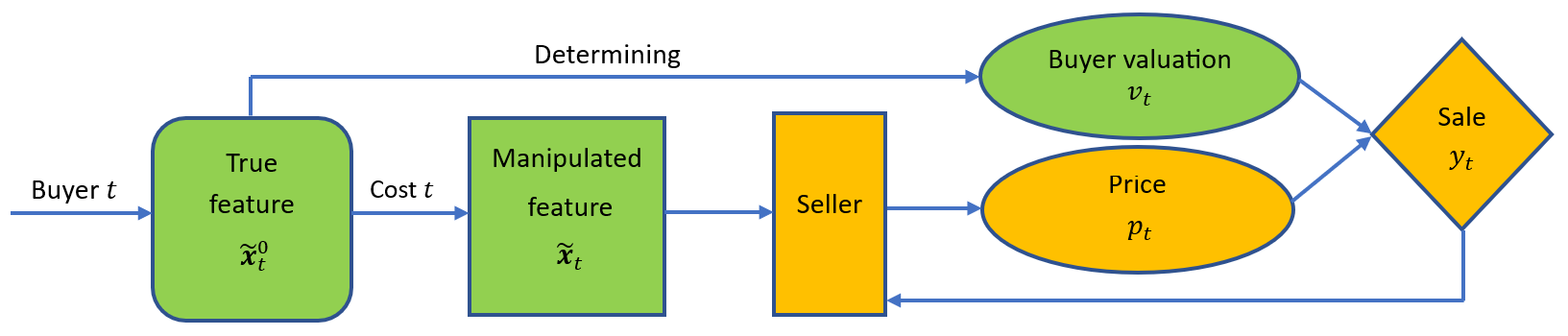}
    \caption{Online dynamic pricing process with strategic buyers. The seller can only observe the manipulated feature, while the buyer's valuation is determined by the true feature.}
    \label{fig0}
\end{figure}
In this paper, we study contextual dynamic pricing problem with strategic buyers. The buyer strategically manipulates features for pursuing a lower price. We consider the manipulating behavior which aims at gaming the pricing policy without altering the true valuation. Figure \ref{fig0} shows the schematic representation of the online dynamic pricing process with strategic buyers. At each time step $t$, a buyer arrives with a true feature vector $\tilde{\boldsymbol{x}}_t^0$. In order to obtain a lower price, the buyer incurs certain costs to manipulate the true feature $\tilde{\boldsymbol{x}}_t^0$ and subsequently reveals the manipulated feature $\tilde{\boldsymbol{x}}_t$ to the seller. Upon receiving the manipulated feature vector $\tilde{\boldsymbol{x}}_t$, the seller makes a pricing decision by selecting a price $p_t$. The buyer, after comparing the price $p_t$ with the valuation $v_t$, which is determined based on the true feature vector $\tilde{\boldsymbol{x}}_t^0$, decides whether to make a purchase ($y_t=1$) or not ($y_t=0$). Finally, the seller collects the revenue $p_t\mathbb{I}(y_t=1)$ at time $t$. These steps are repeated for buyers that arrives sequentially, forming the online dynamic pricing process. Our goal is to develop an online pricing policy to decide the price at each each time to maximize the overall revenue. 

\subsection{Our Contribution}
%In this work, we tackle the contextual dynamic pricing problem with strategic buyers. In real-world scenarios, the seller often sets prices based on buyers' features, and buyers can manipulate their features to get a lower price. However, this kind of 

%Methodologically,  we propose algorithms for dynamic pricing with strategic buyers.  In related works \citep{Javanmard2019, Fan2022,luo2023}, the buyers are assumed to reveal the true features to the seller. However, in many cases, buyers strategically manipulate features (such as Home Depot and loan fraud mentioned above). In existing dynamic pricing literature as mentioned in \ref{stra}, the strategic behaviors of timing and untruthful bidding are considered when buyers are forward-looking. However, myopic buyers have no motivation to  strategize about the time of purchasing or tell a lie. Studying the strategic behavior of myopic buyers is a necessary and practical topic. We explore myopic buyers' strategic behavior of manipulating features, and fill the gap of strategic behaviors in the area of dynamic pricing.  As the best of our knowledge, we are the first to consider the strategic behavior of manipulating features in the field of dynamic pricing.\par 

The aforementioned strategic behavior has not been taken into account in previous dynamic pricing literature \citep{Javanmard2019, Xu2021,Fan2022, Xu2022, luo2022, wang2022, luo2023}, which we refer to as the non-strategic dynamic pricing policies. Studying the strategic behavior of myopic buyers is a necessary and practical topic. To fill the gap, we develop a strategic dynamic pricing policy that takes into consideration buyers' strategic behaviors. As the best of our knowledge, we are the first to consider the strategic behavior of manipulating features in the field of dynamic pricing.\par 

Our policy comprises two phases: the exploration phase and the exploitation phase. In the exploration phase, the seller uses a uniform pricing policy, offering prices from a uniform distribution, to collect features without manipulation and obtain an estimation of buyers' preference parameters based on the collected true features. The rationale behind revealing true features lies in the fact that the offered uniform price is independent of the features, and the optimal action for buyers is to reveal their true features during the exploration phase. In the exploitation phase, the seller employs an optimal pricing policy to collect more revenues. The exploration phase incurs a higher regret but improves the accuracy of parameter estimation. The estimated parameters obtained from the exploration phase aid in learning the true features and implementing the optimal pricing policy during the exploitation phase, resulting in a smaller cumulative regret over a long run. Therefore, the seller faces the exploration-exploitation trade-off to decide between learning about the model parameters (exploration) and utilizing the knowledge gained so far to collect revenues (exploitation). \par

%In our problem, manipulating features incurs costs for the buyers. We assume a quadratic cost function, which is utilized and assumed to be known by the principal in previous literature \citep{Bechavod2022}. Depending on the assumptions made regarding the transparency of the marginal cost of manipulating features, we propose pricing policies for both known and unknown marginal costs of manipulation, respectively.  Overall, our paper makes the following contributions.\par

The performance of the pricing policy is evaluated via a (cumulative) regret, which is the cumulative expected revenue loss against a clairvoyant policy that possesses complete knowledge of both the valuation model parameters and the true features of buyers in advance, and always offers the revenue-maximizing price. Theoretically, we prove that our strategic dynamic pricing policies achieve a regret upper bound of $O(\sqrt{T})$, where $T$ is the time horizon. Importantly, we establish an $\Omega(\sqrt{T})$ regret lower bound of any pricing
policy in our problem setting, indicating the optimality of our pricing policy. In a strategic environment, the seller faces the challenge of not having direct access to the true buyer features. This lack of direct observation makes it difficult for the seller to accurately learn the true value associated with each buyer, as the true value is inherently determined by these unobservable features. Importantly, our policy is not a mere amalgamation of existing dynamic pricing policies and strategic behavior handling algorithms. Our policy can also accommodate the scenario when the marginal cost of manipulation is unknown in advance. To account for it, we simultaneously estimate the valuation parameter and the cost parameter in the online pricing policy, where the cost of manipulation is inferred via a small portion of repeated buyers in the exploration and exploitation stages. In contrast, we prove that any non-strategic pricing policy has an $\Omega(T)$ regret lower bound, indicating the necessity of considering strategic dynamic pricing in our problem.

\subsection{Related Literature}
Our work is related to recent literature on contextual dynamic pricing with online learning, and strategic classification. Additional relevant literature on timing and untruthful bidding in pricing and auction design is provided in the supplement.

%strategic behaviors in the dynamic pricing problem, and learning problem with feature manipulating based strategic agents. %In the following, we briefly overview the related works.

\subsubsection{Contextual Dynamic Pricing with Online Learning}
There has been a growing interest in studying contextual dynamic pricing with online learning. %This problem involves buyers who have value functions based on product features and customer characteristics, and the demand function is binary, indicating whether a sale occurs based on the comparison between the value and the price. 
Several aspects of contextual dynamic pricing have been studied, including dynamic pricing in high-dimensions \citep{Javanmard2019}, dynamic pricing with unknown noise distribution \citep{Fan2022, luo2022, Xu2022,luo2023}, always-valid high-dimensional dynamic pricing policy \citep{wang2022}, dynamic pricing with adversarial settings \citep{Xu2021}.  Notably, in these studies, the sellers have access to the true customer characteristics, and the buyers are not strategic in their behaviors. To enhance this existing body of work, our study introduces a novel dimension by considering strategic buyers who can manipulate features to game the pricing system. This extension allows us to explore the interplay between strategic behaviors and dynamic pricing, thereby contributing to the understanding of more realistic and complex market dynamics.

\subsubsection{Strategic Classification}
Strategic classification studies the interaction between a classification rule and the strategic agents it governs. Rational agents respond to the classification rule by manipulating
their features \citep{Hardt2016, Dong2018, chen20201, Ghalme2021,Bechavod2021, shao2023strategic}. 
% Two kinds of strategic behaviors in classification are commonly explored: gaming and improvement. Gaming corresponds to the case where the agent response causes a change in the prediction but not the underlying target variable, while improvement corresponds to the case where the agent response to the predictor causes a positive change in the target variable \citep{Miller2020}. 
Specifically, \cite{Ghalme2021} studied the strategic classification, in which the classifier is not revealed to the agents, and the agents' cost function is publicly known. In \cite{chen20201}, the learner knows that the agent misreports the features in a given ball of the true features. %\cite{Dong2018} estimated the parameters by the gradient decent method \citep{Flaxman}. 
On the other hand, within the realm of improvement, certain studies have delved into methods for incentivizing agents to improve their outcomes instead of gaming the classifier \citep{Kleinberg2020, Keegan2022}, as well as approaches for identifying meaningful causal variables \citep{Bechavod2021}. 
%Feature manipulation has not been extensively studied in the field of dynamic pricing, making our work particularly novel and valuable. It is important to note that 
The strategic classification problem differs significantly from our setting. Strategic classification is a supervised learning problem, where the objective is to minimize misclassification errors. The focus is on developing algorithms that can effectively classify instances based on their features, considering the strategic behavior of the entities involved. %The primary goal is to achieve high classification accuracy or other performance metrics, without explicitly considering regret or the online learning aspects inherent in dynamic pricing.
In contrast, the dynamic pricing problem we address is an online bandit learning problem, where the seller needs to make pricing decisions in a sequential and adaptive manner, and our objective is to minimize regret. In our setting, we consider the strategic behavior of buyers who manipulate their features to obtain lower prices. This introduces additional challenges in estimating buyer preferences and determining optimal pricing strategies. Our work extends the understanding of strategic behaviors in dynamic pricing by considering feature manipulation and its impact on regret minimization, thereby enriching the existing literature in this field.

 Moreover, our research is connected to, yet different from, the concept of performative prediction as introduced by \cite{Perdomo2020} and other related works \citep{Mendler,brown22a,yu2022strategic,chen2023performative}. Performative prediction addresses the distribution shift issue that arises when the collected data distribution changes in response to decision-making policies. It includes strategic classification as a specific case. Our work presents several distinctions from this line of research. Firstly, while performative prediction literature addresses cases where the feature undergoes genuine change, our approach deals with scenarios where the true feature remains unchanged, but the user strategically misreports the feature. Consequently, their work is not applicable to our specific problem, wherein the manipulation of features does not alter the buyer's valuation of the product. Secondly, this difference of problem setting leads to a fundamental difference in the construction of the loss function. The loss function in performative prediction literature integrates observed features from the shifted distribution, whereas in our methodology, it is formulated based on unobserved features prior to any manipulation. Moreover, \cite{Perdomo2020} assumed the loss function to be strongly convex, which is not needed in our setting. Thirdly, our study is tailored for dynamic pricing problems within an online bandit setting, presenting a low-regret algorithm, which has not been investigated in existing performative prediction literature. 
Therefore, these fundamental differences necessitate the development of new algorithms and analysis tools.  

\subsection{Notation}\label{notation}
Throughout this paper,  we denote $[T]=\{1, 2, \cdots, T\}$ for any positive integer $T$. For any vector $\boldsymbol{x}\in \mathbb{R}^n$ and any positive integer $q$, the $L_q$-norm is $\|\boldsymbol{x}\|_q=(\sum_{i=1}^n|x_i|^q)^{1/q}$. For any matrix $\boldsymbol{X}\in\mathbb{R}^{n_1\times n_2}$, we use $\|\cdot\|$ to denote the spectral norm of $\boldsymbol{X}$. For any event $E$, $\mathbb{I}(E)$ represents an indicator function which equals to 1 if $E$ is true and 0 otherwise. For two positive sequences $\{a_n\}_{n\geq 1}, \{b_n\}_{n\geq 1}$, we say $a_n=O(b_n)$ if $a_n\leq Cb_n$ for some positive constant $C$, and $a_n=\Omega (b_n)$ if $a_n\geq Cb_n$ for some positive constant $C$. We let $\tilde{O}(\cdot)$ represent the same meaning of $O(\cdot)$ except for ignoring log factors.

\subsection{Paper Organization}
The rest of the paper is organized as follows. In Section \ref{sec2}, we define the dynamic pricing problem with strategic buyers. In Section \ref{sec3}, We present the policy for dynamic pricing with the known marginal cost. In Section \ref{sec4}, we relax the known marginal cost assumption, and develop a policy for dynamic pricing with unknown marginal cost. In Section \ref{sec5}, we analyze the regret of our proposed strategic policies. In Section \ref{sec6}, we conduct experiments to demonstrate the performance of our algorithm. We provide additional information related to our paper and the proofs in the supplemental materials.

%\wei{highlight additional materials in the supplement, including xxx, xxx, xxx, and proofs. Mark in red.}

%%%%%%%%%%%%%%%%%%%%%%
%%%% Section 2 %%%%%%%
%%%%%%%%%%%%%%%%%%%%%%

\section{Problem Setting}\label{sec2}
We study the pricing problem where a seller has a single product for sale at each time period $t=1,2,\cdots, T$, where $T$ denotes the length of the horizon and may be unknown to the seller. At time $t$, a buyer with a vector of true covariates $\tilde{\boldsymbol{x}}_t^0\in \mathbb{R}^{d}$ arrives. 
\begin{remark}
In dynamic pricing literature, covariates typically include product features, e.g., insurance product features, and customer characteristics, e.g., customer financial status, and both are observable by the seller. Since product features cannot be modified by buyers, to simply the presentation we only consider customer characteristics in the covariates $\tilde{\boldsymbol{x}}_t^0$ to study the buyers' strategic behavior of manipulating customer characteristics. Our analysis can be straightforwardly extended to the scenario where $\tilde{\boldsymbol{x}}_t^0$ includes both product features and customer characteristics.
\end{remark}
Following the dynamic pricing literature \citep{Javanmard2019, Xu2021,Xu2022, luo2022, wang2022, luo2023}, we assume the buyer's valuation of the product is a linear function of the feature covariates $\tilde{\boldsymbol{x}}_t^0$, which is unobservable by the seller. In particular, we define $\boldsymbol{x}_t^0=(\tilde{\boldsymbol{x}}_t^{0\top}, 1)^\top$, where $\{\tilde{\boldsymbol{x}}_t^0\}_{t\geq 1}$ are independently and identically distributed ($i.i.d.$) samples from an unknown distribution $\mathbb{P}_X$ supported on a bounded subset $\mathcal{X}\in \mathbb{R}^d$. The buyer's valuation function is defined as 
$v_t=\boldsymbol{\theta}_0^\top \boldsymbol{x}_t^0+z_t,$
where $\boldsymbol{\theta}_0= (\boldsymbol{\beta}_0^\top, \alpha_0)^\top\in\mathbb{R}^{d+1}$ represents the buyer's true preference parameter, which is unknown to the seller, and $\{z_t\}_{t\geq 1}$ are $i.i.d.$ noises from a distribution with mean zero and a cumulative density function $F$. At time $t$, the seller posts a price $p_t$. If $p_t\leq v_t$, a sale occurs, and the seller obtains the revenue $p_t$. Otherwise, no sale occurs. We denote $y_t$ as the response variable that indicates whether a sale occurs at time $t$, $i.e.$, 
\begin{equation}
  y_t =
    \begin{cases}
      1 & \text{if} \quad v_t\geq p_t \\
      0 & \text{if}  \quad v_t< p_t.
    \end{cases}       
\end{equation}
The response variable can be represented by the following probabilistic model,
\begin{equation*}
  y_t =
    \begin{cases}
      1 & \text{with probability} \quad 1-F(p_t-\boldsymbol{\theta}_0^\top \boldsymbol{x}_t^0) \\
      0 & \text{with probability} \quad F(p_t-\boldsymbol{\theta}_0^\top \boldsymbol{x}_t^0).
    \end{cases}       
\end{equation*}\par

\subsection{Clairvoyant Policy and Performance Metric}
A clairvoyant seller who knows the true parameter $\boldsymbol{\theta}_0$ and the true feature $\tilde{\boldsymbol{x}}_t^0$ is able to conduct an oracle pricing policy, which can serve as a benchmark for evaluating a pricing policy. The goal of a rational seller is to obtain more revenue. Hence, a clairvoyant seller would post the price by maximizing the expected revenue, that is, 
\begin{equation}\label{p}
p_t^*=\operatorname*{argmax}_p p(1-F(p-\boldsymbol{\theta}_0^\top \boldsymbol{x}_t^0)).
\end{equation}
The first-order condition of (\ref{p}) yields $p_t^*=\frac{1-F(p_t^*-\boldsymbol{\theta}_0^\top \boldsymbol{x}_t^0)}{f(p_t^*-\boldsymbol{\theta}_0^\top \boldsymbol{x}_t^0)}.$
We define $\phi(v)=v-[1-F(v)]/f(v)$ as the virtual valuation function and $g(v)=v+\phi^{-1}(-v)$ as the pricing function. By simple calculations, we obtain the oracle pricing policy as follows,
\begin{equation}\label{price}
 p_t^*=g(\boldsymbol{\theta}_0^\top \boldsymbol{x}_t^0).  
\end{equation}
%where $g(v)=v+\phi^{-1}(-v)$ is the pricing function. \par
Now, we discuss the performance measure of a pricing policy. Let $\pi$ be the seller's policy that sets price $p_t$ at time $t$.  To evaluate the performance of any policy $\pi$, we compare its revenue to that of an oracle pricing policy run by a clairvoyant seller who knows both $\boldsymbol{\theta}_0$ and $\tilde{\boldsymbol{x}}_t^0$ and offers $p_t^*$ according to (\ref{price}) for any given $t$.  The worst-case regret is defined as follows,
\begin{equation}\label{bench}
\text{Regret}_{\pi}(T)=\mathop{\max}_{\substack{\theta_0\in\Theta \\ \mathbb{P}_X\in Q(\mathcal{X})}} \mathbb{E}\bigg\{\sum_{t=1}^T[p_t^*\mathbb{I}(v_t\geq p_t^*)-p_t\mathbb{I}(v_t\geq p_t)]\bigg\},
\end{equation}
where the expectation is with respect to the randomness in the noise $z_t$ and the feature $\boldsymbol{x}_t^0$. Here $Q(\mathcal{X})$ represents the set of probability distributions supported on a bounded set $\mathcal{X}$. %The worst-case regret (\ref{bench}) is also used in \cite{Javanmard2019} and \cite{Fan2022}. 
Our objective is to find a pricing policy $\pi$ such that the above total regret is minimized.

\subsection{Feature Manipulation}
As shown in (\ref{price}), the seller's price is determined by the features. Therefore, the buyer has an incentive to manipulate features to lower the price of the product. Following \cite{Bechavod2022}, we consider a quadratic cost function. That is, the buyers' cost for modifying feature $\tilde{\boldsymbol{x}}_t^0$ to $\tilde{\boldsymbol{x}}$ is 
$$cost(\tilde{\boldsymbol{x}}, \tilde{\boldsymbol{x}}_t^0)=\frac{1}{2}(\tilde{\boldsymbol{x}}-\tilde{\boldsymbol{x}}_t^0)^\top A (\tilde{\boldsymbol{x}}-\tilde{\boldsymbol{x}}_t^0),$$
where $A$ is a marginal cost of manipulating features. In the main paper, we assume that $A$ is fixed and same across users. In the supplementary materials, we extend our policy to accommodate the scenario of heterogeneous marginal costs.

 %We require the following assumption about the marginal cost.
\begin{assumption}\label{assc}
The marginal cost $A$ is assumed to be a symmetric positive definite matrix with the minimum eigenvalue $\lambda_{Amin}$ and the maximum eigenvalue $\lambda_{Amax}$.   
\end{assumption}
This functional form is a simple way to model important practical situations in which features can be modified in a correlated manner, and investing in one feature may lead to changes in other features. %Throughout our paper, we assume the seller knows the quadratic form of the cost function. 
In Section \ref{sec3}, we assume the marginal cost of manipulation $A$ is known by the seller, same as \cite{Bechavod2022}, and in Section \ref{sec4}, we relax this assumption by considering a more challenging unknown manipulation cost. \par

Let $\tilde{\boldsymbol{x}}_t$ be the manipulated feature, which is observable by the seller. We define $\boldsymbol{x}_t=(\tilde{\boldsymbol{x}}_t^\top, 1)^\top$.  From buyers' perspective, the seller assesses the expected valuation by $\boldsymbol{\theta}_0^\top \boldsymbol{x}_t$. The total cost of buyers is the price $p_t$ and the manipulation cost $cost(\tilde{\boldsymbol{x}}, \tilde{\boldsymbol{x}}_t^0)$, where the price $p_t$ is determined by the seller's pricing policy. We consider two pricing policies, \textbf{the uniform pricing policy} in the exploration stage and \textbf{the optimal pricing policy} in the exploitation stage. In the exploration stage, the seller focuses on collecting more informative data for parameter estimation and hence implements a uniform pricing policy such that $p_t$ is randomly chosen from a uniform distribution $\text{Unif}(0, B)$. After this initial period, the exploitation stage implements an optimal pricing policy such that price is set by the pricing function $g(\cdot)$. Let $\hat{\boldsymbol{\theta}}$ be the seller's estimation of $\boldsymbol{\theta}_0$. In our pricing process, we assume that the seller's pricing policy is transparent to the buyers \citep{Chen2018}, meaning that the buyers are aware that the seller is implementing either a uniform pricing policy or an optimal pricing policy $g(\cdot)$. It is important to note that the specific assessment rule $\hat{\boldsymbol{\theta}}$ used by the seller is not revealed to the buyers,  which is a similar assumption made in  \cite{Bechavod2022}.\par
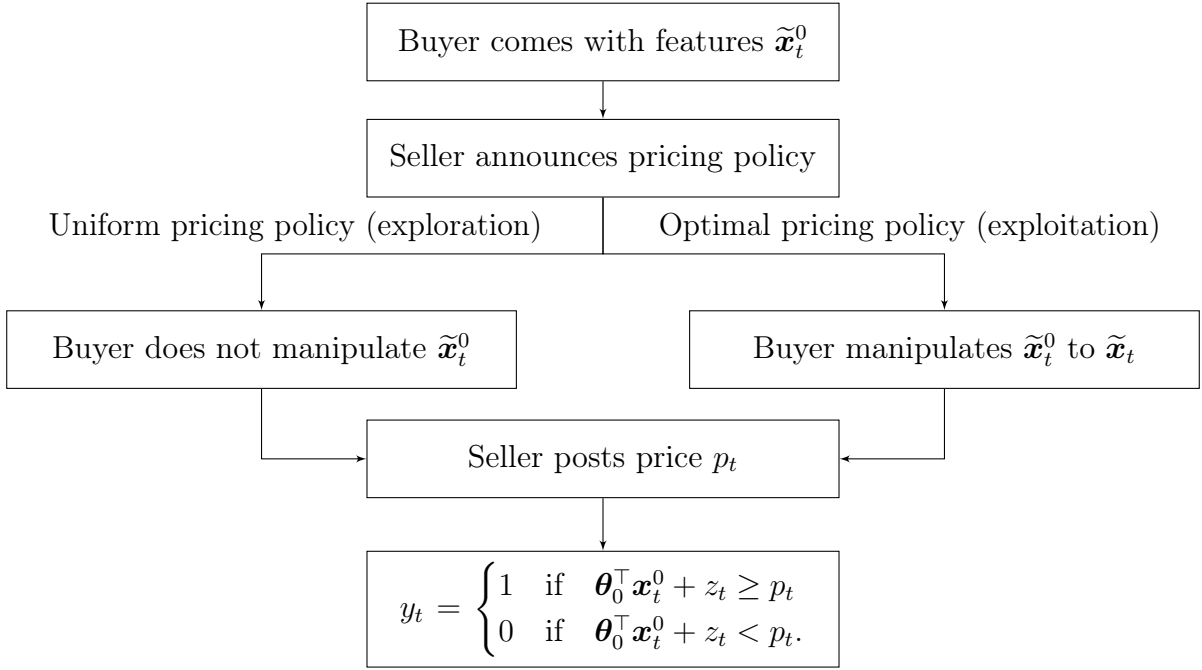
\begin{figure}[!htb]
    \centering
    \begin{tikzpicture}[
      >=latex',
      auto
    ]
      \node [intg] (start)   {Buyer comes with features $\tilde{\boldsymbol{x}}_t^0$};
      \node [intg] (kp) [node distance=0.5cm,below=of start] {Seller announces pricing policy};
      \node [int]  (ki1) [node distance=1.5cm and -2cm,below left=of kp] {Buyer does not manipulate $\tilde{\boldsymbol{x}}_t^0$};
      \node [int]  (ki2) [node distance=1.5cm and -2cm,below right=of kp] {Buyer manipulates $\tilde{\boldsymbol{x}}_t^0$ to $\tilde{\boldsymbol{x}}_t$};
      \node [intg] (ki3) [node distance=4cm,below of=kp] {Seller posts price $p_t$};
      \node [intg] (ki4) [node distance=2cm,below of=ki3] {$  y_t =
    \begin{cases}
      1 & \text{if} \quad \boldsymbol{\theta}_0^\top \boldsymbol{x}_t^0+z_t\geq p_t \\
      0 & \text{if}  \quad \boldsymbol{\theta}_0^\top \boldsymbol{x}_t^0+z_t< p_t.
    \end{cases} $};
      \draw[->] (start) -- (kp);
      \draw[->] (kp) -- ($(kp.south)+(0,-0.75)$) -| (ki1) node[above,pos=0.45] {Uniform pricing policy (exploration)} ;
      \draw[->] (kp) -- ($(kp.south)+(0,-0.75)$) -| (ki2) node[above,pos=0.45] {Optimal pricing policy (exploitation)};
      \draw[->] (ki1) |- (ki3);
      \draw[->] (ki2) |- (ki3);
      \draw[->] (ki3) -- (ki4);
    \end{tikzpicture}
    \caption{Schematic representation of the strategic dynamic pricing policy.}
    \label{fig01}
  \end{figure}
Prior to buyers making decisions, the seller discloses to the buyer: i) the chosen pricing policy (uniform or optimal) and ii) the pricing function $g(\cdot)$ if the optimal pricing policy is employed, without revealing the estimated parameter $\hat{\boldsymbol{\theta}}$. Based on this information, buyers engage in manipulation.
By revealing the manipulated features $\tilde{\boldsymbol{x}}$ to the seller, buyers estimate that the price for the product is $g(\alpha_0+\boldsymbol{\beta}_0^\top \tilde{\boldsymbol{x}})$ when the optimal pricing policy is conducted. It is noteworthy that buyers know $\alpha_0$ and $\boldsymbol{\beta}_0$, which represent their valuation parameters.  Additionally, it is important to acknowledge that buyers cannot access $\hat{\alpha}$ and $\hat{\boldsymbol{\beta}}$, as they lack access to the data utilized in obtaining these estimates.  Consequently, the best values available to the buyers for estimating the price offered by the seller is $\alpha_0$ and $\boldsymbol{\beta}_0$, as $\hat{\alpha}$ and $\hat{\boldsymbol{\beta}}$ serve as estimates of these parameters.
Given the true covariate $\tilde{\boldsymbol{x}}_t^0$ and the pricing policy $p$, the buyer chooses the manipulated features $\tilde{\boldsymbol{x}}$ by minimizing the following total cost,
\begin{equation}\label{utility}
C(\tilde{\boldsymbol{x}}, \tilde{\boldsymbol{x}}_t^0)=p+\frac{1}{2}(\tilde{\boldsymbol{x}}-\tilde{\boldsymbol{x}}_t^0)^\top A (\tilde{\boldsymbol{x}}-\tilde{\boldsymbol{x}}_t^0),
\end{equation}
where
\begin{equation*}
  p =
    \begin{cases}
      \tilde{p} \sim \text{Unif}(0, B) & \text{if the uniform pricing policy is conducted},\\
      g(\alpha_0+\boldsymbol{\beta}_0^\top \tilde{\boldsymbol{x}}) & \text{if the optimal pricing policy is conducted}.
    \end{cases}       
\end{equation*}
%The objective of buyers is to minimize the total cost. 
When the uniform pricing policy is conducted in the exploration stage, the price is not related to the features, hence buyers have no incentive to manipulate features, and $\tilde{\boldsymbol{x}}_t=\tilde{\boldsymbol{x}}_t^0$. When the optimal pricing policy is conducted, the first-order condition of (\ref{utility}) yields
\begin{equation}\label{xt}
\tilde{\boldsymbol{x}}_t=\tilde{\boldsymbol{x}}_t^0- A^{-1}\boldsymbol{\beta}_0 g'(\alpha_0+\boldsymbol{\beta}_0^\top \tilde{\boldsymbol{x}}_t).
\end{equation}
Figure \ref{fig01} displays the schematic representation of the strategic dynamic pricing policy. 
\begin{remark}
Equation (\ref{xt}) is the first-order necessary condition of minimizing (\ref{utility}) when the optimal pricing policy is conducted. For simplicity, we consider the case where $g(\cdot)$ is convex in $\tilde{\boldsymbol{x}}$ and hence (\ref{xt}) is a unique minimizer of (\ref{utility}). When minimizing (\ref{utility}) is not a convex problem, $\tilde{\boldsymbol{x}}_t$ from (\ref{xt}) is not necessarily the global minimum, and multiple $\tilde{\boldsymbol{x}}_t$'s may satisfy (\ref{xt}). In practice, the buyers can try different $\tilde{\boldsymbol{x}}_t$'s which satisfy  (\ref{xt}) and determine an $\tilde{\boldsymbol{x}}_t$ such that $C(\tilde{\boldsymbol{x}}_t, \tilde{\boldsymbol{x}}_t^0)$ is the smallest.
\end{remark}

\subsection{Linear Regret for Non-strategic Pricing Policy}

While various dynamic pricing policies have been proposed \citep{Javanmard2019, Fan2022,luo2022,wang2022}, none of them considers the impact of strategic behaviors in the pricing problem. Since the true feature $\boldsymbol{x}_t^0$ is unobservable by the seller, the pricing policy $g(\hat{\boldsymbol{\theta}}^\top \boldsymbol{x}_t^0)$ used in previous literature is not applicable. In this case, \textbf{the non-strategic pricing policy} would set the price as $p_t=g(\hat{\boldsymbol{\theta}}^\top \boldsymbol{x}_t)$, which uses the manipulated feature for pricing. 

In this section, we prove that the non-strategic pricing policy incurs a linear regret lower bound of $\Omega(T)$ in the considered pricing problem. %However, we establish a regret lower bound of $\Omega(T)$ for the non-strategic pricing policy with strategic buyers under certain conditions, as stated in Theorem \ref{theory3}. 
%\subsubsection{Technical Assumptions}
We first present some standard assumptions in the dynamic pricing literature. Under these assumptions, the non-strategic pricing policy incurs a linear regret. In later sections, we will show that our proposed strategic pricing policy achieve a sub-linear regret under the same assumptions. 

\begin{assumption}\label{a0}
$\|\boldsymbol{x}_t^0\|_{2}\leq W_x, \|\boldsymbol{\theta}_0\|_1\leq W_\theta$ for some constants $W_x>0, W_\theta>0$.
\end{assumption}
Assumption \ref{a0} is standard in dynamic pricing literature \citep{Javanmard2019, Fan2022,zhao2023highdimensional}. By Assumption \ref{a0}, we know $\boldsymbol{\theta}_0\in \Theta= \{\boldsymbol{\theta}, \|\boldsymbol{\theta}\|_1\leq W_\theta\}$.
\begin{assumption}\label{a3}
The buyers' valuation $v_t\in (0, B)$ for a known constant $B>0$. 
\end{assumption}
Assumption \ref{a3} assumes a known upper bound for the buyers' valuations \citep{Fan2022, luo2022,Bu2022}, which is a mild condition in practical applications. With this assumption, the seller can set a price $p_t\in (0, B).$ \par

\begin{assumption}\label{a1}
The function $F(z)$ is strictly increasing, $F(z)$ and $1-F(z)$ are log-concave in $z$. For $z\in [-W, B]$, where $W=W_{\theta}W_x$, we assume $|f(z)|< M_f, |f'(z)|< M_{f'}$ and $|f''(z)|< M_{f''}$, for some constants $M_f>0, M_{f'}>0, M_{f''}>0$.
\end{assumption}
The assumption of log-concavity is commonly used in dynamic pricing literature \citep{Javanmard2017,Javanmard2019,Tang2020,Xu2021,wang2022}. By Assumptions \ref{a0} and \ref{a3}, we have $(p_t-\boldsymbol{\theta}_0^\top \boldsymbol{x}_t^0)\in [-W, B]$. Assumption \ref{a1} states that $f, f'$ and $f''$ are bounded on a finite interval $[-W, B]$, and is satisfied by some common probability distributions including normal, uniform, Laplace, exponential, and logistic distributions. 
\begin{assumption}\label{a2}
The second moment matrix $\Sigma =\mathbb{E}[\boldsymbol{x}_t^0\boldsymbol{x}_t^{0\top}]$ is positive definite. We denote the minimum eigenvalue and maximum eigenvalue of $\Sigma$ as $\lambda_{min}$ and $\lambda_{max}$, respectively.
\end{assumption}
Assumption \ref{a2} is a standard condition on the feature distribution, and 
holds for many common probability distributions, such as uniform, truncated normal, and in general truncated version of many more distributions \citep{Javanmard2019}.

The pricing policy operates in an episodic manner, allowing for the consideration of an unknown total time horizon $T$, see Figure \ref{episode}. Episodes are indexed by $k$ and time periods are indexed by $t$. The length of episode $k$ is denoted by $\ell_k$. Each episode is divided into two phases: the exploration phase of length $a_k$ and the exploitation phase of length $\ell_k-a_k$.
\begin{theorem}\label{theory3}
Let Assumptions \ref{assc}, \ref{a0}, \ref{a3}, \ref{a1} and \ref{a2} hold. Let $\hat{\boldsymbol{\theta}}_k$ be the estimate from (\ref{est}) in the $k$-th episode. At the time period $t$  during the exploitation phase in the $k$-th episode, using the non-strategic pricing policy $p_t=g(\hat{\boldsymbol{\theta}}_k^\top \boldsymbol{x}_t)$,  for the problem instance with a uniform distribution $F(\cdot)$ on $(-1/2, -1/2), \|\boldsymbol{\beta}_0\|_1=1, B=7/16$ and $\|\tilde{\boldsymbol{x}}_t^0\|_2\leq 1/4$, there exist constants $\epsilon>0, C>0$, such that when $T>\frac{C}{(1-\epsilon)^4}$, we have
$Regret_{\pi}(T)>\epsilon T/4.$
\end{theorem}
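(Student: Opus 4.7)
The plan is to exhibit a concrete instance for which even a perfect estimator of $\boldsymbol{\theta}_0$ leaves a constant gap between the non-strategic price and the oracle price, forcing a $\Theta(1)$ per-round regret throughout exploitation. Take $z_t \sim \text{Unif}(-a,a)$ for some $a>0$ and fix any $\boldsymbol{\theta}_0 = (\boldsymbol{\beta}_0^\top,\alpha_0)^\top$ with $\boldsymbol{\beta}_0 \neq \zero$. A direct computation yields $\phi(v)=2v-a$ and $g(v)=(v+a)/2$, so $g'\equiv 1/2$ is constant on the relevant interval. Plugging this into the buyer best-response fixed point (\ref{xt}) collapses it to the closed form
$$\tilde{\boldsymbol{x}}_t = \tilde{\boldsymbol{x}}_t^0 - \tfrac{1}{2} A^{-1}\boldsymbol{\beta}_0,$$
i.e.\ every feature is shifted by the same deterministic vector.

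With $g$ affine, I compare the oracle $p_t^* = g(\boldsymbol{\theta}_0^\top\boldsymbol{x}_t^0)$ with the non-strategic price $p_t = g(\hat{\boldsymbol{\theta}}_k^\top\boldsymbol{x}_t)$ and obtain the exact decomposition
$$p_t^* - p_t = \tfrac{1}{2}(\boldsymbol{\theta}_0 - \hat{\boldsymbol{\theta}}_k)^\top\boldsymbol{x}_t^0 + \tfrac{1}{4}\hat{\boldsymbol{\beta}}_k^\top A^{-1}\boldsymbol{\beta}_0.$$
Because exploration employs the uniform pricing policy, which (as already noted in the paper) gives buyers no incentive to manipulate, the MLE $\hat{\boldsymbol{\theta}}_k$ from (\ref{est}) is built from samples of the true features. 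A standard Bernstein-type concentration bound for the log-likelihood gradient together with Assumptions \ref{a0}--\ref{a2} then yields $\|\hat{\boldsymbol{\theta}}_k - \boldsymbol{\theta}_0\| \leq \delta_k$ with high probability, where $\delta_k$ is small provided the exploration length in episode $k$ is sufficiently large. For large enough $k$ the first term in the decomposition is $O(\delta_k)$ while the second term converges to the positive constant $c_0 := \tfrac{1}{4}\boldsymbol{\beta}_0^\top A^{-1}\boldsymbol{\beta}_0 \geq \tfrac{1}{4\lambda_{Amax}}\|\boldsymbol{\beta}_0\|_2^2 > 0$.

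Next I convert this price gap into a revenue gap. Because $F$ is uniform, the expected revenue $R(p) = p\cdot(a - p + \boldsymbol{\theta}_0^\top\boldsymbol{x}_t^0)/(2a)$ is a quadratic maximized at $p_t^*$ with $R''(p_t^*) = -1/a$, giving the exact identity
$$R(p_t^*) - R(p_t) = \frac{(p_t - p_t^*)^2}{2a}.$$
Combining this with $|p_t^* - p_t| \geq c_0 - O(\delta_k)$ yields a per-round conditional regret bounded below by $(c_0 - O(\delta_k))^2/(2a)$. Setting $\epsilon$ proportional to $c_0^2/(2a)$ and summing over the $\Theta(T)$ exploitation rounds across episodes produces the claimed $\text{Regret}_\pi(T) > \epsilon T/4$ as soon as $T$ exceeds the threshold $C/(1-\epsilon)^4$ needed for the high-probability event and for $\delta_k$ to be small relative to $c_0$.

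The main obstacle will be accounting precisely for the concentration of $\hat{\boldsymbol{\theta}}_k$ so that the estimation-error term in the decomposition is strictly dominated by the manipulation-bias term $c_0$, and doing so uniformly over the exploitation rounds in each episode; this is what dictates the explicit $(1-\epsilon)^4$ scaling in the threshold on $T$. One subtlety worth flagging is that $g$ and $g'$ must be understood on the admissible interval $[-W,B]$, so Assumptions \ref{a0} and \ref{a3} are invoked to keep the uniform-distribution computations valid; once the argument is localized to this interval the rest is routine algebra.
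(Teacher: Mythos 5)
Your proposal is correct and follows essentially the same route as the paper's proof: choose uniform noise so that $g$ is affine with $g'\equiv 1/2$, reduce the buyer's best response to a deterministic shift $\tilde{\boldsymbol{x}}_t=\tilde{\boldsymbol{x}}_t^0-\tfrac12 A^{-1}\boldsymbol{\beta}_0$, use the exact quadratic revenue identity to convert the price gap into regret, and decompose that gap into a vanishing estimation-error term plus the persistent manipulation bias $\tfrac14\hat{\boldsymbol{\beta}}_k^\top A^{-1}\boldsymbol{\beta}_0$. The only differences are cosmetic (the paper fixes concrete constants $a=1/2$, $A=I$, $B=7/16$ and controls the estimation term in expectation via Lemma \ref{lemma11} rather than through a high-probability bound, and it also lower-bounds the exploration-phase regret explicitly).
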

Theorem \ref{theory3} reveals that under a uniform distribution $F(\cdot)$, the non-strategic pricing policy with strategic buyers has a linear regret lower bound of $\Omega(T)$, indicating that it is not better than a random pricing policy. %which means the personalize pricing in practice works bad if the seller ignores the strategic behaviors from the buyers' side.  
This result underscores the necessity of a new strategic pricing policy in the presence of strategic buyers. Motivated from this, in Sections \ref{sec3} and \ref{sec4}, we develop new strategic dynamic pricing policies to account for the strategic behaviors. %As shown in Figure \ref{fig1} and Figure \ref{fig7}, the non-strategic pricing policy has a much larger regret than our strategic pricing policies. 

\section{Strategic Pricing with Known Marginal Cost}\label{sec3}
In this section, we introduce a novel dynamic pricing policy when the marginal cost $A$ is known in advance. In Section \ref{sec4}, we will relax this assumption and consider the case of unknown $A$. The detail of the strategic pricing policy with known marginal cost is shown in Algorithm \ref{alg1}.\par
\begin{algorithm}[!ht]
\caption{Strategic Dynamic  Pricing Policy with Known Marginal Cost}
\label{alg1}
\begin{algorithmic}[1]
\STATE  \textbf{Input}: $B, \ell_0, C_a$
%\STATE $\hat{\boldsymbol{\theta}}_1 \gets \boldsymbol{0}$
\FOR{each episode $k=1,2,...$}
\STATE Set the length of $k$-th episode as $\ell_k= 2^{k-1}\ell_0$, and $a_k=\lfloor \sqrt{C_a\ell_k}\rfloor$. 
\STATE \textbf{Exploration Phase (Uniform Pricing Policy)}:
\FOR{$t\in I_k:=\{ \ell_k,...,\ell_k+a_k-1\}$}
\STATE The buyer reveals $\tilde{\boldsymbol{x}}_t=\tilde{\boldsymbol{x}}_t^0$. Denote $\boldsymbol{x}_t=(\tilde{\boldsymbol{x}}_t^\top, 1)^\top$.
\STATE The seller offers a price $p_t$ randomly chosen from Unif$(0,B)$. 
\STATE Observe a binary response $y_t$.
\ENDFOR
\STATE Calculate the estimate of $\boldsymbol{\theta}_0$ by 
\begin{align}\label{est} 
\hat{\boldsymbol{\theta}}_{k}=\mathop{\arg\min}_{\boldsymbol{\theta}\in\Theta} L_k(\boldsymbol{\theta}),
\end{align}
where
$ L_k(\boldsymbol{\theta})=\frac{1}{a_k}\sum_{ t\in I_k}\big\{\mathbb{I}(y_t=1)\log [1-F(p_t-\boldsymbol{\theta}^\top \boldsymbol{x}_t)]+\mathbb{I}(y_t=0)\log F(p_t-\boldsymbol{\theta}^\top \boldsymbol{x}_t)\big\}.$
\STATE \textbf{Exploitation Phase (Optimal Pricing Policy)}:
\FOR{$t\in I_k':=\{ \ell_k+a_k,...,\ell_{k+1}-1\}$}
\STATE The buyer reveals $\tilde{\boldsymbol{x}}_t$ as shown in Equation (\ref{xt}). Denote $\boldsymbol{x}_t=(\tilde{\boldsymbol{x}}_t^\top, 1)^\top$.
\STATE The seller offers the price $p_t=g(\hat{\boldsymbol{\theta}}_k^\top \boldsymbol{x}_t+\hat{\boldsymbol{\beta}}_k^\top A^{-1}\hat{\boldsymbol{\beta}}_k g'(\hat{\boldsymbol{\theta}}_k^\top \boldsymbol{x}_t))$.
\ENDFOR
\ENDFOR
\end{algorithmic}
\end{algorithm}
Lacking knowledge of the horizon length $T$, we employ the doubling trick \citep{lattimore} to partition the horizon into episodes. Each episode comprises an exploration phase followed by an exploitation phase, as illustrated in Figure \ref{episode}.
     \begin{figure}[h!]
    \centering
    \begin{tabular}{cc}  
        \includegraphics[scale = 0.6]{./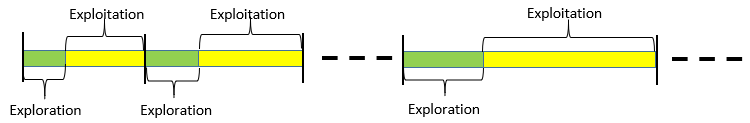}
    \end{tabular}
     \caption{Schematic representation of the segmentation of episodes.}
         \label{episode}
\end{figure}
Algorithm \ref{alg1} requires three input parameters. The first input is the upper bound of market value $B$, which is assumed to be known in Assumption \ref{a3}. This is consistent with the approach used in previous works such as \cite{Fan2022} and \cite{luo2022}. Here,  we only need an upper bound on the price and a rough upper bound $B$ is sufficient. In practice, we can determine the price upper bound $B$ using surveys\footnote{\href{https://online.hbs.edu/blog/post/willingness-to-pay}{https://online.hbs.edu/blog/post/willingness-to-pay}}. By surveying diverse customers and identifying their willingness to pay, we can estimate $B$ as the highest reported value. The second input is the minimum episode length $\ell_0$, which is also aligned with the approach used in \cite{Fan2022} and \cite{luo2022}. The third input is denoted as $C_a$ and is used to determine the length of the exploration phase. In our algorithm, the length of the exploration phase is set to $\lfloor \sqrt{C_a\ell_k} \rfloor$, which differs from previous works such as \cite{Fan2022} and \cite{luo2022} that consider the case of unknown noise distribution. In \cite{Fan2022}, the exploration length is $\lceil (\ell_k d)^{\frac{2m+1}{4m-1}}\rceil$ with $m\geq 2$, while in \cite{luo2022}, it is $\lceil c_1\ell_k^{c_2} \rceil$ for some constants $c_1 > 0$ and $c_2 = 2/3$ or $3/4$. Our approach results in a shorter exploration phase length compared to \cite{Fan2022} and \cite{luo2022} due to the assumption of known noise distribution. This shorter exploration phase leads to a reduced regret, making our algorithm more efficient in the strategic setting. \par
Algorithm \ref{alg1} can been seen as a variant of the explore-then-commit algorithm. During the exploration phase, the seller implements the uniform pricing policy, and the buyers do not manipulate features and reveal $\tilde{\boldsymbol{x}}_t=\tilde{\boldsymbol{x}}_t^0$. Note that by design, prices posted in the exploration phase are independent from the noise $z_t$. The seller collects the true features to obtain an accurate estimate of $\boldsymbol{\theta}_0$ by the maximum likelihood estimation (MLE). During the exploitation phase, the estimated model parameters are fixed, and the seller commits to the optimal pricing policy $g(\cdot)$ by using the parameters obtained in the exploration phase. It is worth mentioning that the seller only discloses the function $g(\cdot)$ but keeps the assessment rule $\hat{\boldsymbol{\theta}}_k$ undisclosed \citep{Bechavod2022}.  The estimator $\hat{\boldsymbol{\theta}}_k$ is derived exclusively from data collected during the exploration phase of the $k$-th episode not all the past exploration phases. Although using data from all exploration phases 1 to $k$-th episodes might enhance finite-sample performance, it does not alter the regret rate, as $\sum_{j=1}^ka_j\leq \sqrt{2}a_k$. Moreover, when the demand parameters are not stationary, it is more practical to estimate $\hat{\boldsymbol{\theta}}_k$ solely based on the data from the exploration phase of the $k$-th episode. Focusing on recent exploration data enables adaptation to parameter changes.
%{\color{red}
%\begin{remark}
% The random pricing policy serves as the exploration component in the bandit algorithm. In practice, we can restrict the price range to a smaller interval. Additionally, we can choose prices from a set of fixed points.
%\end{remark}}

\begin{remark}
The two-phase exploration-exploitation mechanism in Algorithm \ref{alg1} is commonly employed in the dynamic pricing literature. Our uniform pricing policy in the exploration stage aligns with \cite{Golrezaei2019,luo2022,Fan2022}, where prices during the exploration phase are also set from the uniform distribution to facilitate parameter estimation. It is important to note that in the exploration stage, prices are not necessarily to be entirely random. For instance, in \cite{Broder2012} and \cite{Rustamdjan}, fixed price sequences are offered in the exploration phase to avoid the uninformative price. Moreover, adaptive model-based exploration is also feasible by utilizing some prior information in the Thompson Sampling pricing algorithm \citep{jain2024effective}. For simplicity, we focus on the uniform exploration in this paper and leave a complete investigation of such adaptive exploration for future work. 
\end{remark}

Different from existing dynamic pricing works, an important distinction of our policy is the consideration of the strategic behavior during the exploitation phase, which leads to a significantly improved regret bound. Our proposed policy is both practical and reasonable. When introducing new products to the market, companies often conduct price experiments to assess the impact of varying prices, particularly when historical data is lacking to offer valuable insights\footnote{\href{https://www.corrily.com/blog/price-experimentation-101}{https://www.corrily.com/blog/price-experimentation-101}}. This process aligns with the exploration phase. Following this experimentation, an estimated optimal policy will be implemented for exploitation purposes.

Furthermore, in the supplementary materials, we introduce an extension of our policy known as the strategic $\epsilon$-greedy pricing policy. This approach integrates both exploration and exploitation phases, where exploration takes place with probability $\epsilon$ and exploitation with probability $1-\epsilon$ at each time. We also include additional experiments to assess the performance of this extended policy.

\section{Strategic Pricing with Unknown Marginal Cost}\label{sec4}
In Algorithm \ref{alg1}, the seller has knowledge of the marginal cost $A$. In this section, we extend Algorithm \ref{alg1} to the scenario where the marginal cost $A$ is unknown. We first introduce how to match the true features and the manipulated features for the repeated buyers. Then we present the strategy to handle the unknown marginal cost. Finally, we develop the  strategic pricing policy with unknown marginal cost.\par
\subsection{Matching of True Features and Manipulated Features}\label{4.1}
We assume that some buyers return to make repeated purchases, which is common in real-world scenarios such as the mentioned Home Depot and loan application examples. The seller keeps track of an unique identification number (ID, denoted by $e$) assigned to each buyer, such as the account email in the Home Depot example or the social security number in the loan example. By recording the ID of each buyer, the seller can distinguish between different buyers and keep track of their reported features. To develop a strategic dynamic pricing algorithm in the absence of the known marginal cost, we introduce a concept of the repeat buyer rate $\tau$, which is also used in previous literature \citep{Funk2009, Rajat2023}.
\begin{definition}\label{def}
The repeat buyer rate $\tau$ is the proportion of buyers who have made purchases during both the exploration and the exploitation phases.
%The repeat buyer rate $\tau$ is the proportion of buyers who have made at least two purchases during a certain time period.
\end{definition}
The presence of a repeat buyer rate $\tau>0$ allows the seller to acquire both the original features and the manipulated features of the same buyer. During the exploration phase, the seller collects the original feature $\tilde{\boldsymbol{x}}_{t}^0$ along with the corresponding unique ID $e_{t}$ for each buyer. These pairs $(e_{t}, \tilde{\boldsymbol{x}}_{t}^0)$ are recorded by the seller. In the exploitation phase, the seller obtains the manipulated feature $\tilde{\boldsymbol{x}}_{t}$ and the corresponding ID $e_{t}$, and again records the pairs $(e_{t}, \tilde{\boldsymbol{x}}_{t})$. By matching the unique ID $e_{t}$ obtained from both phases, the seller can establish the feature pair $(\tilde{\boldsymbol{x}}_{t}^0, \tilde{\boldsymbol{x}}_{t})$ for the same buyer. This matching process allows the seller to link the original and manipulated features for individual buyers.
% The subscripts $t'$ and $t$ denote that $\tilde{\boldsymbol{x}}_{t'}^0$ and $\tilde{\boldsymbol{x}}_{t}$ are recorded at different time periods. 
\subsection{Strategy for Unknown Marginal Cost}\label{4.2}
In this section, we introduce the strategy to handle the unknown marginal cost $A$. Let $\hat{\boldsymbol{\theta}}$ be an estimate of $\boldsymbol{\theta}_0$. For the matched pair $(\tilde{\boldsymbol{x}}_{t}^0, \tilde{\boldsymbol{x}}_{t})$,  using Equation (\ref{xt}), we can obtain
\begin{equation}\label{linear}
 \begin{aligned}
\tilde{\boldsymbol{x}}_t-\tilde{\boldsymbol{x}}_{t}^0&=- A^{-1}\boldsymbol{\beta}_0 g'(\boldsymbol{\theta}_0^\top \boldsymbol{x}_t)=- A^{-1}\boldsymbol{\beta}_0 g'(\hat{\boldsymbol{\theta}}^\top \boldsymbol{x}_t)+A^{-1}\boldsymbol{\beta}_0 [g'(\hat{\boldsymbol{\theta}}^\top \boldsymbol{x}_t)-g'(\boldsymbol{\theta}_0^\top \boldsymbol{x}_t)].
\end{aligned}   
\end{equation}
To simplify Equation (\ref{linear}), we introduce the following new variables,
\begin{equation}\label{gamma1}
\begin{aligned}
 &\boldsymbol{\delta}_t:=\tilde{\boldsymbol{x}}_t-\tilde{\boldsymbol{x}}_{t}^0\in\mathbb{R}^d,
&&\boldsymbol{\epsilon}_t:=A^{-1}\boldsymbol{\beta}_0[g'(\hat{\boldsymbol{\theta}}^\top \boldsymbol{x}_t)-g'(\boldsymbol{\theta}_0^\top \boldsymbol{x}_t)]\in\mathbb{R}^d,\\
&\boldsymbol{\gamma}:= - A^{-1}\boldsymbol{\beta}_0\in\mathbb{R}^d,
& &u_t:=g'(\hat{\boldsymbol{\theta}}^\top \boldsymbol{x}_t)\in\mathbb{R}.
\end{aligned}    
\end{equation}
By introducing these new variables, Equation (\ref{linear}) can be rewritten as the following $d$-dimensional equation,
\begin{equation}\label{liear2}
    \boldsymbol{\delta}_t=\boldsymbol{\gamma} u_t+\boldsymbol{\epsilon}_t.
\end{equation}
In Equation (\ref{liear2}), $u_t$ is known by the seller, and $\boldsymbol{\delta}_t$ can be obtained using the matched pairs $(\tilde{\boldsymbol{x}}_{t}^0, \tilde{\boldsymbol{x}}_{t})$ recorded by the seller.
Let $\boldsymbol{\gamma}_j$ and $\boldsymbol{\epsilon}_{jt}$ be the $j$-th ($j\in [d]$) component of $\boldsymbol{\gamma}$ and $\boldsymbol{\epsilon}_{t}$, respectively.
The $j$-th component equation of (\ref{liear2}) can be written as
$\boldsymbol{\delta}_{jt}=\boldsymbol{\gamma}_j u_t+\boldsymbol{\epsilon}_{jt}.$
Assume that we obtain $n$ repeated samples $\{(\boldsymbol{\delta}_{1}, u_1),...,(\boldsymbol{\delta}_{n}, u_n)\}$, and define
$ \boldsymbol{\Delta}_j:=(\boldsymbol{\delta}_{j1},...,\boldsymbol{\delta}_{jn})^\top, 
 \boldsymbol{u}:=(u_1,...,u_n)^\top. $
 By the least square method, we obtain the estimation of $\boldsymbol{\gamma}_j$ as
 \begin{equation}\label{ols}
\hat{\boldsymbol{\gamma}}_j=\frac{\boldsymbol{u}^{\top}\boldsymbol{\Delta}_j}{\boldsymbol{u}^\top \boldsymbol{u}}.     
 \end{equation}
 This $\hat{\boldsymbol{\gamma}}_j$ can be used in our pricing policy to handle the case of unknown marginal cost $A$. Note that if we directly estimate the unknown $A$, we would need to estimate a total of $(d^2+d)/2$ elements ($A$ is a $d \times d$ symmetric matrix). However, with our strategy, we can reduce the number of elements to be estimated to $d$ by using Equation (\ref{ols}). This significantly reduces the complexity of the estimation process and makes it more computationally feasible.
 \subsection{Pricing Policy with Unknown Marginal Cost}\label{4.3}
% In this section, we present a strategic pricing policy when the seller has no access to the marginal cost $A$ by leveraging the results of Section \ref{4.1} and \ref{4.2}. We provide the details of the strategic dynamic pricing policy with the unknown marginal cost in Algorithm \ref{alg2}, which extends Algorithm \ref{alg1} to incorporate the handling of the unknown $A$. By following this algorithm, the seller can effectively determine optimal pricing decisions even without knowing the exact value of $A$.\par
By leveraging the results of Section \ref{4.1} and Section \ref{4.2}, we are ready to introduce the details of the strategic dynamic pricing policy with unknown marginal cost in Algorithm \ref{alg2}. 
\begin{algorithm}[!ht]
\caption{Strategic Dynamic Pricing with Unknown Marginal Cost}
\label{alg2}
\begin{algorithmic}[1]
\STATE \textbf{Input}: $B, \ell_0, C_a, \mathcal{E}_1=\emptyset, \mathcal{E}_2=\emptyset, \mathcal{M}=\emptyset$
\FOR{each episode $k=1,2,...$}
\STATE Set the length of $k$-th episode as $\ell_k= 2^{k-1}\ell_0$, and $a_k=\lfloor\sqrt{C_a\ell_k}\rfloor$.
\STATE \textbf{Exploration Phase (Uniform Pricing Policy)}:
\FOR{$t\in I_k:=\{ \ell_k,...,\ell_k+a_k-1\}$}
\STATE The buyer reveals $\tilde{\boldsymbol{x}}_t=\tilde{\boldsymbol{x}}_t^0$.
\STATE The seller sets a price $p_t$ randomly from Unif$(0,B)$. 
\STATE Observe a binary response $y_t$.
\STATE $\mathcal{E}_1 \gets \mathcal{E}_1 \cup \{e_t: \tilde{\boldsymbol{x}}_t^0\}$.
\IF{$e_t$ is in $\mathcal{E}_2$}
\STATE $\mathcal{M}\gets \mathcal{M}\cup \{(\tilde{\boldsymbol{x}}_t^0, \tilde{\boldsymbol{x}}_t)\}$
\ENDIF
\ENDFOR
\STATE Calculate the estimate $\hat{\boldsymbol{\theta}}_k$ of $\boldsymbol{\theta}_0$ by (\ref{est}).
\STATE \textbf{Exploitation Phase (Optimal Pricing Policy)}:
\FOR{$t\in I_k':=\{ \ell_k+a_k,...,\ell_{k+1}-1\}$}
\STATE The buyer reveals $\tilde{\boldsymbol{x}}_t$ as shown in Equation (\ref{xt}).
\STATE $\mathcal{E}_2 \gets \mathcal{E}_2 \cup \{e_t: \tilde{\boldsymbol{x}}_t\}$.
\IF{$e_t$ is in $\mathcal{E}_1$}
\STATE $\mathcal{M}\gets \mathcal{M}\cup \{(\tilde{\boldsymbol{x}}_t^0, \tilde{\boldsymbol{x}}_t)\}$.
\ENDIF
\STATE The seller sets a price $p_t$ by Algorithm \ref{alg3}.
\ENDFOR
\ENDFOR
\end{algorithmic}
\end{algorithm}

Algorithm \ref{alg2} takes six input parameters. The first three inputs, $B, \ell_0,$ and $C_a$, are the same as those used in Algorithm \ref{alg1}. The set $\mathcal{E}_1$ is used to store the IDs and true features of buyers $(e_t, \tilde{\boldsymbol{x}}_t^0)$ collected during the exploration phase. The set $\mathcal{E}_2$ stores the IDs and manipulated features of buyers $(e_t, \tilde{\boldsymbol{x}}_t)$ collected during the exploitation phase. The set $\mathcal{M}$ stores the matched pairs $(\tilde{\boldsymbol{x}}_t^0, \tilde{\boldsymbol{x}}_t)$ by linking the unique ID $e_t$ from sets $\mathcal{E}_1$ and $\mathcal{E}_2$. These sets play a crucial role to link the true and manipulated features in the exploration and exploitation phases to enable the cost parameter estimation shown in Section \ref{4.2}.
%by facilitating the collection and matching of buyer features from both the exploration and exploitation phases. By linking the unique IDs, the algorithm is able to establish connections between the true and manipulated features to enable the cost parameter estimation in Section \ref{4.2}.\par
The core principle of Algorithm \ref{alg2} remains similar to that of Algorithm \ref{alg1}, as they both employ a two-phase mechanism consisting of an exploration phase and an exploitation phase. %In the exploration phase, the seller applies a uniform pricing policy and collects the true features for parameter estimation. During the exploitation phase, the seller implements the optimal pricing policy.

The distinguishing feature of Algorithm \ref{alg2} lies in its handling of the unknown marginal cost $A$. To address this challenge, the algorithm %incorporates the recording of the ID $e_t$, the original feature $\tilde{\boldsymbol{x}}_t^0$, and the manipulated feature $\tilde{\boldsymbol{x}}_t$ during both the exploration and exploitation phases. This allows for the 
uses the matched $\tilde{\boldsymbol{x}}_t^0$ and $\tilde{\boldsymbol{x}}_t$ to learn the pricing parameter $\boldsymbol{\gamma}$. It is important to highlight that the number of matched pairs $(\tilde{\boldsymbol{x}}_t^0, \tilde{\boldsymbol{x}}_t)$ is controlled by the repeat buyer rate $\tau$. The higher the $\tau$, the more different buyers make repeated purchases, resulting in a larger pool of matched pairs $(\tilde{\boldsymbol{x}}_t^0, \tilde{\boldsymbol{x}}_t)$. This increase in matched pairs leads to a more precise estimate of $\boldsymbol{\gamma}$, which in turn enhances the effectiveness of the pricing strategy. By leveraging the matched pairs and adjusting the repeat buyer rate, Algorithm \ref{alg2} can effectively learn the pricing parameter $\boldsymbol{\gamma}$ and implement the pricing policy in the absence of knowledge about the marginal cost $A$. 

\begin{algorithm}[!ht]
\caption{Calculation of $p_t$}
\label{alg3}
\begin{algorithmic}[1]
\STATE \textbf{Input}: $e_t, \mathcal{E}_1, \mathcal{E}_2, \mathcal{M}, \tilde{\boldsymbol{x}}_t, \hat{\boldsymbol{\theta}}_k$.
\IF{$e_t\in \mathcal{E}_1\cap\mathcal{E}_2$}
\STATE $p_t=g(\hat{\boldsymbol{\theta}}_k^\top \boldsymbol{x}_t^0)$, where $(\tilde{\boldsymbol{x}}_t^0, \tilde{\boldsymbol{x}}_t)\in \mathcal{M}$.
\ELSE 
\IF{  $\mathcal{M} \neq \emptyset$}
\STATE Obtain $\hat{\boldsymbol{\gamma}}=(\hat{{\boldsymbol{\gamma}}}_1,\cdots, \hat{{\boldsymbol{\gamma}}}_d)^\top$ by (\ref{ols}).
\STATE The seller offers the price $p_t=g(\hat{\boldsymbol{\theta}}_k^\top \boldsymbol{x}_t-\hat{\boldsymbol{\beta}}_k^\top \hat{\boldsymbol{\gamma}} g'(\hat{\boldsymbol{\theta}}_k^\top \boldsymbol{x}_t))$.
\ELSE 
\STATE The seller offers the price $p_t=g(\hat{\boldsymbol{\theta}}_k^\top \boldsymbol{x}_t)$.
\ENDIF
\ENDIF
\STATE \textbf{Output}: $p_t$.
\end{algorithmic}
\end{algorithm}
The detail of the calculation of the price $p_t$ during the exploitation phase is shown in Algorithm \ref{alg3}. If the original feature $\tilde{\boldsymbol{x}}_t^0$ is recorded in the seller's system, the price can be directly determined as $p_t = g(\hat{\boldsymbol{\theta}}_k^\top \boldsymbol{x}_t^0)$. On the other hand, if the original feature $\tilde{\boldsymbol{x}}_t^0$ is not recorded, the seller utilizes the estimated $\boldsymbol{\gamma}$ to determine the price. In the absence of an estimation of $\boldsymbol{\gamma}$, the price is set as $p_t = g(\hat{\boldsymbol{\theta}}_k^\top \boldsymbol{x}_t)$. Once the estimation of $\boldsymbol{\gamma}$ becomes available, the price is determined as $p_t = g(\hat{\boldsymbol{\theta}}_k^\top \boldsymbol{x}_t - \hat{\boldsymbol{\beta}}_k^\top \hat{\boldsymbol{\gamma}} g'(\hat{\boldsymbol{\theta}}_k^\top \boldsymbol{x}_t))$.

\section{Regret Analysis}\label{sec5}
%In this section, we present the regret analysis for the pricing policy in two parts. 
In this section, we analyze the regret of the proposed pricing policy when the marginal cost $A$ is known (Section \ref{reg1}) and unknown (Section \ref{reg2}).
\subsection{Regret Analysis under Known Marginal Cost}\label{reg1}
%In this section, we analyze the regret under the known marginal cost. 
We consider the strategic dynamic pricing policy in Algorithm \ref{alg1} with strategic buyers. %The regret analysis provides insights into the performance of the policy in terms of regret minimization.\par
We first introduce two important measures to characterize the properties related to the function $F(\cdot)$. We define the "steepness" of the function $F(\cdot)$ as
\begin{equation}\label{cup}
C_{up}=\mathop{\sup}_{\omega\in[-W, B]}\mathop{\max}\{\log' F(\omega), -\log' [1-F(\omega)]\}
\end{equation}
and the "flatness" of the function log$F(\cdot)$ as
\begin{equation}\label{cdown}
 C_{down}=\mathop{\inf}_{\omega\in[-W, B]}\mathop{\min}\{-\log''(1-F(\omega)), -\log(F''(\omega))\}.
\end{equation}
We then present a lemma that establishes an upper bound on the estimation error of $\boldsymbol{\theta}_0$ at the end of the exploration phase within each episode. 
\begin{lemma} \label{lemma11}
Suppose that Assumptions \ref{a0}, \ref{a3}, \ref{a1} and \ref{a2} hold. Let $\hat{\boldsymbol{\theta}}_k$ be the estimator from (\ref{est}), $a_k$ be the exploration length in the $k$-th episode. We have
$$\mathbb{E}\|\hat{\boldsymbol{\theta}}_k-\boldsymbol{\theta}_0\|_2^2\leq \frac{2(d+1)C_{up}^2}{C^2_{down}\lambda_{min}(a_k+1)},$$
where $C_{up}$ and $C_{down}$ are defined in (\ref{cup}) and (\ref{cdown}), respectively.
\end{lemma}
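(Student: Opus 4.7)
The plan is to combine strong convexity of the loss $L_k$ (which follows from the lower bound $C_{down}$) with the basic MLE optimality inequality, and then to control the gradient of $L_k$ at $\boldsymbol{\theta}_0$ in expectation using the uniform score bound $C_{up}$.

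First, I note that in the exploration phase the buyer does not manipulate, so $\tilde{\boldsymbol{x}}_t = \tilde{\boldsymbol{x}}_t^0$, and Assumptions \ref{a0} and \ref{a3} imply $p_t - \boldsymbol{\theta}^\top \boldsymbol{x}_t \in [-W,B]$ for every $\boldsymbol{\theta}\in\Theta$. Writing $\ell_t(\boldsymbol{\theta})$ for the per-sample contribution to $L_k$ (treated as a negative log-likelihood up to a sign convention) and differentiating, the definitions of $C_{down}$ and $C_{up}$ immediately give: (i) $\nabla^2 \ell_t(\boldsymbol{\theta}) \succeq C_{down}\,\boldsymbol{x}_t \boldsymbol{x}_t^\top$ uniformly in $\boldsymbol{\theta}\in\Theta$; and (ii) the score admits the decomposition $\nabla \ell_t(\boldsymbol{\theta}_0) = s_t\, \boldsymbol{x}_t$ where $|s_t| \leq C_{up}$ and $\mathbb{E}[s_t \mid \boldsymbol{x}_t, p_t] = 0$ by the standard score equation. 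Averaging over $I_k$ yields that $L_k$ is strongly convex in the $\hat{\boldsymbol{\Sigma}}_k$-norm with parameter $C_{down}$, where $\hat{\boldsymbol{\Sigma}}_k := \frac{1}{a_k}\sum_{t\in I_k} \boldsymbol{x}_t \boldsymbol{x}_t^\top$.

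Next, combining the optimality inequality $L_k(\hat{\boldsymbol{\theta}}_k) \leq L_k(\boldsymbol{\theta}_0)$ with a second-order Taylor expansion of $L_k$ around $\boldsymbol{\theta}_0$, followed by Cauchy--Schwarz (or a Young-type splitting), yields
$$\big\|\hat{\boldsymbol{\theta}}_k - \boldsymbol{\theta}_0\big\|_{\hat{\boldsymbol{\Sigma}}_k}^2 \,\leq\, \frac{4}{C_{down}^2}\,\big\|\nabla L_k(\boldsymbol{\theta}_0)\big\|_{\hat{\boldsymbol{\Sigma}}_k^{-1}}^2.$$
Conditioning on the features and prices, the $s_t$'s are independent and mean-zero with $s_t^2 \leq C_{up}^2$, so by the cyclic trace identity
$$\mathbb{E}\big\|\nabla L_k(\boldsymbol{\theta}_0)\big\|_{\hat{\boldsymbol{\Sigma}}_k^{-1}}^2 \,\leq\, \frac{C_{up}^2}{a_k^2}\,\mathbb{E}\,\tr\Big(\hat{\boldsymbol{\Sigma}}_k^{-1}\sum_{t\in I_k}\boldsymbol{x}_t \boldsymbol{x}_t^\top\Big) \,=\, \frac{(d+1)\,C_{up}^2}{a_k}.$$

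Finally, I pass from the $\hat{\boldsymbol{\Sigma}}_k$-norm to the $\ell_2$-norm via $\|v\|_2^2 \leq \|v\|_{\hat{\boldsymbol{\Sigma}}_k}^2 / \lambda_{\min}(\hat{\boldsymbol{\Sigma}}_k)$, and invoke Assumption \ref{a2} to replace the random minimum eigenvalue by the population quantity $\lambda_{\min}$. The $a_k+1$ factor in the stated bound most likely arises from carrying out the preceding argument against a slightly regularized covariance (for instance, by adding one prior data point to the sum defining $\hat{\boldsymbol{\Sigma}}_k$), which keeps the trace identity intact while making the eigenvalue lower bound deterministic. The remaining constant $2$ (rather than the $4$ above) follows from a tighter Young splitting in the strong-convexity step.

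The main obstacle is precisely this final eigenvalue-conversion step: since $\lambda_{\min}(\hat{\boldsymbol{\Sigma}}_k)$ is random and appears in the denominator, pulling it out of the expectation requires either a matrix-concentration argument (which typically yields only a high-probability guarantee) or the regularization trick indicated above. Everything else is a standard MLE-style computation, and the structure of the proof cleanly separates the score-magnitude bound (from $C_{up}$), the curvature bound (from $C_{down}$), and the covariance normalization (from $\lambda_{\min}$), which matches the factors appearing in the claimed bound.
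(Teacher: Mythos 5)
Your decomposition of the problem into a score-magnitude bound (via $C_{up}$), a curvature bound (via $C_{down}$), and a trace identity yielding the factor $d+1$ does match the ingredients of the paper's argument. But the architecture you build from them has a genuine gap, and you have correctly located it yourself: the final conversion from the $\hat{\boldsymbol{\Sigma}}_k$-norm to the $\ell_2$-norm. Your route first proves $\|\hat{\boldsymbol{\theta}}_k-\boldsymbol{\theta}_0\|_{\hat{\boldsymbol{\Sigma}}_k}^2 \lesssim C_{down}^{-2}\|\nabla L_k(\boldsymbol{\theta}_0)\|_{\hat{\boldsymbol{\Sigma}}_k^{-1}}^2$ and then needs to divide by $\lambda_{\min}(\hat{\boldsymbol{\Sigma}}_k)$ \emph{inside} the expectation. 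Assumption \ref{a2} only controls the population covariance; the empirical minimum eigenvalue is random, is correlated with the gradient term, can be arbitrarily close to zero (and is exactly zero whenever $a_k<d+1$, in which case $\hat{\boldsymbol{\Sigma}}_k^{-1}$ does not even exist), and $\mathbb{E}[\lambda_{\min}(\hat{\boldsymbol{\Sigma}}_k)^{-1}\cdot(\cdot)]$ need not be finite. Matrix concentration gives only a high-probability statement, and you would still have to handle the complementary event to recover the clean in-expectation bound claimed in the lemma. Neither of your two suggested repairs is carried out, so the proof as written does not close.

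The paper avoids this issue entirely by reorganizing where the two eigenvalue-type quantities enter. Strong convexity is established for the \emph{expected} log-likelihood $l^e(\boldsymbol{\theta})=\mathbb{E}[l_t(\boldsymbol{\theta})]$, which gives $l^e(\boldsymbol{\theta}_0)-l^e(\boldsymbol{\theta})\geq\tfrac{\lambda_{min}}{2}C_{down}\|\boldsymbol{\theta}-\boldsymbol{\theta}_0\|_2^2$ with the deterministic population eigenvalue $\lambda_{min}$ (Lemma \ref{lemma1}); no empirical eigenvalue ever appears in a denominator. The expected excess risk $\mathbb{E}[l^e(\boldsymbol{\theta}_0)-l^e(\hat{\boldsymbol{\theta}}_k)]$ is then bounded by $\tfrac{2(d+1)C_{up}^2}{(a_k+1)C_{down}}$ via a leave-one-out stability argument in the style of \cite{Koren2015} (Lemma \ref{est1}): one compares $\hat{\boldsymbol{\theta}}$ to the leave-one-out estimators $\tilde{\boldsymbol{\theta}}_i$, uses the dual-norm/trace computation (the analogue of your $\tr(\hat{\boldsymbol{\Sigma}}_k^{-1}\sum_t\boldsymbol{x}_t\boldsymbol{x}_t^\top)=a_k(d+1)$ step, carried out on the empirical Hessian restricted to its column space, so invertibility is never an issue), and exploits exchangeability of the samples. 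The factor $a_k+1$ comes from viewing the full-sample MLE as the $(n+1)$-st leave-one-out estimator of an augmented sample, not from regularizing the covariance as you conjectured. In short, the missing idea is the stability/exchangeability step that converts a deterministic per-sample bound into an in-expectation excess-risk bound, which is then fed into population-level (not empirical-level) strong convexity.
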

Lemma \ref{lemma11} shows that the expected squared estimation error of $\boldsymbol{\theta}_0$ decreases as the exploration length $a_k$ increases. As $a_k$ increases, the number of the samples used to estimate $\boldsymbol{\theta}_0$ becomes larger, leading to a better estimation accuracy for $d+1$ parameters. However, when $a_k$ is too large, the pricing policy will over-explore and incur a large regret. By using the optimal choice of $a_k$ in our Algorithm \ref{alg1}, we establish a tight upper bound on the regret for the proposed strategic dynamic pricing policy with known $A$.% in the following theorem.
\begin{theorem}\label{theory2}
Assume that the marginal cost $A$ is known by the seller. Under Assumptions  \ref{assc}, \ref{a0}, \ref{a3}, \ref{a1} and \ref{a2}, using the strategic pricing policy (Algorithm \ref{alg1}), there exist constants $C_1^*>0$ and  $C_2^*>0$ such that the total expected regret satisfies 
$$Regret_{\pi}(T)\leq \sqrt{C_1^*+\frac{C_2^*}{\lambda_{Amin}^2}}\sqrt{(d+1)T}.$$
\end{theorem}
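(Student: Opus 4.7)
The plan is to bound the total regret separately on the exploration and exploitation phases of each episode, invoke Lemma \ref{lemma11} to convert estimation error into per-step exploitation regret, and sum the per-episode bounds along the doubling schedule $\ell_k = 2^{k-1}\ell_0$ so that the geometric series collapses to $O(\sqrt{T})$. A key preliminary observation is that the algorithm's exploitation price is exactly the plug-in form of the oracle price once buyer manipulation is inverted: rearranging the buyer's first-order condition (\ref{xt}) gives $\tilde{\boldsymbol{x}}_t^0 = \tilde{\boldsymbol{x}}_t + A^{-1}\boldsymbol{\beta}_0\, g'(\boldsymbol{\theta}_0^\top \boldsymbol{x}_t)$, so the oracle price (\ref{price}) rewrites as $p_t^* = g\bigl(\boldsymbol{\theta}_0^\top \boldsymbol{x}_t + \boldsymbol{\beta}_0^\top A^{-1}\boldsymbol{\beta}_0\, g'(\boldsymbol{\theta}_0^\top \boldsymbol{x}_t)\bigr)$, which matches the formula in Algorithm \ref{alg1} with $\boldsymbol{\theta}_0$ replaced by $\hat{\boldsymbol{\theta}}_k$. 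Hence the exploitation regret is driven entirely by the estimation error $\hat{\boldsymbol{\theta}}_k - \boldsymbol{\theta}_0$.

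For a single exploitation step I would first carry out a second-order Taylor expansion of the expected-revenue function $R(p) = p\bigl(1 - F(p - \boldsymbol{\theta}_0^\top \boldsymbol{x}_t^0)\bigr)$ around its maximizer $p_t^*$. Log-concavity of $1-F$ together with the bounds on $f, f', f''$ from Assumption \ref{a1} yields the quadratic upper bound $R(p_t^*) - R(p_t) \leq C_R (p_t - p_t^*)^2$. Next I would bound $|p_t - p_t^*|$ using Lipschitz continuity of $g$ and $g'$ on the compact interval $[-W, B]$ (which follows from Assumptions \ref{a0}, \ref{a3}, \ref{a1}), splitting the argument difference into a direct piece $(\hat{\boldsymbol{\theta}}_k - \boldsymbol{\theta}_0)^\top \boldsymbol{x}_t$ and a manipulation piece $\hat{\boldsymbol{\beta}}_k^\top A^{-1}\hat{\boldsymbol{\beta}}_k\, g'(\hat{\boldsymbol{\theta}}_k^\top \boldsymbol{x}_t) - \boldsymbol{\beta}_0^\top A^{-1}\boldsymbol{\beta}_0\, g'(\boldsymbol{\theta}_0^\top \boldsymbol{x}_t)$. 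The first contributes a constant multiple of $\|\hat{\boldsymbol{\theta}}_k - \boldsymbol{\theta}_0\|_2^2$; for the second, the identity $\boldsymbol{a}^\top M\boldsymbol{a} - \boldsymbol{b}^\top M\boldsymbol{b} = (\boldsymbol{a}+\boldsymbol{b})^\top M(\boldsymbol{a}-\boldsymbol{b})$ together with $\|A^{-1}\| \leq 1/\lambda_{Amin}$ and Lipschitzness of $g'$ yields a multiple of $(1/\lambda_{Amin}^2)\|\hat{\boldsymbol{\theta}}_k - \boldsymbol{\theta}_0\|_2^2$. Taking expectations and plugging in Lemma \ref{lemma11} bounds the per-step exploitation regret by $(C_1 + C_2/\lambda_{Amin}^2)\cdot \frac{2(d+1)C_{up}^2}{C_{down}^2\,\lambda_{min}\,(a_k+1)}$.

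To assemble the total regret, the per-episode exploration regret is at most $a_k B$ by Assumption \ref{a3}, and the per-episode exploitation regret is at most $\ell_k (C_1 + C_2/\lambda_{Amin}^2)\cdot \frac{2(d+1)C_{up}^2}{C_{down}^2\,\lambda_{min}\,a_k}$. With $a_k = \lfloor\sqrt{C_a \ell_k}\rfloor$, each episode contributes $O(\sqrt{\ell_k})\cdot \bigl(B\sqrt{C_a} + (d+1)(C_1 + C_2/\lambda_{Amin}^2)/\sqrt{C_a}\bigr)$. Summing over the at most $\log_2(T/\ell_0) + 1$ episodes and using $\sum_{k\geq 1}\sqrt{2^{k-1}\ell_0} = O(\sqrt{T})$ gives a total regret of the form $O(\sqrt{T})\cdot \bigl(B\sqrt{C_a} + (d+1)(C_1 + C_2/\lambda_{Amin}^2)/\sqrt{C_a}\bigr)$. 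Applying AM--GM ($\alpha + \beta \geq 2\sqrt{\alpha\beta}$), or equivalently tuning $C_a$ to balance the two summands, consolidates the expression into $\mathrm{Regret}_\pi(T) \leq \sqrt{C_1^* + C_2^*/\lambda_{Amin}^2}\,\sqrt{(d+1)T}$ with constants $C_1^*, C_2^*$ depending on $B, C_a, C_{up}, C_{down}, \lambda_{min}$ and the bounds on $f, f', f''$.

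The main obstacle I anticipate is the Lipschitz bookkeeping that arranges $1/\lambda_{Amin}^2$ to enter \emph{only} through the manipulation-term part of $|p_t - p_t^*|$ rather than the direct part. This clean split is what allows the final AM--GM step to place $1/\lambda_{Amin}^2$ inside the outer square root together with the $(d+1)T$ factor, producing the exact dependence stated in the theorem rather than the looser $(C_1 + C_2/\lambda_{Amin}^2)\sqrt{(d+1)T}$ bound. Everything else---the Taylor expansion of $R(p)$, the estimation-error bound from Lemma \ref{lemma11}, and the geometric sum over episodes---is routine once this bookkeeping is done carefully.
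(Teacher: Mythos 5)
Your proposal is correct and follows essentially the same route as the paper's proof: the plug-in identity obtained by inverting the buyer's first-order condition, a second-order Taylor expansion of the expected revenue giving a quadratic bound in $(p_t-p_t^*)$, the Lipschitz decomposition of the price error into a direct term and a manipulation term (the latter carrying the $1/\lambda_{Amin}^2$ factor via Lemmas \ref{t1}--\ref{t2}), Lemma \ref{lemma11} for the estimation error, and the balancing choice $a_k=\lfloor\sqrt{C_a\ell_k}\rfloor$ followed by the geometric sum over episodes. The paper implements exactly this bookkeeping (its terms $J_1$ and $J_2$ in the exploitation-phase bound), so no further comparison is needed.
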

The constants $C_1^*$ and $C_2^*$ in the regret bound only depend on some absolute constants derived from the assumptions. To read the regret bound, we break it into three elements.
First, the regret bound is influenced by the minimum eigenvalue $\lambda_{Amin}$ of the marginal cost $A$, serving as an indicator of the manipulation capability. As expressed in (\ref{xt}), the extent of deviation between the manipulated feature $\tilde{\boldsymbol{x}}_t$ and the original feature $\tilde{\boldsymbol{x}}_t^0$ is associated with the marginal cost $A$. When the minimum eigenvalue $\lambda_{Amin}$ decreases, the deviation between $\boldsymbol{x}_t$ and $\boldsymbol{x}_t^0$ increases, making the pricing problem more challenging and resulting in a higher regret.  Second,  the regret bound depends on the dimension of the features at the rate $\sqrt{d}$. A larger feature dimension makes the estimation of the parameters more difficult, leading to a larger regret. Third, the regret bound depends on the time length at the rate of $\sqrt{T}$. In comparison, Theorem \ref{theory3} demonstrates that the non-strategic pricing policy has a regret bound of at least $\Omega(T)$. Consequently, our proposed strategic dynamic pricing policy, which accounts for the strategic behavior of buyers, outperforms the non-strategic pricing policy in terms of minimizing regret.

\begin{remark}
In traditional bandit problems, the explore-then-commit algorithm achieves an upper regret bound of $O(T^{2/3})$ \citep{lattimore}. However, in pricing problems, the explore-then-commit algorithm yields an upper regret bound of $O(\sqrt{T})$, attributed to the fact that $p_t^* \in \arg\max_p r_t(p)$ and thus $r'_t(p_t^*)=0$, where $r_t(p)=p[1-F(p-\boldsymbol{\theta}_0^\top \boldsymbol{x}_t^0)]$. This special structure does not typically hold in traditional bandit problems. For a detailed discussion on the upper regret bound, please refer to the supplementary materials.
\end{remark}
\subsection{Regret Analysis under Unknown Marginal Cost}\label{reg2}
In this section, we analyze the regret of the strategic dynamic pricing policy with the unknown marginal cost. We first provide an upper bound on the estimation error of $\boldsymbol{\gamma}=-A^{-1}\boldsymbol{\beta}_0$.
\begin{lemma}\label{lemmaunA}
Suppose that Assumptions  \ref{assc}, \ref{a0}, \ref{a3}, \ref{a1} and \ref{a2} hold, and the latest sample used in (\ref{ols}) is obtained in the $k$-th episode. Let $\ell_k$ be the total length of the $k$-th episode,  $\tau$ be the repeat buyer rate defined in Definition \ref{def}. We denote $\hat{\boldsymbol{\gamma}}=(\hat{\boldsymbol{\gamma}}_1,...,\hat{\boldsymbol{\gamma}}_d)^\top$ as the estimate of $\boldsymbol{\gamma}$, where $\hat{\boldsymbol{\gamma}}_j\ (j\in [d])$ is estimated from (\ref{ols}). There exists constant $C_{\gamma}^*>0$ such that for $k>1$
\begin{align*}
\mathbb{E}\|\hat{\boldsymbol{\gamma}}-\boldsymbol{\gamma}\|_2^2<\frac{C_{\gamma}^*(d+1)}{\tau \sqrt{\ell_k}}.
\end{align*}
\end{lemma}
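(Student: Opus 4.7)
The plan is to reduce the bound on $\|\hat{\boldsymbol{\gamma}} - \boldsymbol{\gamma}\|_2^2$ to the bound on $\|\hat{\boldsymbol{\theta}}_k - \boldsymbol{\theta}_0\|_2^2$ already established in Lemma \ref{lemma11}, by exploiting the closed form of the OLS estimator together with a Lipschitz-type control on $g'$. Writing the residual equation coordinatewise as $\boldsymbol{\delta}_{jt} = \boldsymbol{\gamma}_j u_t + \boldsymbol{\epsilon}_{jt}$, the OLS expression in (\ref{ols}) immediately gives $\hat{\boldsymbol{\gamma}}_j - \boldsymbol{\gamma}_j = (\sum_t u_t \boldsymbol{\epsilon}_{jt})/(\sum_t u_t^2)$. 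Applying Cauchy--Schwarz to the numerator and summing over $j \in [d]$ yields the clean inequality $\|\hat{\boldsymbol{\gamma}} - \boldsymbol{\gamma}\|_2^2 \leq (\sum_t \|\boldsymbol{\epsilon}_t\|_2^2)/(\sum_t u_t^2)$.

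Next, I would bound the numerator and denominator separately. For the numerator, the definition of $\boldsymbol{\epsilon}_t$ in (\ref{gamma1}) together with the Lipschitz continuity of $g'$ on the compact interval $[-W,B]$ (which follows from the smoothness of $F$ in Assumption \ref{a1}) produces $\|\boldsymbol{\epsilon}_t\|_2 \leq C_1 \|\hat{\boldsymbol{\theta}}_k - \boldsymbol{\theta}_0\|_2$, where $C_1$ depends only on $W_x$, $W_\theta$, $\lambda_{Amin}$, and the derivative bounds on $g$. Hence $\sum_t \|\boldsymbol{\epsilon}_t\|_2^2 \leq n\, C_1^2 \|\hat{\boldsymbol{\theta}}_k - \boldsymbol{\theta}_0\|_2^2$. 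For the denominator, Assumption \ref{a1} combined with the log-concavity of $1-F$ implies $g'(\cdot) \geq u_{\min} > 0$ on $[-W,B]$, so $\sum_t u_t^2 \geq n\, u_{\min}^2$. The count $n$ of matched pairs available by the end of episode $k$ is then lower-bounded using Definition \ref{def}: the exploration phase of episode $k$ alone contributes $a_k = \lfloor \sqrt{C_a \ell_k}\rfloor$ fresh buyers, a fraction at least $\tau$ of whom also purchase in some exploitation phase, so $n \geq \tau \sqrt{C_a \ell_k}$ on a high-probability event.

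Combining the numerator and denominator bounds, the $n$ factors cancel, and invoking Lemma \ref{lemma11} on $\mathbb{E}\|\hat{\boldsymbol{\theta}}_k - \boldsymbol{\theta}_0\|_2^2 \lesssim (d+1)/a_k$ leaves a leading rate proportional to $(d+1)/\sqrt{\ell_k}$; routing through the random $n$ by splitting the expectation over the event $\{n \geq \tau\sqrt{C_a \ell_k}\}$ and its small-probability complement reintroduces the $\tau$ in the denominator and delivers the stated bound $O((d+1)/(\tau \sqrt{\ell_k}))$. The main obstacle will be the careful probabilistic treatment of the random matched-pair count $n$ and its accumulation across episodes: a Chernoff- or Bernstein-type concentration argument is needed so that the lower bound $n \gtrsim \tau \sqrt{\ell_k}$ holds with sufficiently high probability to control the full expectation, and one must also ensure that the residual contributions from earlier episodes (which involve less accurate $\hat{\boldsymbol{\theta}}_j$) do not dominate. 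A secondary technical point is verifying that $g'$ is uniformly bounded below away from zero on the feasible interval, which requires unpacking the identity $g(v) = v + \phi^{-1}(-v)$ under the log-concavity and bounded-derivative hypotheses of Assumption \ref{a1}.
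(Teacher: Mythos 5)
Your core argument is the same as the paper's: write $\hat{\boldsymbol{\gamma}}_j-\boldsymbol{\gamma}_j=\boldsymbol{u}^\top\boldsymbol{\varepsilon}_j/(\boldsymbol{u}^\top\boldsymbol{u})$, apply Cauchy--Schwarz to the numerator, lower-bound the denominator by $n c_{g'}^2$ using $g'(\cdot)\geq c_{g'}>0$ on $[-W,B]$ (the paper gets this from $0<g'<1$ in Lemma \ref{t1} plus continuity, exactly as you suggest), control $\|\boldsymbol{\epsilon}_t\|_2$ by $C_{g''}\sqrt{C_x}\,\|A^{-1}\boldsymbol{\beta}_0\|_2\,\|\hat{\boldsymbol{\theta}}_k-\boldsymbol{\theta}_0\|_2$ via the local Lipschitz continuity of $g'$ (Lemma \ref{t2}), and finish with Lemma \ref{lemma11} and $\sum_j\boldsymbol{\gamma}_j^2\leq W_\theta^2/\lambda_{Amin}^2$. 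Where you diverge is in the bookkeeping for the matched-pair count $n$, and both of the ``main obstacles'' you flag are non-issues in the paper's framework. First, no concentration argument is needed: Definition \ref{def} is used as a deterministic accounting identity, giving $\tau(\ell_k-\ell_0)\leq n\leq 2\tau\ell_k$; the paper's lower bound $n\geq\tau\ell_k/2$ for $k>1$ comes from the cumulative horizon, not (as you have it) from the $a_k\asymp\sqrt{\ell_k}$ buyers of episode $k$'s exploration phase alone, so your lower bound on $n$ is both unnecessary and misaligned with the definition. Second, the worry about ``less accurate $\hat{\boldsymbol{\theta}}_j$ from earlier episodes'' dissolves because $u_t=g'(\hat{\boldsymbol{\theta}}^\top\boldsymbol{x}_t)$ is recomputed by the seller from the stored pairs using the \emph{current} estimate $\hat{\boldsymbol{\theta}}_k$ whenever (\ref{ols}) is run, so every residual $\boldsymbol{\epsilon}_t$ involves only $\hat{\boldsymbol{\theta}}_k-\boldsymbol{\theta}_0$; the identity $\boldsymbol{\delta}_t=-A^{-1}\boldsymbol{\beta}_0 g'(\boldsymbol{\theta}_0^\top\boldsymbol{x}_t)$ holds regardless of which estimate defines $u_t$.

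It is worth noting that your cleaner cancellation $\|\hat{\boldsymbol{\gamma}}-\boldsymbol{\gamma}\|_2^2\leq(\sum_t\|\boldsymbol{\epsilon}_t\|_2^2)/(\sum_t u_t^2)$ makes the $n$'s drop out entirely and, combined with Lemma \ref{lemma11}, yields a bound of order $(d+1)/(a_k+1)\asymp(d+1)/\sqrt{C_a\ell_k}$ with no $\tau$ at all --- which implies the stated lemma trivially since $\tau\leq 1$, and is in fact slightly stronger. The paper's $\tau$-dependence is an artifact of its looser chaining (it upper-bounds $n\leq 2\tau\ell_k$ inside the numerator and separately lower-bounds $n\geq\tau\ell_k/2$ in the denominator rather than cancelling). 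So your route, once you drop the unnecessary probabilistic machinery and the incorrect count of $n$, is not only correct but marginally sharper than the printed proof.
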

Lemma \ref{lemmaunA} reveals the estimate error of $\boldsymbol{\gamma}$ scales inversely with the repeat buyer rate. This implies that a higher repeat buyer rate leads to a lower estimation error, as more samples can be obtained to estimate $\boldsymbol{\gamma}$ when $\tau$ is higher. Now we establish an upper bound on the total expected regret for the strategic pricing policy in the case of an unknown marginal cost.
\begin{theorem}\label{theory5}
Assume that the marginal cost $A$ is unknown by the seller. Under Assumptions \ref{a0}, \ref{a3}, \ref{a1} and \ref{a2}, using the strategic pricing policy (Algorithm \ref{alg2}), there exist constants $C_3^*>0$ and $C_4^*>0$ such that for $k>1$, the total expected regret satisfies 
$$Regert_{\pi}(T)< \bigg[C_3^*+\frac{C_4^*\sqrt{d+1}}{\tau\lambda_{Amin}^2}\bigg]\sqrt{(d+1)T}.$$
\end{theorem}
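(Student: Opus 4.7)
The plan is to generalize the regret analysis of Theorem~\ref{theory2} by accounting for an additional source of error coming from the estimation of $\boldsymbol{\gamma}=-A^{-1}\boldsymbol{\beta}_0$. The proof proceeds in three stages: (i) a per-episode decomposition of regret, (ii) a per-step regret bound in the exploitation phase that depends on both $\|\hat{\boldsymbol{\theta}}_k-\boldsymbol{\theta}_0\|_2$ and $\|\hat{\boldsymbol{\gamma}}-\boldsymbol{\gamma}\|_2$, and (iii) summation over episodes using Lemma~\ref{lemma11} and Lemma~\ref{lemmaunA}.

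First I would write the cumulative regret as $\sum_{k\geq 1}(R^{\mathrm{expl}}_k+R^{\mathrm{exploit}}_k)$, where $R^{\mathrm{expl}}_k$ and $R^{\mathrm{exploit}}_k$ collect the regret accrued during the $a_k$ exploration steps and the $\ell_k-a_k$ exploitation steps of episode $k$, respectively. The exploration terms are trivial: each step contributes at most $B$ by Assumption~\ref{a3}, so $\sum_k R^{\mathrm{expl}}_k=O\!\left(\sum_k a_k\right)=O\!\left(\sum_k\sqrt{\ell_k}\right)=O(\sqrt{T})$, exactly as in the known-$A$ case. The edge episodes in which $\mathcal{M}=\emptyset$ (only possible before the first matched repeat buyer) contribute a bounded additive constant that gets absorbed into $C_3^*$.

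For the exploitation phase, fix $t$ in episode $k$ with $k>1$ so that $\mathcal{M}\neq\emptyset$. When $e_t\in\mathcal{E}_1\cap\mathcal{E}_2$, the seller recovers $\boldsymbol{x}_t^0$ from the matched record and posts $g(\hat{\boldsymbol{\theta}}_k^{\top}\boldsymbol{x}_t^0)$. A second-order Taylor expansion of the expected revenue around $p_t^{*}=g(\boldsymbol{\theta}_0^{\top}\boldsymbol{x}_t^0)$, whose first derivative vanishes and whose second derivative is controlled via Assumption~\ref{a1}, yields instantaneous regret $r_t=O(\|\hat{\boldsymbol{\theta}}_k-\boldsymbol{\theta}_0\|_2^2)$. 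When instead $e_t\notin\mathcal{E}_1\cap\mathcal{E}_2$, the posted price is $g\bigl(\hat{\boldsymbol{\theta}}_k^{\top}\boldsymbol{x}_t-\hat{\boldsymbol{\beta}}_k^{\top}\hat{\boldsymbol{\gamma}}\,g'(\hat{\boldsymbol{\theta}}_k^{\top}\boldsymbol{x}_t)\bigr)$; using the identity (\ref{xt}) rewritten as $\boldsymbol{\theta}_0^{\top}\boldsymbol{x}_t^0=\alpha_0+\boldsymbol{\beta}_0^{\top}\tilde{\boldsymbol{x}}_t-\boldsymbol{\beta}_0^{\top}\boldsymbol{\gamma}\,g'(\boldsymbol{\theta}_0^{\top}\boldsymbol{x}_t)$, the argument of $g$ differs from $\boldsymbol{\theta}_0^{\top}\boldsymbol{x}_t^0$ by a term linear in $(\hat{\boldsymbol{\theta}}_k-\boldsymbol{\theta}_0)$ and $(\hat{\boldsymbol{\gamma}}-\boldsymbol{\gamma})$. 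A Lipschitz-in-argument analysis of $g$ together with Assumption~\ref{a1} then gives $r_t=O\bigl(\|\hat{\boldsymbol{\theta}}_k-\boldsymbol{\theta}_0\|_2^2+\lambda_{Amin}^{-2}\|\hat{\boldsymbol{\gamma}}-\boldsymbol{\gamma}\|_2^2\bigr)$, where the $\lambda_{Amin}^{-2}$ factor reflects the worst-case manipulation magnitude $\|A^{-1}\boldsymbol{\beta}_0\|$.

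Combining these per-step bounds and taking expectations, I plug in Lemma~\ref{lemma11}, which gives $\mathbb{E}\|\hat{\boldsymbol{\theta}}_k-\boldsymbol{\theta}_0\|_2^2=O((d+1)/a_k)$, and Lemma~\ref{lemmaunA}, which gives $\mathbb{E}\|\hat{\boldsymbol{\gamma}}-\boldsymbol{\gamma}\|_2^2=O((d+1)/(\tau\sqrt{\ell_k}))$. Multiplying by the exploitation length $\ell_k-a_k\leq\ell_k$ produces per-episode contributions of orders $O(\sqrt{(d+1)\ell_k})$ from the $\boldsymbol{\theta}$-error and $O((d+1)\sqrt{\ell_k}/(\tau\lambda_{Amin}^2))$ from the $\boldsymbol{\gamma}$-error. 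Summing the doubling-length episodes up to $T$ uses $\sum_{k=1}^{K}\sqrt{\ell_k}=O(\sqrt{T})$ and yields exactly the stated rate $\sqrt{(d+1)T}\cdot[C_3^{*}+C_4^{*}\sqrt{d+1}/(\tau\lambda_{Amin}^2)]$.

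The principal obstacle I foresee is the second Taylor expansion: because $\hat{\boldsymbol{\gamma}}$ enters the price through both $\hat{\boldsymbol{\beta}}_k^{\top}\hat{\boldsymbol{\gamma}}$ and the interior $g'$ argument (which itself contains $\hat{\boldsymbol{\theta}}_k$), one must verify that the bilinear cross terms between $\hat{\boldsymbol{\theta}}_k-\boldsymbol{\theta}_0$ and $\hat{\boldsymbol{\gamma}}-\boldsymbol{\gamma}$ can be absorbed into the two squared-norm terms via Cauchy--Schwarz without inflating the rate, and that the $\lambda_{Amin}^{-2}$ constant is tight. A secondary technical point is ensuring that the repeat-buyer indicator is independent of the noise realizations conditional on the episode index, so that Lemma~\ref{lemmaunA} can be applied uniformly across the exploitation steps of episode $k$.
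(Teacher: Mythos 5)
Your proposal follows essentially the same route as the paper's proof: bound the exploration regret by $Ba_k$ per episode, Taylor-expand the expected revenue around $p_t^*$ to reduce the exploitation regret to $(p_t^*-p_t)^2$, decompose that square (via the identity from (\ref{xt}) and the elementary inequality $(a+b)^2\leq 2a^2+2b^2$, which resolves the cross-term concern you raise) into a $\|\hat{\boldsymbol{\theta}}_k-\boldsymbol{\theta}_0\|_2^2$ term and a $\lambda_{Amin}^{-2}\|\hat{\boldsymbol{\gamma}}-\boldsymbol{\gamma}\|_2^2$ term, plug in Lemmas \ref{lemma11} and \ref{lemmaunA}, and sum over the doubling episodes. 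Your explicit treatment of the repeat-buyer case $e_t\in\mathcal{E}_1\cap\mathcal{E}_2$ and the $\mathcal{M}=\emptyset$ edge case is slightly more careful than the paper's write-up, but it is not a different argument.
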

%The constants $C_3^*$ and $C_4^*$ in the regret bound are determined by certain absolute constants derived from the assumptions. 
The regret bound is influenced by several interesting factors, including $\lambda_{Amin}$, $d$, $T$, and $\tau$. The relationship between the regret bound and the first three factors $\lambda_{Amin}$, $d$, and $T$ is similar to that established in Theorem \ref{theory2}. Here, we focus on analyzing the impact of the repeat buyer rate $\tau$ on the regret bound. The parameter $\tau$ represents the proportion of buyers who have made purchases during both the exploration and the exploitation phases.  A higher value of $\tau$ results in more repeat buyers, providing more samples for the estimation of $A^{-1}\boldsymbol{\beta}_0$. This increase in the sample size leads to a more accurate estimation of $\boldsymbol{\gamma}$, as indicated by Lemma \ref{lemmaunA}. Consequently, the seller obtains a more precise estimate of the true feature $\boldsymbol{x}_t^0$, which translates to a lower regret. Theorem \ref{theory5} establishes that our proposed strategic pricing policy, even in the absence of prior knowledge regarding the marginal cost, attains the same regret upper bound of $O(\sqrt{T})$ as demonstrated in Theorem \ref{theory2}.
%Theorem \ref{theory5} demonstrates that our proposed strategic pricing policy, even when the knowledge of the marginal cost is unknown in advance, achieves the same $O(\sqrt{T})$ regret upper bound as that shown in Theorem \ref{theory2}. %This performance is superior to that of the non-strategic pricing policy, as illustrated in Theorem \ref{theory3}. Thus, incorporating strategic considerations into the pricing policy allows for better regret minimization compared to non-strategic approaches.

\section{Experiments}\label{sec6}

% \wei{mention our additional experiments on xxx in the supplement section xxx. May move the sensitivity to the appendix, and move the Section S.5 to the main paper. }

In this section, we empirically evaluate the performance of our proposed strategic dynamic pricing policies and compare them with the benchmark method. We first conduct simulation studies to validate the theoretical results and investigate the impacts of key factors on our policies, and then evaluate the performance of our policies using real-world data. We present sensitivity tests on the hyperparameters in the supplementary materials. Additionally, the experimental results on the strategic $\epsilon$-greedy pricing policy and the heterogeneity of marginal costs are also detailed in the supplementary materials. All experimental results are derived from 100 independent runs.

\subsection{Justification of Theoretical Results}\label{sec6.1}
%In this section,  we present numerical experiments to demonstrate the effectiveness of our policy. 
We consider the dimension of the features $d=2$, with the true parameter $\boldsymbol{\theta}_0=(1/3, 2/3, 1/2)^\top$. The covariates $x_1$ and $x_2$ are both independently and identically distributed from Unif(0, 4). The noise distribution is assume as the normal distribution $z_t \sim N(0,1)$. \par
To implement our algorithm, we divide the time horizon into consecutive episodes, with the length of the $k$-th episode set to $\ell_k=2^{k-1}\ell_0$, where $\ell_0=200$. We further  partition each episode into an exploration phase with length $a_k=\lfloor \sqrt{100\ell_k}\rfloor$, and an exploration phase with length $\ell_k-a_k$. In the exploration phase, we sample $p_t$ from Unif(0, 6) among all the policies, and we obtain the estimate $\hat{\boldsymbol{\theta}}_k$. In the exploitation phase, we implement different policies to compare the performance. 
\begin{itemize}
    \item For the non-strategic pricing policy \citep{Javanmard2019, wang2022}, the price is determined by $p_t=g(\hat{\boldsymbol{\theta}}_k^\top \boldsymbol{x}_t)$. 
    \item For the strategic pricing policy with the known marginal cost, the price is determined by $p_t=g(\hat{\boldsymbol{\theta}}_k^\top \boldsymbol{x}_t+\hat{\boldsymbol{\beta}}_k^\top A^{-1}\hat{\boldsymbol{\beta}}_k g'(\hat{\boldsymbol{\theta}}_k^\top \boldsymbol{x}_t))$ according to Algorithm \ref{alg1}.
    \item  For the strategic pricing policy with the unknown marginal cost, the price $p_t$ is determined according to Algorithm \ref{alg2}. %under different cases. if $(\tilde{\boldsymbol{x}}_t^0, \tilde{\boldsymbol{x}}_t)$ is recorded in the seller's system, $p_t=g(\hat{\boldsymbol{\theta}}_k^\top\boldsymbol{x}_t^0)$. Otherwise,  if $\hat{\boldsymbol{\gamma}}$ is not available, $p_t=g(\hat{\boldsymbol{\theta}}_k^\top\boldsymbol{x}_t)$. However, once $\hat{\boldsymbol{\gamma}}$ is available, we determine  $p_t=g(\hat{\boldsymbol{\theta}}_k^\top \boldsymbol{x}_t-\hat{\boldsymbol{\beta}}_k^\top \hat{\boldsymbol{\gamma}} g'(\hat{\boldsymbol{\theta}}_k^\top \boldsymbol{x}_t))$. 
\end{itemize}
Among the experiments, we denote the base marginal cost $A_0=\begin{pmatrix}
1/4 & 1/8 \\
1/8 & 1/4
\end{pmatrix}$, and consider different repeat buyer rates $\tau=0.05\%$ and $\tau=0.1\%$.
\subsubsection{Comparison of Strategic and Non-strategic Pricing Policies}\label{6.1.1}
In this section, we compare the performance of the strategic and non-strategic pricing policies. Figure \ref{fig1} shows the regrets of the non-strategic pricing policy, the strategic pricing policies with known and unknown marginal cost $A$. We set $\tau=0.05\%$.\par
\begin{figure}[t!]
    \centering
\begin{tabular}{cc}
        \includegraphics[scale = 0.45]{./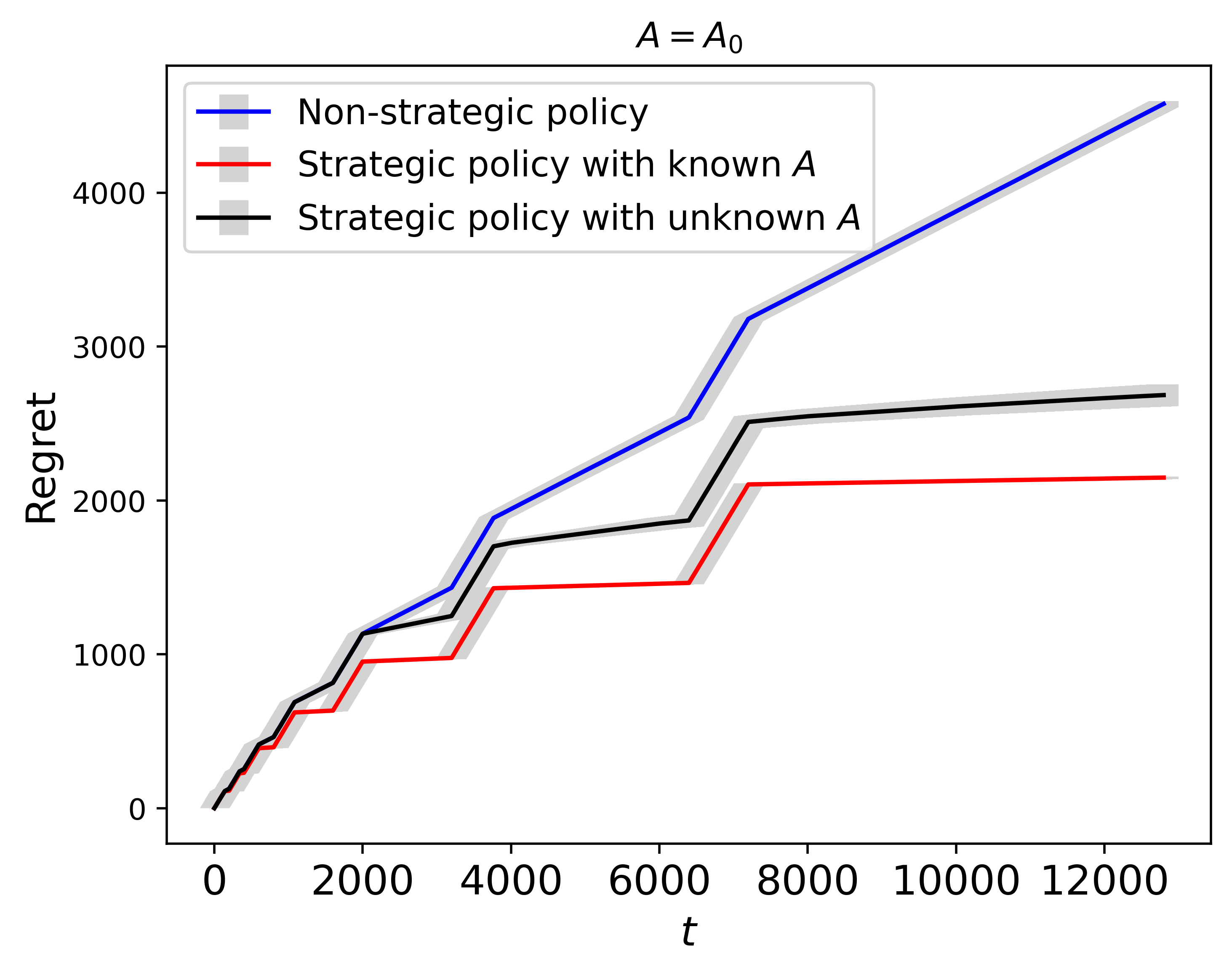}&
        \includegraphics[scale = 0.45]{./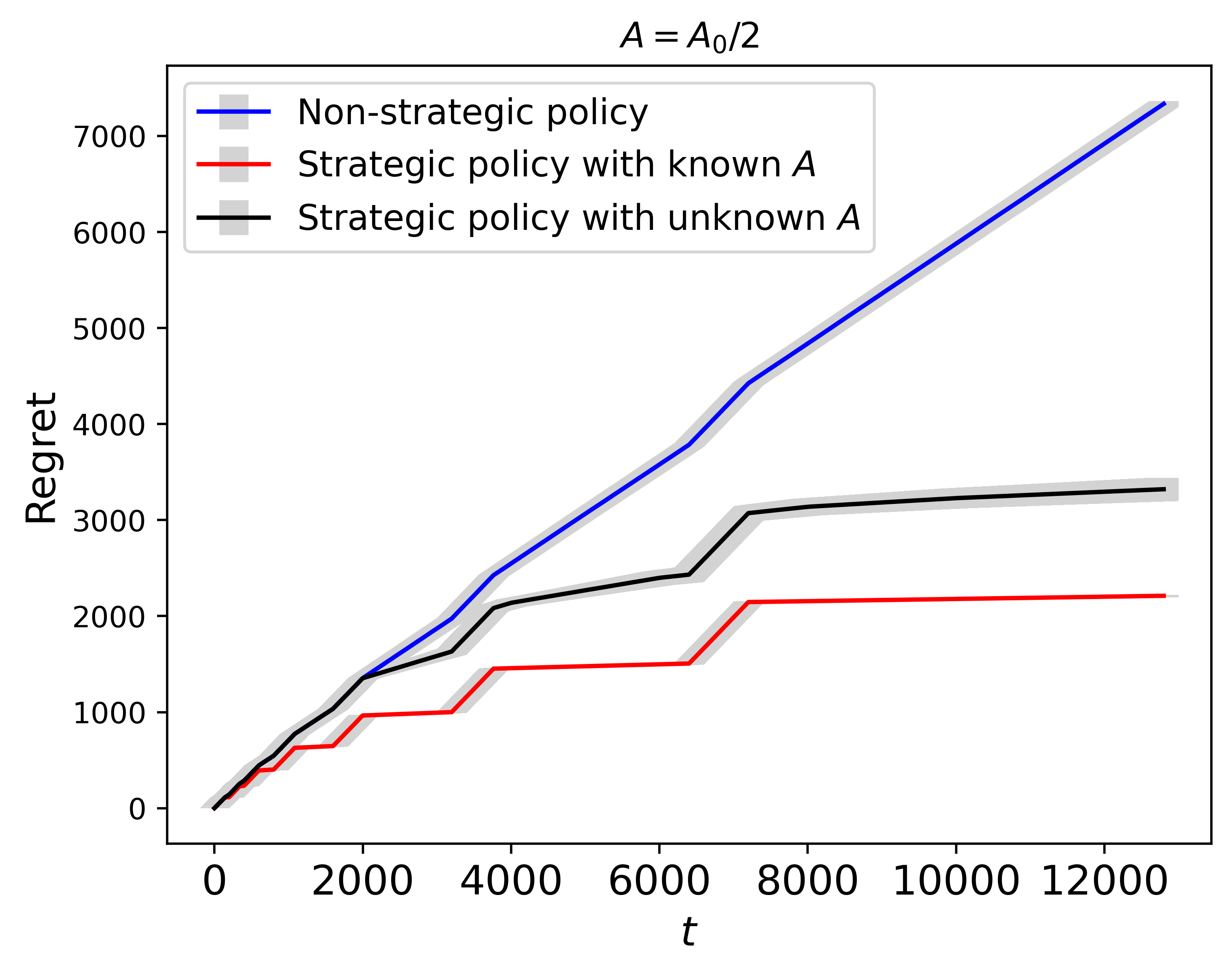}
    \end{tabular}
        \caption{Regret plots for the three policies. } 
    \label{fig1}
\end{figure}
Based on the results presented in Figure \ref{fig1}, it can be concluded that the non-strategic policy generates larger rates of empirical regrets increment compared to the strategic policies under both settings $A=A_0$ and $A=A_0/2$.   The performance comparison between the strategic policy with known $A$ and unknown $A$ depends on the repeat buyer rate $\tau$ of the buyer. For smaller $\tau$, the regret of the strategic policy with unknown $A$ is higher. As  $\tau$ increases, the gap of regrets between these two policies decreases. This phenomenon can be attributed to two reasons. Firstly, with a larger $\tau$, the estimate of $\boldsymbol{\gamma}$ becomes more accurate, leading to a lower regret. Secondly, a larger $\tau$ indicates the presence of more returning buyers. In the strategic policy with unknown marginal cost $A$, the seller is assumed to record the information of the buyer. If the buyer's information $(\tilde{\boldsymbol{x}}_t^0, \tilde{\boldsymbol{x}}_t)$ is stored in the seller's system,  the seller can set the price based on $x_t^0$, which also results in a smaller regret. It indicates that collecting and storing buyer information thus helps the seller increase the profit, which has a great practical significance.\par
\subsubsection{Impact of Marginal Cost on Strategic Pricing Policy}
In this section, we study the impact of the marginal cost on the strategic pricing policy. We set $\tau=0.05\%$.
\begin{figure}[t!]
    \centering
    \begin{tabular}{cc}  
        \includegraphics[scale = 0.450]{./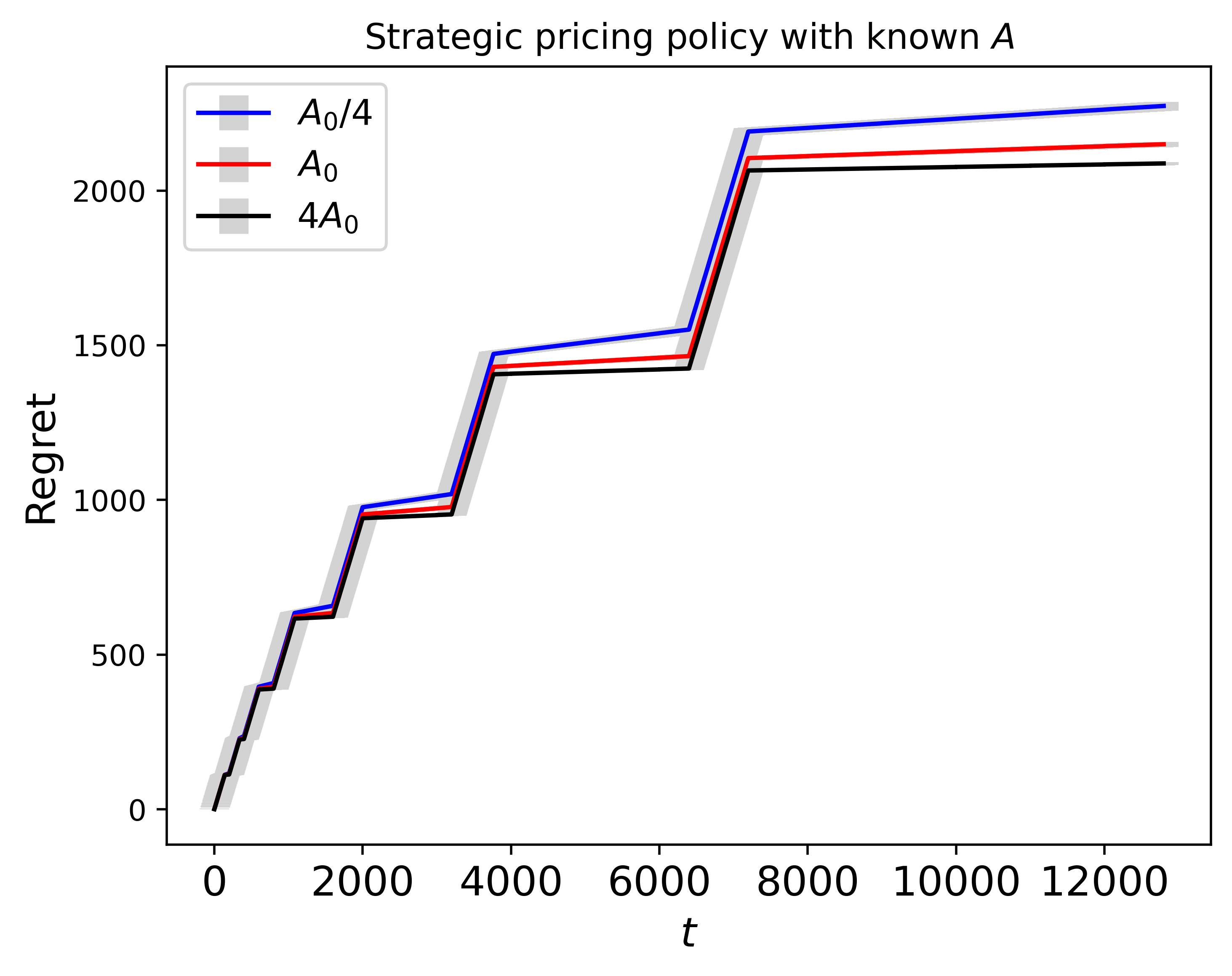}&
        \includegraphics[scale = 0.450]{./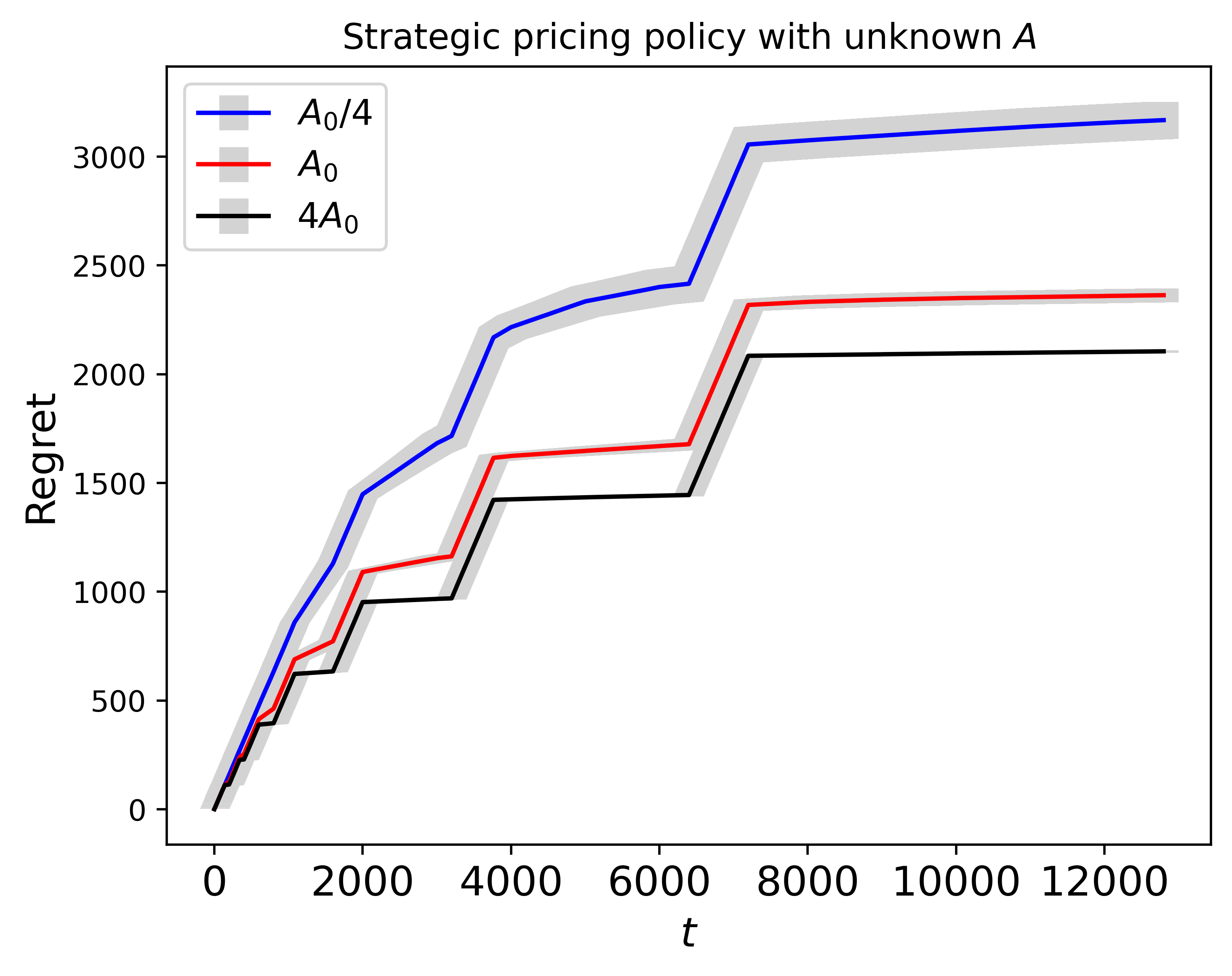}
    \end{tabular}
     \caption{Regret plots for different marginal costs. }
         \label{fig2}
\end{figure}
The marginal costs, denoted as $A_0/4$, $A_0$, and $4A_0$, are intentionally designed to have varying minimum eigenvalues. Specifically, $A_0/4$ has the smallest minimum eigenvalue, while $4A_0$ has the largest minimum eigenvalue. In Figure \ref{fig2}, we observe the impact of different marginal costs on the regret. It is evident that the pricing policy based on the marginal cost $4A_0$ yields the lowest regret, whereas the policy based on $A_0/4$ results in the highest regret, regardless of whether the marginal cost is known or unknown. This observation aligns with the findings of Theorems \ref{theory2} and \ref{theory5}.
\subsubsection{ Impact of Repeat Buyer Rate on Strategic Pricing Policy}
In this section, we investigate the influence of the repeat buyer rate $\tau$ on the strategic pricing policy with an unknown marginal cost. Figure \ref{fig3} shows that the regret of the strategic pricing policy, in the presence of an unknown marginal cost $A$, with varying repeat buyer rate $\tau$'s. Comparing the results at the same marginal cost, we observe that the regret is higher when $\tau=0.05\%$ compared to when $\tau=0.1\%$. This observation is consistent with the conclusions drawn from Theorem \ref{theory5}. The repeat buyer rate $\tau$ plays a crucial role in determining the effectiveness of the strategic pricing policy under the unknown marginal cost.
\begin{figure}[t!]
    \centering
    \begin{tabular}{cc}  
        \includegraphics[scale = 0.450]{./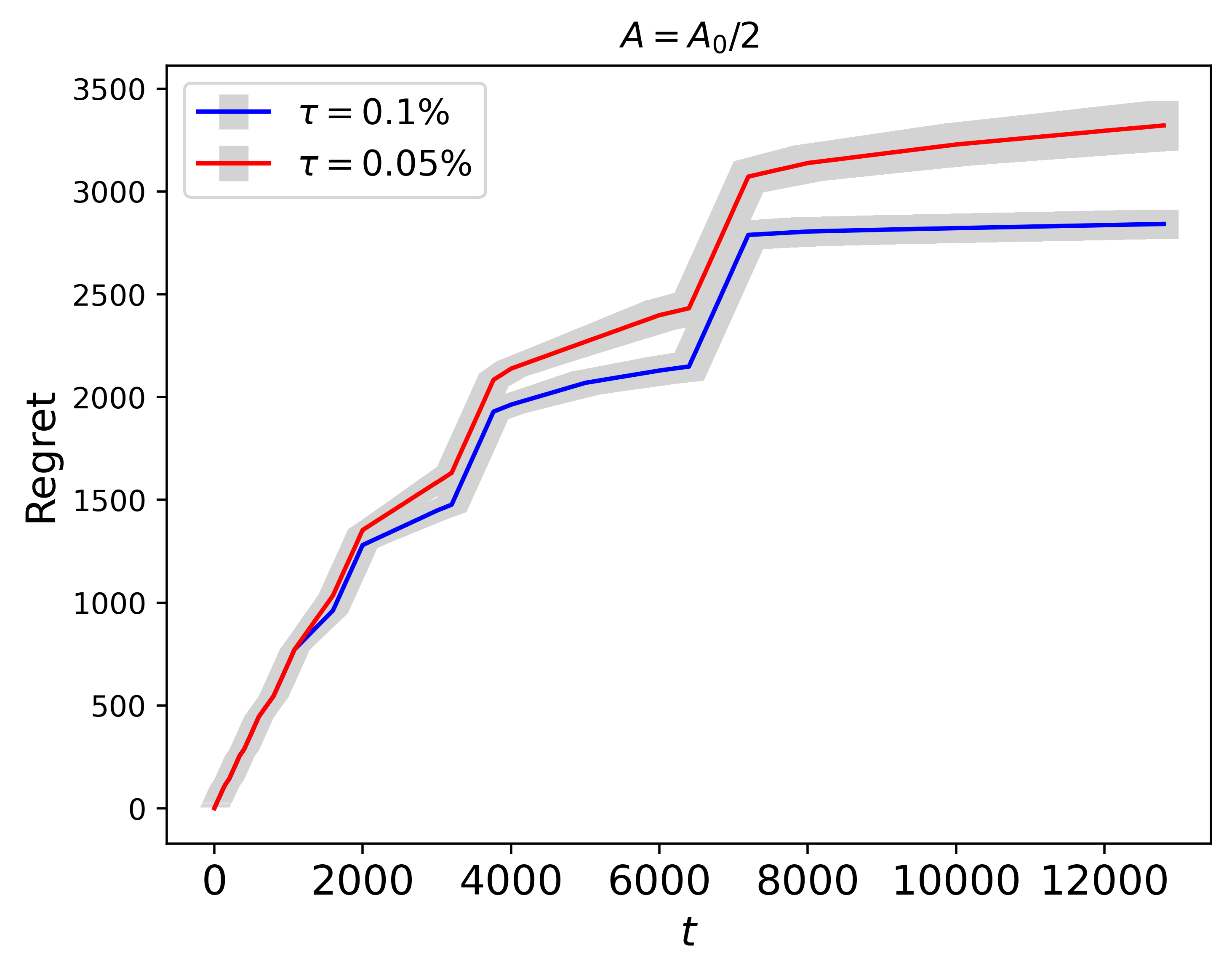}&
        \includegraphics[scale = 0.450]{./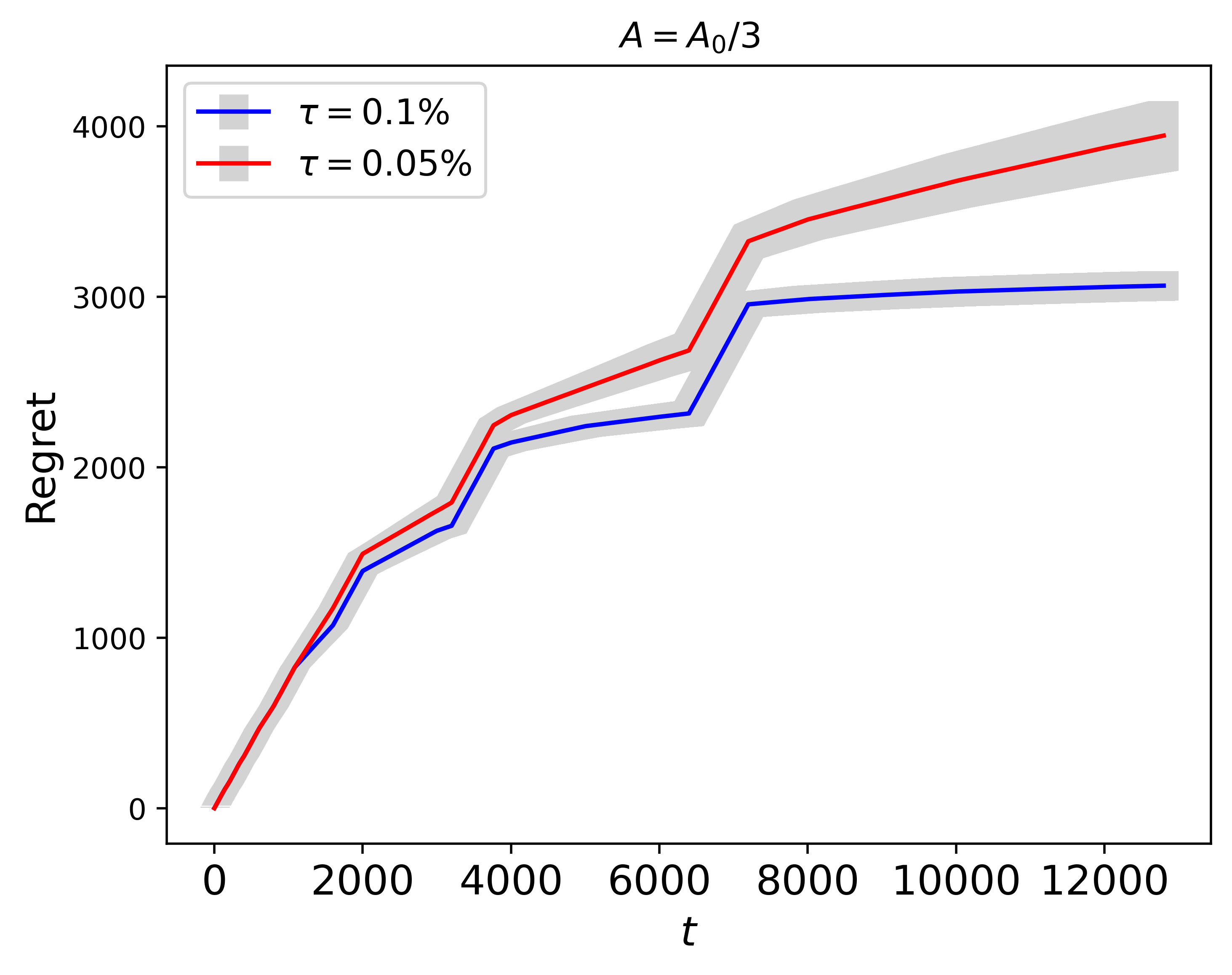}
    \end{tabular}
     \caption{Regret plots for strategic pricing policy with the unknown marginal cost at different repeat buyer rates.}
         \label{fig3}
\end{figure}

\subsection{Real Application} \label{sec6.3}
We explore the efficiency of our proposed policy on a real-world auto loan dataset provided by the Center for Pricing and Revenue Management at Columbia University. This dataset has been used by several dynamic pricing works \citep{Phillips2015,Ban2021,Bastani2022,wang2022, Fan2022,luo2023}. It contains 208,805 auto loan applications received between July 2002 and November
2004. For each application, we observe some loan-specific features such as the amount of loan, the borrower’s information. The dataset also records purchasing decision of the borrowers. We adopt the features used in \citet{Ban2021,Fan2022, luo2023} and consider the following four features: the loan amount approved, FICO score, prime rate, and the competitor’s rate. The price $p$ of a loan is computed by $p=$Monthly Payment $\times\sum_{t=1}^{\text{Term}}(1+\text{Rate})^{-t}-$Loan Amount, and the rate is set at $0.12\%$ \citep{Fan2022,luo2022}. 

Numerous methods have been provided for detecting feature manipulation \citep{Jerzy2021,Jiang2021,Khaled2021,Ali2022,Gu2022,Liao2022}, including supervised, unsupervised, semi-supervised methods and graph-based methods \citep{Waleed}. The conventional approach involves developing supervised models using datasets comprising customer information and labels for loan feature manipulation.  For an in-depth discussion on the detection of feature manipulation, please refer to the supplementary materials.

We acknowledge that online responses to any dynamic pricing strategy are not available unless a real online experiment is conducted. To address this issue, we adopt the calibration approach used in \citet{Ban2021, wang2022, Fan2022,luo2023} to first estimate the binary choice model using the entire dataset and leverage it as the ground truth for our online evaluation. We randomly sample 12,800 applications from the original dataset for 20 times and apply the policies to each of the 20 replications and then record the average cumulative regrets. In the experiment, we assume $z_t\sim N(0, 1)$, and $A=\begin{pmatrix}
A_0 & \boldsymbol{0} \\
\boldsymbol{0} & A_0
\end{pmatrix}$, and set $B=3, C_a=100, \ell_0=200$ and $\tau=0.1\%$.\par
\begin{figure}[t!]
    \centering
    \begin{tabular}{ccc}  
        \includegraphics[scale = 0.60]{./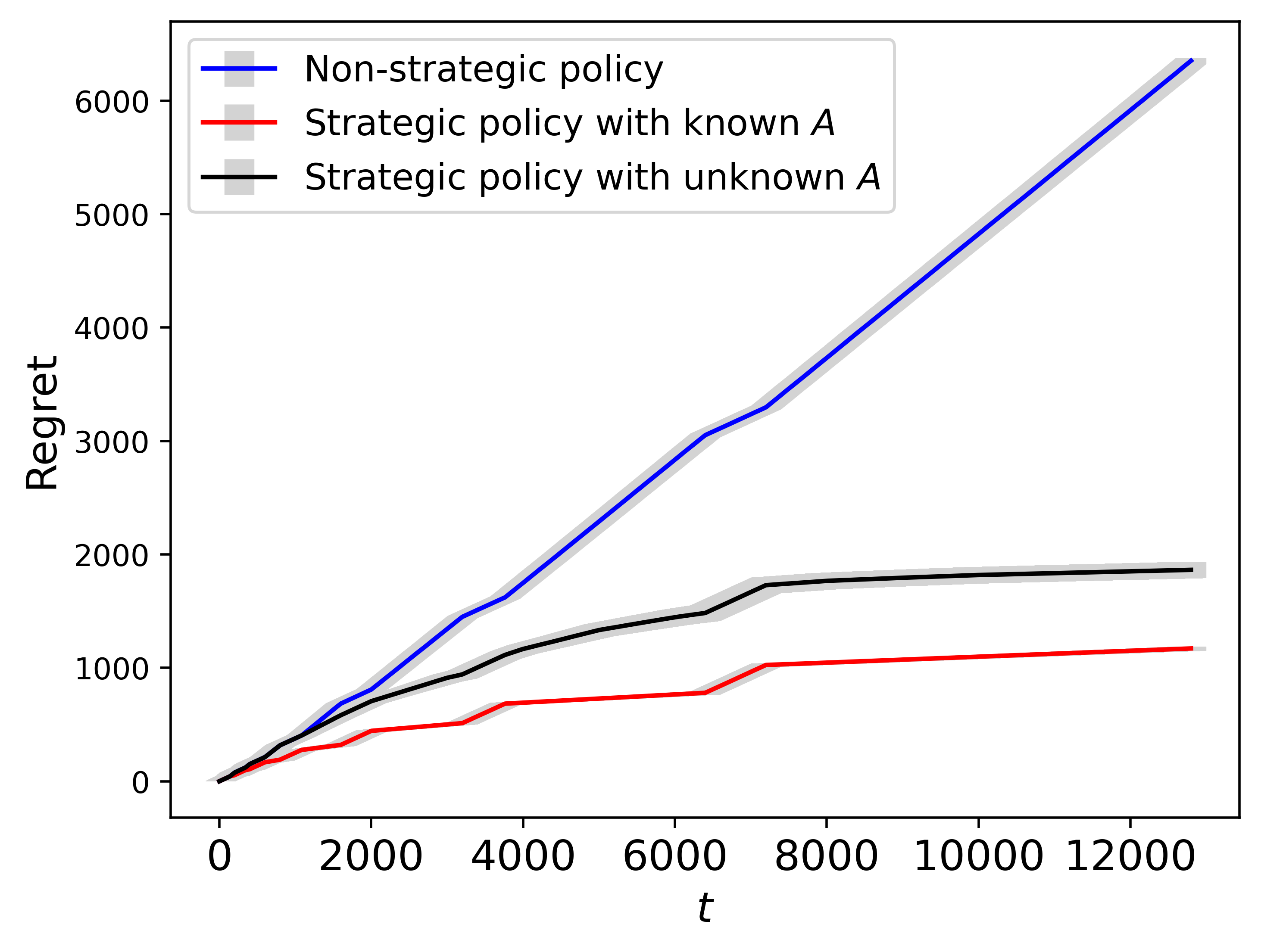}
    \end{tabular}
     \caption{Regret plots for the three policies on the real data.}
         \label{fig7}
\end{figure}
Figure \ref{fig7} depicts the cumulative regret of the non-strategic pricing policy compared to our proposed strategic pricing policies. It is evident from the plot that the cumulative regret of the non-strategic policy increases at a much faster rate compared to our strategic policies. This observation aligns with our previous findings in the simulated data. The strategic policies, which take into account the potential manipulation behavior of buyers, outperform the non-strategic policy in terms of the cumulative regret. %This indicates that the strategic policies are better able to adapt to the dynamic environment and make pricing decisions that minimize regret over time.

%Acknowledgment is only included in the unblinded version
\section*{Acknowledgment}
The authors thank the editor Professor Annie Qu, the associate editor and two anonymous reviewers for their valuable comments and suggestions which led to a much improved paper. Will Wei Sun’s research was partially supported by National Science Foundation (Award 2217440). Zhaoran Wang acknowledges National Science Foundation (Awards 2048075, 2008827, 2015568, 1934931), Simons Institute (Theory of Reinforcement Learning), Amazon, J.P. Morgan, and Two Sigma for their supports. Zhuoran Yang acknowledges Simons Institute (Theory of Reinforcement Learning) for their support. Any opinions, findings, and conclusions expressed in this material are those of the authors and do not reflect the views of the funding agency. The authors report there are no competing interests to declare. 

\baselineskip=18pt
\bibliographystyle{asa}
\bibliography{reference}

\newpage
\appendix 
\baselineskip=24pt
\setcounter{page}{1}
\setcounter{equation}{0}
\setcounter{section}{0}
\renewcommand{\thesection}{S.\arabic{section}}
\renewcommand{\thelemma}{S\arabic{lemma}}
\renewcommand{\theequation}{S\arabic{equation}}

\begin{center}
{\Large\bf Supplementary Materials} \\
\medskip
{\Large\bf ``Contextual Dynamic Pricing with Strategic Buyers"}  \\
\bigskip
\textbf{Pangpang Liu, Zhuoran Yang, Zhaoran Wang, Will Wei Sun}
\vspace{0.2in} % comment out this line when unblind the author line
\end{center}
\bigskip

\noindent
In this supplement, we provide additional information related to our paper, and include detailed proofs of the theorems and lemmas. We provide a discussion on the regret lower bound of any pricing policy under our problem setting in Section \ref{lower}.   Section \ref{greedy1} extends our policy to the strategic $\epsilon$-greedy pricing policy. Section \ref{heter} considers the heterogeneity of the marginal cost. Section \ref{fm} discusses the detection of feature manipulation in real life. Section \ref{dub} discusses the $O(\sqrt{T})$ upper regret bound. Section \ref{sec7} presents some future directions.  Section \ref{sec6.2} gives sensitivity tests of our proposed pricing policies.   Section \ref{stra} provides additional related literature. Section \ref{s2} gives the proof under the non-strategic pricing policy, $i.e.$, Theorem \ref{theory3}. Section \ref{s1} provides the proofs under the strategic pricing policy with the known marginal cost $A$, including Lemma \ref{lemma11} and Theorem \ref{theory2}. Section \ref{s3} offers the proofs under the strategic pricing policy with the unknown marginal cost $A$, including Lemma \ref{lemmaunA} and Theorem \ref{theory5}. Section \ref{s4} includes all supporting technical lemmas.

\section{Discussion on Minimax Lower Bound}\label{lower}
To establish the lower bound $\Omega(\sqrt{T})$ for any pricing policy, we borrow the "uninformative price" idea 
 from \cite{Broder2012} and construct a special instance following \cite{Fan2022}. The problem setting in our work is similar to that of \cite{Fan2022}, differing in that our paper considers a known $F$, while \cite{Fan2022} addresses an unknown $F$. An uninformative price is a price that all demand curves (probability of successful sales) as offered price indexed by unknown parameters intersect. Namely, the demands at this uninformative price are the same for all unknown parameters. In addition, such price is also the optimal price with some parameters. In this case, the price is uninformative because it does not reveal any information on the true parameter. Intuitively, if one tries to learn model parameters, the only way is to offer prices that are sufficiently far from the uninformative price (optimal price) which leads to a larger regret. Following \cite{Fan2022}, we consider a class of distributions $\mathcal{F}$ which satisfies Assumption \ref{a1},
     $$\mathcal{F}:=\{F_{\sigma}: \sigma>0, F_{\sigma}=F(x/\sigma)\}.$$ Here, $F$ is the c.d.f. of a known distribution with mean zero. We set $\boldsymbol{\beta}_0=0$ and fix a number $\xi$ with $F'(\xi)\neq 0$. Then we choose a collection of $\{(\sigma, \alpha_0)\}$ which satisfies $1/\sigma=\xi +\alpha_0/\sigma$. When the price $p=1$, all demand curves intersect at a point $1-F_{\sigma}(\xi)$. Then the price $p=1$ is an uninformative price. For the case $(\sigma,\alpha_0)=(1/(\xi-\phi(\xi)),-\phi(\xi)/(\xi-\phi(\xi)))$, $p=1$ is also the optimal price \citep{Fan2022}. From \cite{Broder2012}, we know that for a policy to reduce its uncertainty
about the unknown demand parameter, it must necessarily
set prices away from the uninformative price $p=1$ and thus incur large regret, and  any policy that does not reduce its uncertainty about the demand parameter must also incur a cost in regret. Therefore, following the argument in \cite{Fan2022}, the $\Omega(\sqrt{T})$ lower bound can be established.
\section{Strategic $\epsilon$-greedy Pricing Policy}\label{greedy1}

%In practical scenarios, sellers would probably not initially set a random price  to learn buyer feature distributions and then exploit them. Immediate losses from such an approach could be significant. Instead, a seller may prefer a strategy that involves simultaneous exploration and exploitation. Our proposed algorithm can be extended to a variant of the $\epsilon$-greedy algorithm.

While the two-phase exploration-exploitation mechanism in our algorithm is a common practice in dynamic pricing, our method can be extended to a variant of the $\epsilon$-greedy algorithm that involves simultaneous exploration and exploitation. Here, we introduce a new strategic $\epsilon$-greedy pricing policy that integrates both exploration and exploitation phases.

The workflow of the strategic $\epsilon$-greedy pricing policy is as follows. At each time $t$, the seller decides to implement the uniform policy with probability $\epsilon$ and the optimal pricing policy with probability $1-\epsilon$. The $\epsilon$-greedy pricing policy with the known marginal cost is shown as Algorithm \ref{alg01}. When the marginal cost is unknown, we can use the same strategy as Algorithm 2 in our paper to estimate it.
\begin{algorithm}[!ht]
\caption{Strategic $\epsilon$-greedy Policy with Known Marginal Cost}
\label{alg01}
\begin{algorithmic}[1]
\STATE  \textbf{Input}: $B, \ell_0, C_a$
%\STATE $\hat{\boldsymbol{\theta}}_1 \gets \boldsymbol{0}$
\FOR{each episode $k=1,2,...$}
\STATE Set the length of $k$-th episode as $\ell_k= 2^{k-1}\ell_0$, $\epsilon_k= \sqrt{C_a/\ell_k}$, $I_k'=\varnothing$, $\hat{\boldsymbol{\theta}}_k=0$.
\FOR{$t\in I_k:=\{ \ell_k,...,2\ell_k\}$}
\STATE The seller chooses the uniform policy with probability $\epsilon_k$ and the optimal policy with probability $1-\epsilon_k$ and informs the buyer the chosen policy.
\STATE The buyer reveals $\tilde{\boldsymbol{x}}_t$. Denote $\boldsymbol{x}_t=(\tilde{\boldsymbol{x}}_t^\top, 1)^\top$.
\STATE The seller offers a price by. 
\begin{equation*}
  p_t =
    \begin{cases}
      \tilde{p} \sim \text{Unif}(0, B) & \text{if the uniform pricing policy is chosen},\\
      g(\hat{\boldsymbol{\theta}}_k^\top \boldsymbol{x}_t+\hat{\boldsymbol{\beta}}_k^\top A^{-1}\hat{\boldsymbol{\beta}}_k g'(\hat{\boldsymbol{\theta}}_k^\top \boldsymbol{x}_t)) & \text{if the optimal pricing policy is chosen}.
    \end{cases}       
\end{equation*}
\STATE If the uniform pricing policy is chosen, $I'_k=I'_k\cup \{t\}$.
\STATE Observe a binary response $y_t$.
\ENDFOR
\STATE Calculate the estimate of $\boldsymbol{\theta}_0$ by 
\begin{align*} 
\hat{\boldsymbol{\theta}}_{k}=\mathop{\arg\min}_{\boldsymbol{\theta}\in\Theta} L_k(\boldsymbol{\theta}),
\end{align*}
where
$ L_k(\boldsymbol{\theta})=\frac{1}{|I'_k|}\sum_{ t\in I'_k}\big\{\mathbb{I}(y_t=1)\log [1-F(p_t-\boldsymbol{\theta}^\top \boldsymbol{x}_t)]+\mathbb{I}(y_t=0)\log F(p_t-\boldsymbol{\theta}^\top \boldsymbol{x}_t)\big\}.$
\ENDFOR
\end{algorithmic}
\end{algorithm}
We next conduct experiments to verify the effectiveness of the new strategic $\epsilon$-greed pricing policy. We set the repeat buyer rate $\tau=0.1\%$ and the  marginal cost as $A=\begin{pmatrix}
1/8 & 1/16 \\
1/16 & 1/8
\end{pmatrix}$. Other settings are the same with those in Section \ref{6.1.1}. The result is shown in Figure \ref{fig03}, indicating that both strategic $\epsilon$-greedy policies outperform the non-strategic policy, whether the marginal cost $A$ is known or unknown.
\begin{figure}[h!]
    \centering
    \begin{tabular}{cc}  
        \includegraphics[scale = 0.70]{./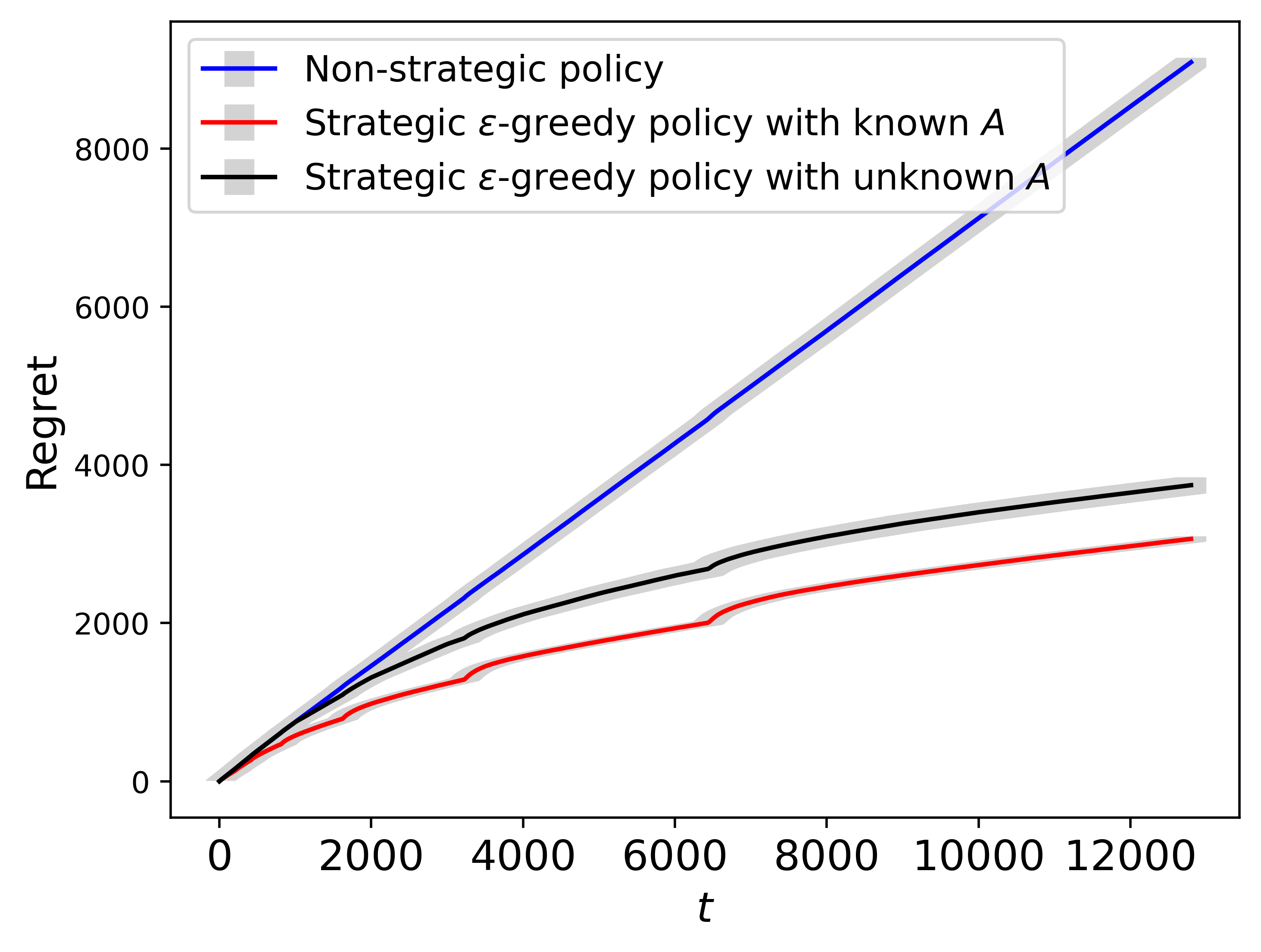}
    \end{tabular}
     \caption{Regret plots for the three policies.}
         \label{fig03}
\end{figure}
\section{Heterogeneity of Marginal Cost}\label{heter}
In practical scenarios, manipulation costs may differ among individual buyers. To address this variability, we broaden our pricing policy to accommodate the existence of heterogeneity in marginal costs. This extension encompasses two cases: (1) where there are distinct groups, each sharing the same cost matrix; (2) where different buyers possess varied costs, but share a common random cost structure.
\subsection{Different Costs in Different Groups}
There are $K$ groups of buyers, and buyers from group $k$ share the cost matrix $A_k$. When the cost matrix $A_k$ is known by the seller, the strategic pricing policy aligns with Algorithm 1. If the cost matrix $A_k$ is unknown, an estimation approach similar to Algorithm 2 can be employed, provided that the true group status is known to the seller. 
%We assume that the true group status is known to the seller. When the cost matrix $A_k$ is known by the seller, the strategic pricing policy aligns with Algorithm 1 in our paper. In cases where the cost matrix $A_k$ is unknown to the seller, we employ an estimation approach similar to Algorithm 2. 
Specifically, during the exploration phase, buyers disclose their true features and group status. In the exploitation phase, buyers reveal manipulated features. We can estimate the unknown $A_k$ by matching the true and manipulated features using the method outlined in Section 4.2 of our paper.
         
         We conduct some experiments to validate the effectiveness of our policy under varying marginal costs for different buyer groups. Consider two buyer groups and set the cost matrix for group 1 as $A_1=\begin{pmatrix}
1/4 & 1/8 \\
1/8 & 1/4
\end{pmatrix}$ and for group 2 as $A_2=1/2\begin{pmatrix}
1/4 & 1/8 \\
1/8 & 1/4
\end{pmatrix}$.  Other settings are the same with those in Section \ref{6.1.1}.
Figure \ref{fig001} illustrates the results of our pricing policies alongside the non-strategic policy, demonstrating superior performance when faced with different buyer groups with varying marginal costs.
 \begin{figure}[h!]
    \centering
    \begin{tabular}{cc}  
        \includegraphics[scale = 0.70]{./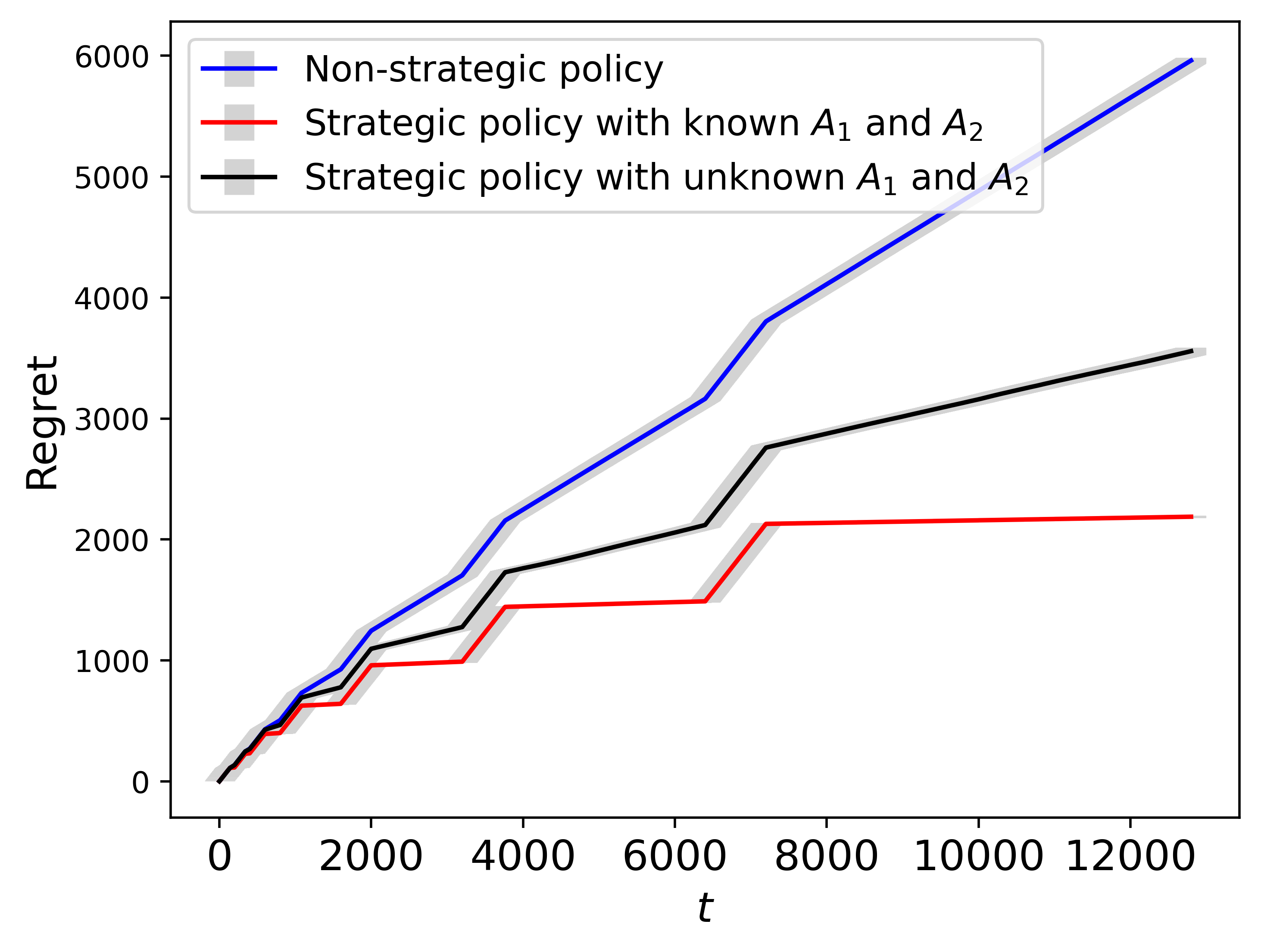}
    \end{tabular}
     \caption{Regret plots for the three policies with two buyer groups.}
         \label{fig001}
\end{figure}
         
\subsection{Random Cost}
The cost matrix adopts a random structure represented as $A_t=A_0+\epsilon_t$, where $A_0$ is unknown to the seller and $\mathbb{E}\epsilon_t=\boldsymbol{0}$. To address the unknown $A_0$, we leverage the approach outlined in Algorithm 2.  In the exploration phase, the seller gathers true features, while in the exploitation phase, buyers disclose manipulated features. Through the alignment of true and manipulated features, we estimate the unknown $A_0$ using the methodology detailed in Section 4.2 of our paper.
         
To assess the efficacy of our policy in the presence of random marginal costs, we conduct experiments by setting $A_0=\begin{pmatrix}
1/4 & 1/8 \\
1/8 & 1/4
\end{pmatrix}$ and $\epsilon_t\sim \begin{pmatrix}
N(0, 0.01) & 0 \\
0 & N(0, 0.01)
\end{pmatrix}$. We set $\tau=0.1\%$. Other settings are the same with those in Section \ref{6.1.1}. Figure \ref{fig002} depicts the results of our pricing policies and the non-strategic policy, illustrating the superior performance of our policies in scenarios where the marginal cost is random.
    \begin{figure}[h!]
    \centering
    \begin{tabular}{cc}  
        \includegraphics[scale = 0.70]{./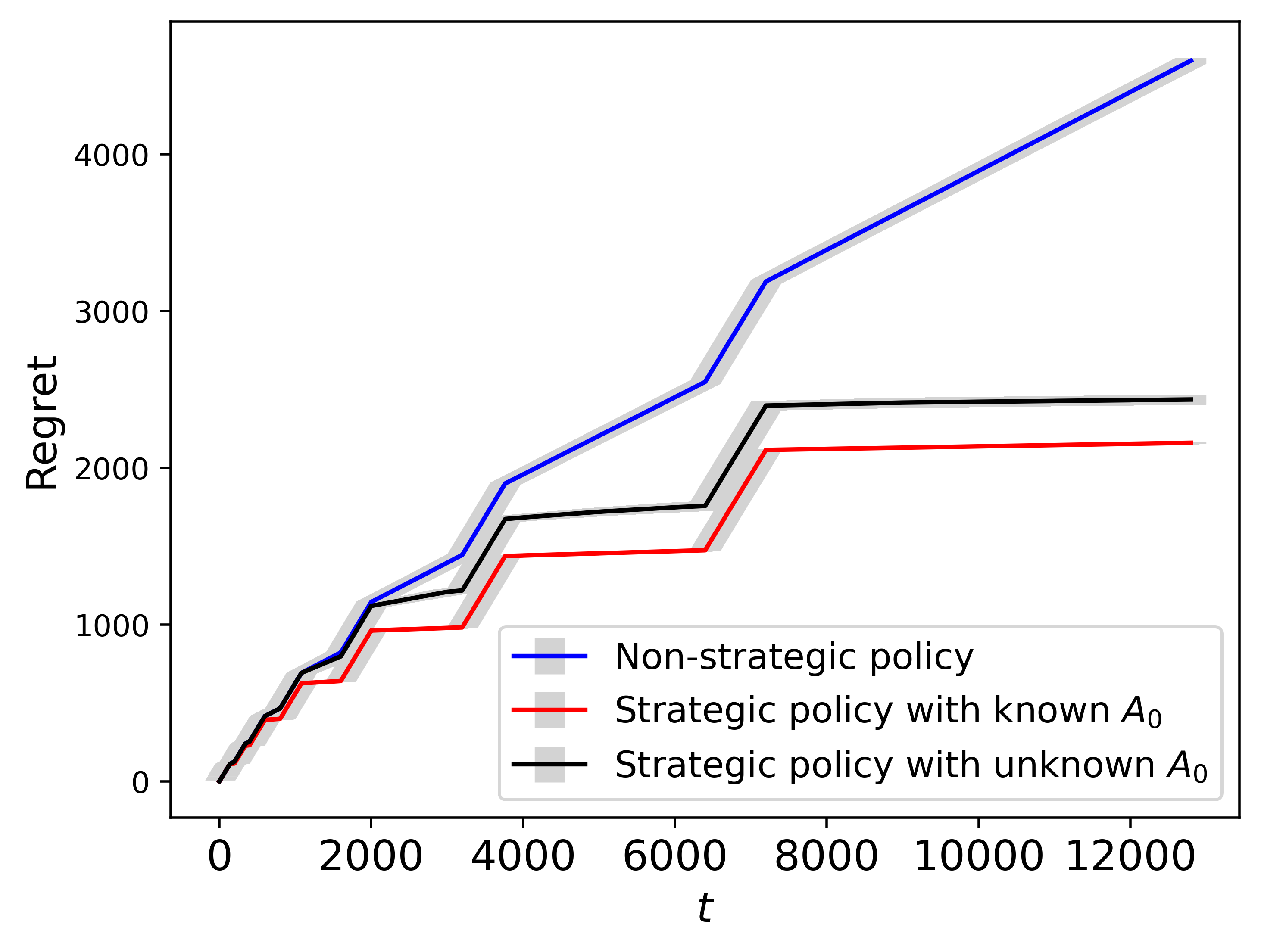}
    \end{tabular}
     \caption{Regret plots for the three policies with $A_t=A_0+\epsilon_t$.}
         \label{fig002}
\end{figure}

\section{Detection of Feature Manipulation in Real Life}\label{fm}
Feature manipulation is prevalent in real life. We first present several examples to illustrate its existence in our auto loan real application.
     \begin{itemize}
         \item As indicated in the article "The Basics of Loan Fraud and How To Prevent It"\footnote{\href{https://fingerprint.com/blog/what-is-loan-fraud/}{https://fingerprint.com/blog/what-is-loan-fraud/}}, one of the most common forms of loan fraud is application fraud, which involves falsely applying for a loan by providing inaccurate or incomplete information on an application form. This includes providing false employment history or exaggerating the income level. The National Mortgage Application Fraud Risk Index increased by 15\% between the 2021 first quarter and the first quarter of 2022.
         \item The internet makes it easy to create seemingly legitimate documents that support auto loan fraud\footnote{\href{https://defisolutions.com/answers-old/growing-threat-fraud-auto-loan-origination/}{https://defisolutions.com/answers-old/growing-threat-fraud-auto-loan-origination/}}. Various online services assist fraudsters in fabricating income statements, often exaggerating figures in anticipation of extravagant purchases. Some websites help fraudster create a fake paystub, ``recommending the type of statement, income, monthly, or weekly pay ranges based upon the supposed occupation and location. The goal is to make the resultant paystub appear as authentic as possible". The number of submitting fake pay stubs or overstating incomes is increasing\footnote{\href{https://www.forbes.com/sites/edgarsten/2023/07/21/fake-paystubs-overstating-income-bank-pullouts-plague-auto-financing/?sh=33d9ba716977}{https://www.forbes.com/sites/edgarsten/2023/07/21/fake-paystubs-overstating-income-bank-pullouts-plague-auto-financing/?sh=33d9ba716977}}.
         
     \end{itemize}

      Secondly, given our reliance on a publicly available dataset for analysis, direct verification of feature manipulation in our real data is unfeasible. Here, we provide some existing methods to detect feature manipulation. Many methods have been developed to detect loan fraud \citep{Jerzy2021,Jiang2021,Khaled2021,Ali2022,Gu2022,Liao2022}, including supervised, unsupervised, semi-supervised methods and graph-based methods \citep{Waleed}. The traditional method is to develop some supervised models using some datasets containing customers' information and labels (fraud or not) for loan fraud detecting. In the loan pricing process, we start by gathering requisite data and employing existing methods to detect customers whose features may have been manipulated. Subsequently, we implement our proposed pricing policy. 
     
    Next, we review one method for detecting feature manipulation in detail. \cite{Liao2022} provided a supervised learning method to detect the false information in loan application. This study focused on four common types of fake information, including fake occupation information, fake ability information, fake marriage information, fake contact information. More specifically, the information contains working units, monthly income, driving ability, their spouse's basic information and working information, contact information, etc. To verify whether the information is fake, three methods are applied: phone review, home visits, and third-party data verification. For instance, if the applicant claims that he/she serves as a salesman in an electrical appliance store, the company may ask the applicant what brands of refrigerators they have and then make a judgment based on the applicant's reply, reaction, and even tone. Each observation containing fake information is labelled with the type of fake information.  After obtaining the data with labels, a logistic model is applied to detect the fake information. 
    
\section{Discussion on $O(\sqrt{T})$ Upper Regret Bound} \label{dub}
In this section, we give an outline of the proof for the $\sqrt{T}$ upper regret bound of our proposed policy. The detailed proof is presented in Section \ref{pof2}. We let $r_t(p)=p[1-F(p-\boldsymbol{\theta}_0^\top \boldsymbol{x}_t^0)]$ be the expected revenue. We define the filtration generated by all transaction records up to time $t$ as $ 
\mathcal{H}_{t}=\sigma(\boldsymbol{x}_1^0, \boldsymbol{x}_2^0, \cdots, \boldsymbol{x}_t^0, z_1, z_2, \cdots, z_t)$. We also define $\Tilde{\mathcal{H}}_{t}=\mathcal{H}_{t}\cup \{\boldsymbol{x}_{t+1}^0\}$ as the filtration obtained after augmenting by the new feature $\boldsymbol{x}_{t+1}^0$. We define the regret at time $t$ as $R_t=p_t^*\mathbb{I}(v_t\geq p_t^*)-p_t\mathbb{I}(v_t\geq p_t)$. Then the conditional expectation of the regret at time $t$ given previous information and $\boldsymbol{x}_t^0$ is
\begin{align}\label{0R}
\mathbb{E}(R_t|\Tilde{\mathcal{H}}_{t-1})&=\mathbb{E}[p_t^*\mathbb{I}(v_t\geq p_t^*)-p_t\mathbb{I}(v_t\geq p_t)|\Tilde{\mathcal{H}}_{t-1}]\nonumber \\
&=p_t^*[1-F(p_t^*-\boldsymbol{\theta}_0^\top \boldsymbol{x}_t^0)]-p_t[1-F(p_t-\boldsymbol{\theta}_0^\top \boldsymbol{x}_t^0)]\nonumber \\
&=r_t(p_t^*)-r_t(p_t).
\end{align}
Note that $p_t^*\in \mathop{\arg\max}_pr_t(p)$ and hence we have $r'_t(p_t^*)=0$, which is the key point why the accuracy is of the order $O(1/(\alpha T))$ \citep{Javanmard2019,Xu2021}. This special structure does not hold in traditional bandit algorithms. The special structure $r'_t(p_t^*)=0$ of the dynamic pricing problem leads to a better regret order.

Using the Taylor expansion, we have
\begin{equation}\label{0rt}
 r_t(p_t)=r_t(p_t^*)+\underbrace{r'_t(p_t^*)}_{0}(p_t-p_t^*)+\frac{1}{2}r''_t(\xi_t)(p_t-p_t^*)^2=r_t(p_t^*)+\frac{1}{2}r''_t(\xi_t)(p_t-p_t^*)^2,   
\end{equation}
where $\xi_t$ is some value between $p_t$ and $p_t^*$. The key point is that the term $(p_t-p_t^*)$ in \eqref{0rt} is removed because $r'_t(p_t^*)=0$. By Assumptions \ref{a3} and \ref{a1}, we have 
\begin{equation}\label{0r2}
    \begin{aligned}
     |r''_t(\xi_t)|&=|2f(\xi_t-\boldsymbol{\theta}_0^\top \boldsymbol{x}_t^0)+\xi_tf'(\xi_t-\boldsymbol{\theta}_0^\top \boldsymbol{x}_t^0)|
     \leq 2M_f+BM_{f'}.   
    \end{aligned}
\end{equation}
Now we can obtain an upper bound on the conditional expectation of the regret at time $t$ given $\Tilde{\mathcal{H}}_{t-1}$.
By (\ref{0R}), (\ref{0rt}) and (\ref{0r2}), we have
\begin{equation*}
 \mathbb{E}(R_t|\Tilde{\mathcal{H}}_{t-1})\leq \left(M_f+\frac{B}{2}M_{f'}\right)\mathbb{E}(p_t^*-p_t)^2.   
\end{equation*}
Then by \eqref{pbound}, \eqref{ej1} and \eqref{J2} in our supplementary materials, we can prove that $\mathbb{E}(p_t^*-p_t)^2$ can be bounded by $O(\mathbb{E}\|\boldsymbol{\theta}_0-\hat{\boldsymbol{\theta}}\|_2^2)$ not $O(\mathbb{E}\|\boldsymbol{\theta}_0-\hat{\boldsymbol{\theta}}\|_2)$ in traditional bandit cases.

\section{Future Directions}\label{sec7}

In this paper, we propose new strategic dynamic pricing policies for the contextual pricing problem with strategic buyers. We establish a sublinear $O(\sqrt{T})$ regret bound for the proposed policy, improving the $\Omega(T)$ regret lower bound of existing non-strategic pricing policies.

There are several promising avenues for future exploration. Firstly, we can examine the strategic dynamic pricing problem under an unknown noise distribution $F(\cdot)$. One possible solution is to incorporate the method of estimating $F(\cdot)$ proposed in \cite{Fan2022} into our policy. %and analyze the regret rate. 
Secondly, we can study the pricing problem of strategic buyers by incorporating fairness-oriented policy \citep{chen2021fairnessaware,Fang2022}. Thirdly, we can explore the strategic pricing problem with censored demand \citep{qi2022offline}, unobserved confounding \citep{qi2023proximal}, high-dimensional features \citep{Hao2020,Shi2021,zhao2023highdimensional}, or adversarial setting \citep{Cohen2020, Xu2021, Xu2022}. Finally, when the feature distribution is non-stationary, we can explore the dynamic pricing problem with more general reinforcement learning settings \citep{zhu2015reinforcement,Gen2022,Shi2022,Hambly2023}.

\section{Sensitivity Tests}\label{sec6.2}
In this section, we investigate the sensitivity of our policies to the hyperparameters $B$, $\ell_0$, and $C_a$. Here, $B$ represents an upper bound on the price, $\ell_0$ denotes the minimum episode length, and $C_a$ is a constant used in determining the length of the exploration phase. To assess the sensitivity of our policies, we conduct simulations with different values of these hyperparameters while keeping $A=A_0$ and $\tau=0.05\%$ fixed. \par
First, we examine the sensitivity of $B$. For these simulations, we set $\ell_0=200$ and $C_a=100$. Figure \ref{fig4} illustrates the regrets of the three policies under three scenarios: $B=6$, $B=7$, and $B=8$. The figure demonstrates that the comparison results remain robust across different choices of $B$.\par
\begin{figure}[t!]
    \centering
    \begin{tabular}{ccc}  
        \includegraphics[scale = 0.3420]{./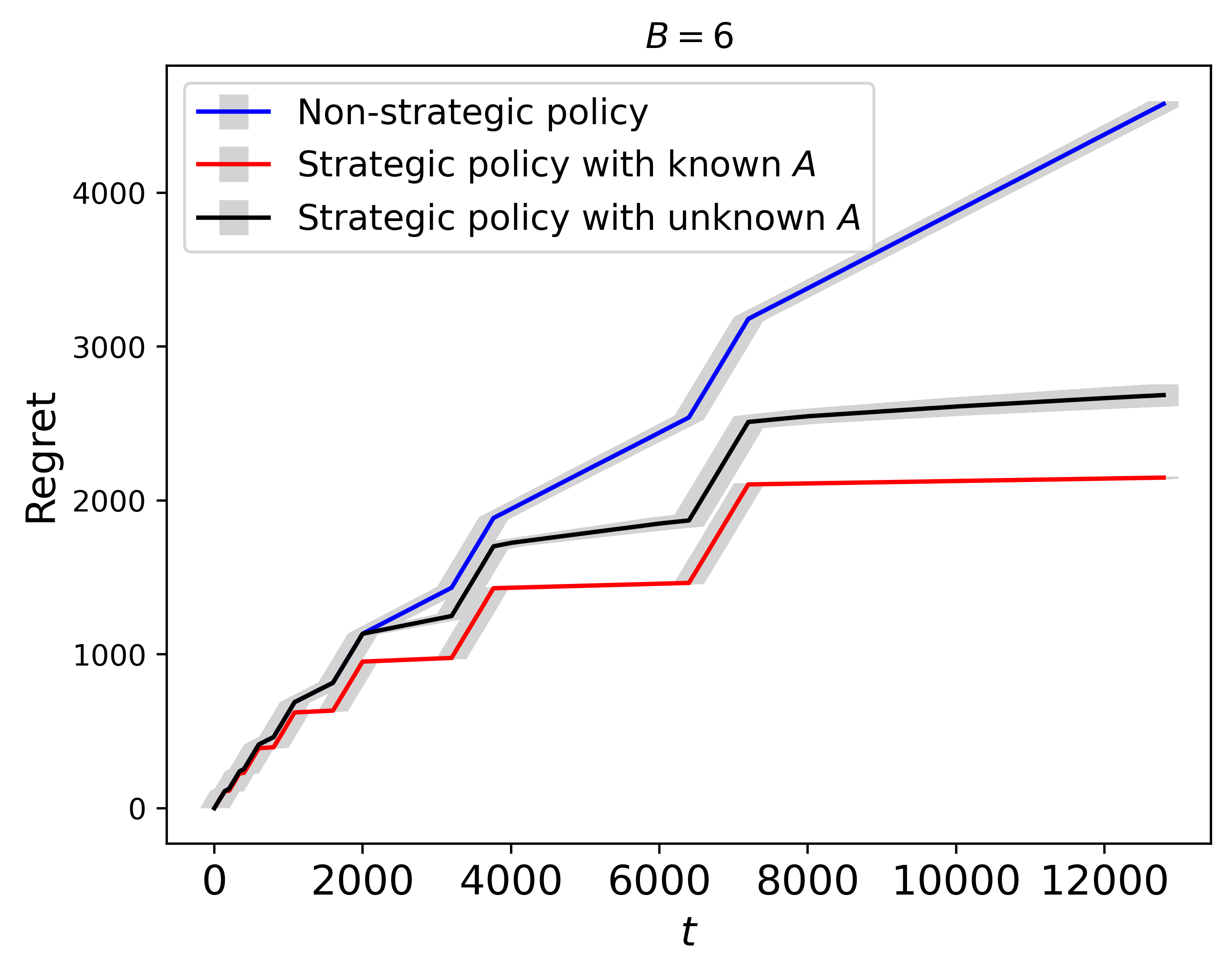}&
        \includegraphics[scale = 0.3420]{./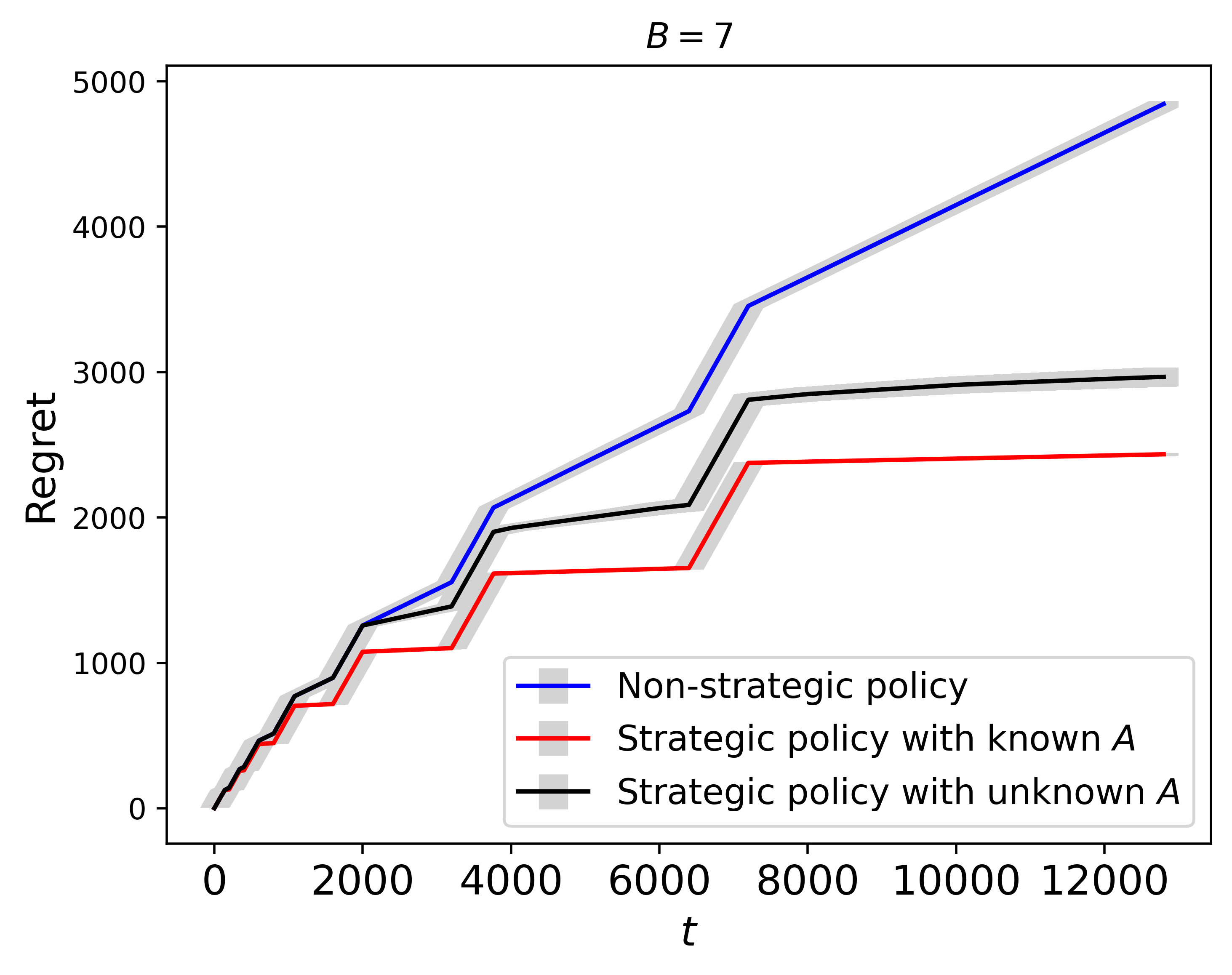}&
        \includegraphics[scale = 0.3420]{./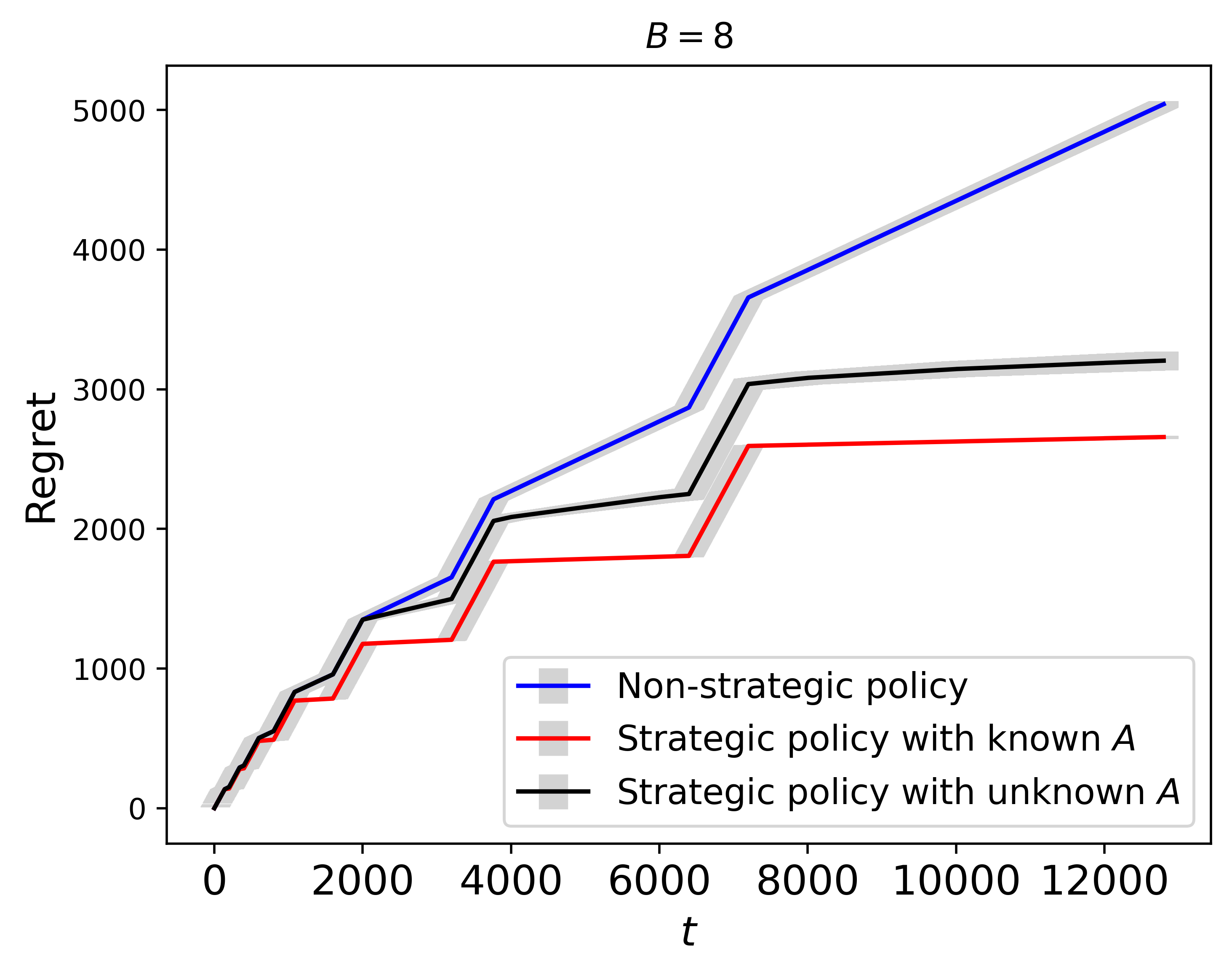}
    \end{tabular}
     \caption{Regret plots for the three policies. The three subplots show the regrets of three different scenarios, $B\in \{6,7,8\}$. The remaining caption is the same as Figure \ref{fig1}.}
         \label{fig4}
\end{figure}
Next, we evaluate the sensitivity of $\ell_0$. In these simulations, we set $B=6$ and $C_a=100$. Figure \ref{fig5} displays the regrets of the three policies for three different scenarios: $\ell_0=100$, $\ell_0=150$, and $\ell_0=200$. The figure shows that the comparison results remain consistent across different choices of $\ell_0$. \par
\begin{figure}[t!]
    \centering
    \begin{tabular}{ccc}  
        \includegraphics[scale = 0.3420]{./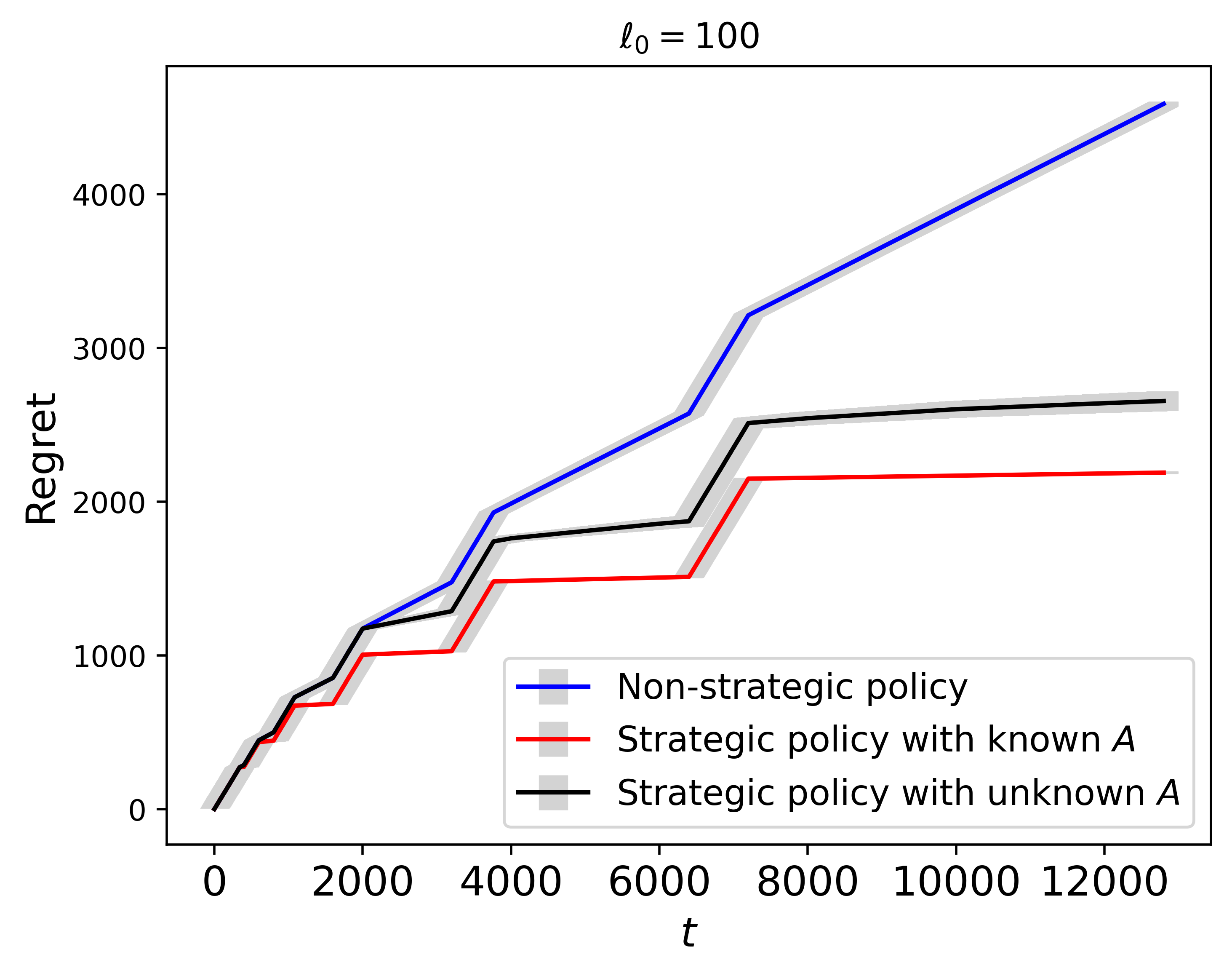}&
        \includegraphics[scale = 0.3420]{./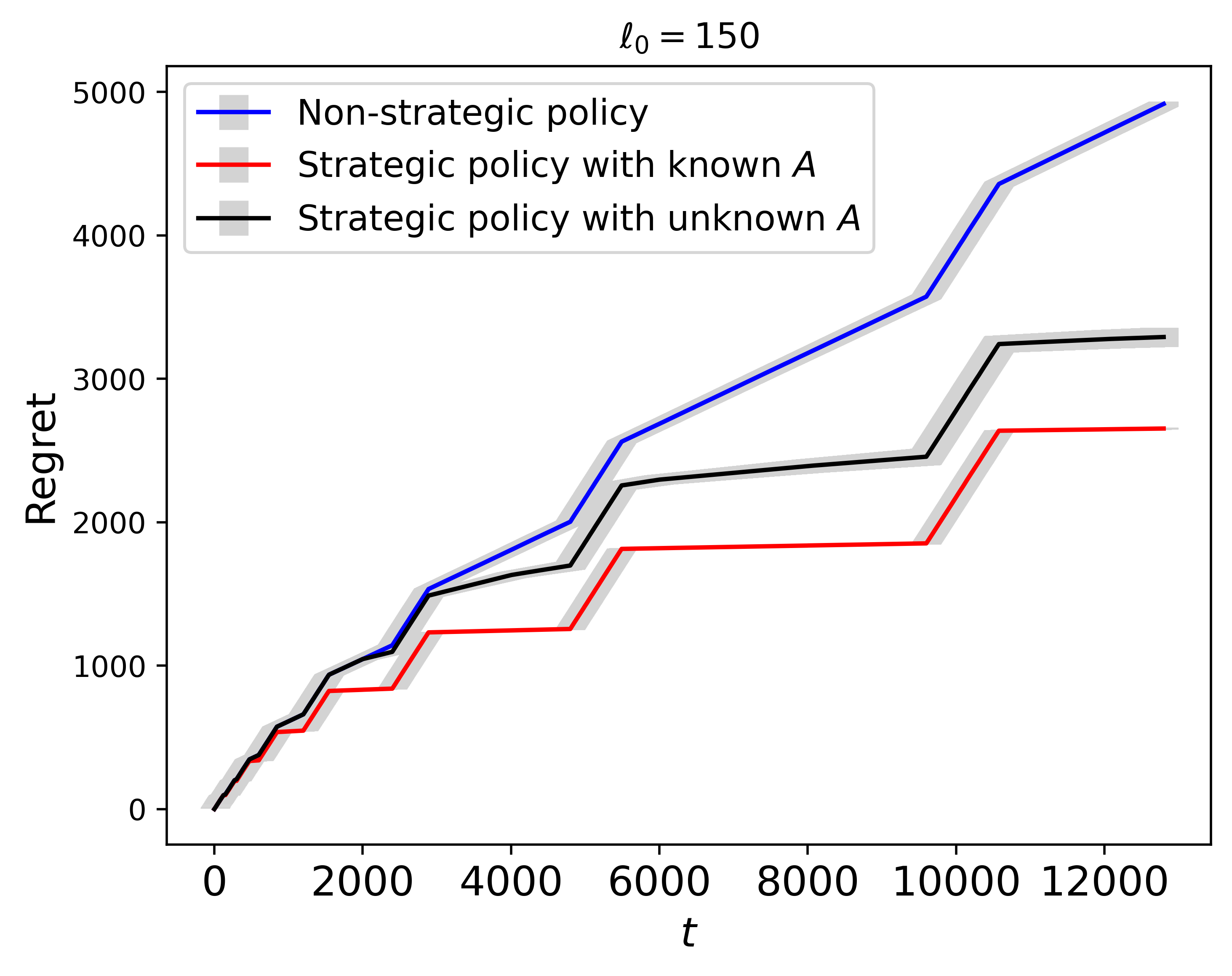}&
        \includegraphics[scale = 0.3420]{./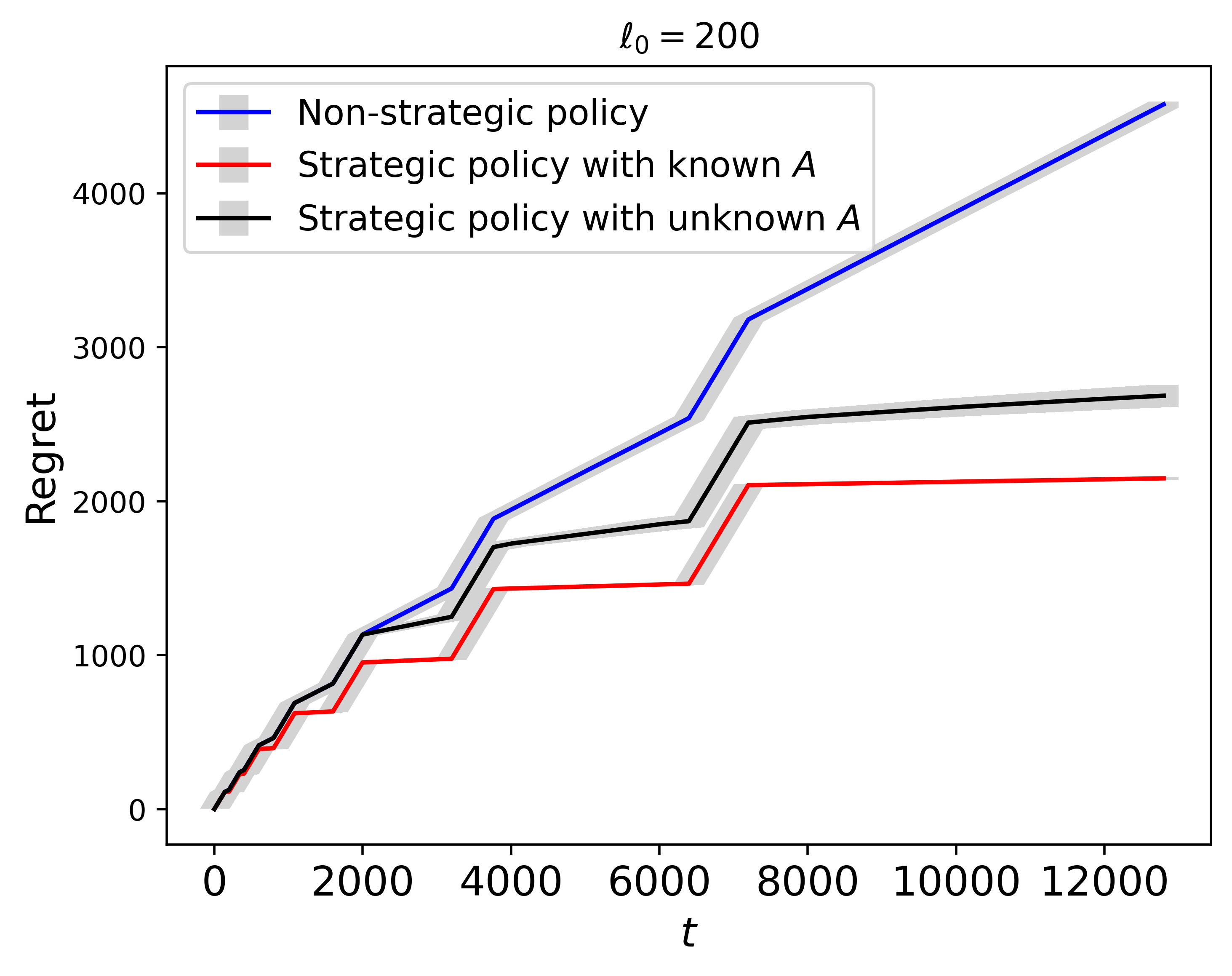}
    \end{tabular}
     \caption{Regret plots for the three policies. The three subplots show the regrets of three different scenarios, $\ell_0\in\{100,150,200\}$. }
         \label{fig5}
\end{figure}
Finally, we assess the sensitivity of $C_a$. In these simulations, we set $B=6$ and $\ell_0=100$. Figure \ref{fig6} presents the regrets of the three policies for three different scenarios: $C_a=50$, $C_a=100$, and $C_a=150$. The figure demonstrates that the comparison results are robust across different choices of $C_a$.  \par 
\begin{figure}[t!]
    \centering
    \begin{tabular}{ccc}  
        \includegraphics[scale = 0.3420]{./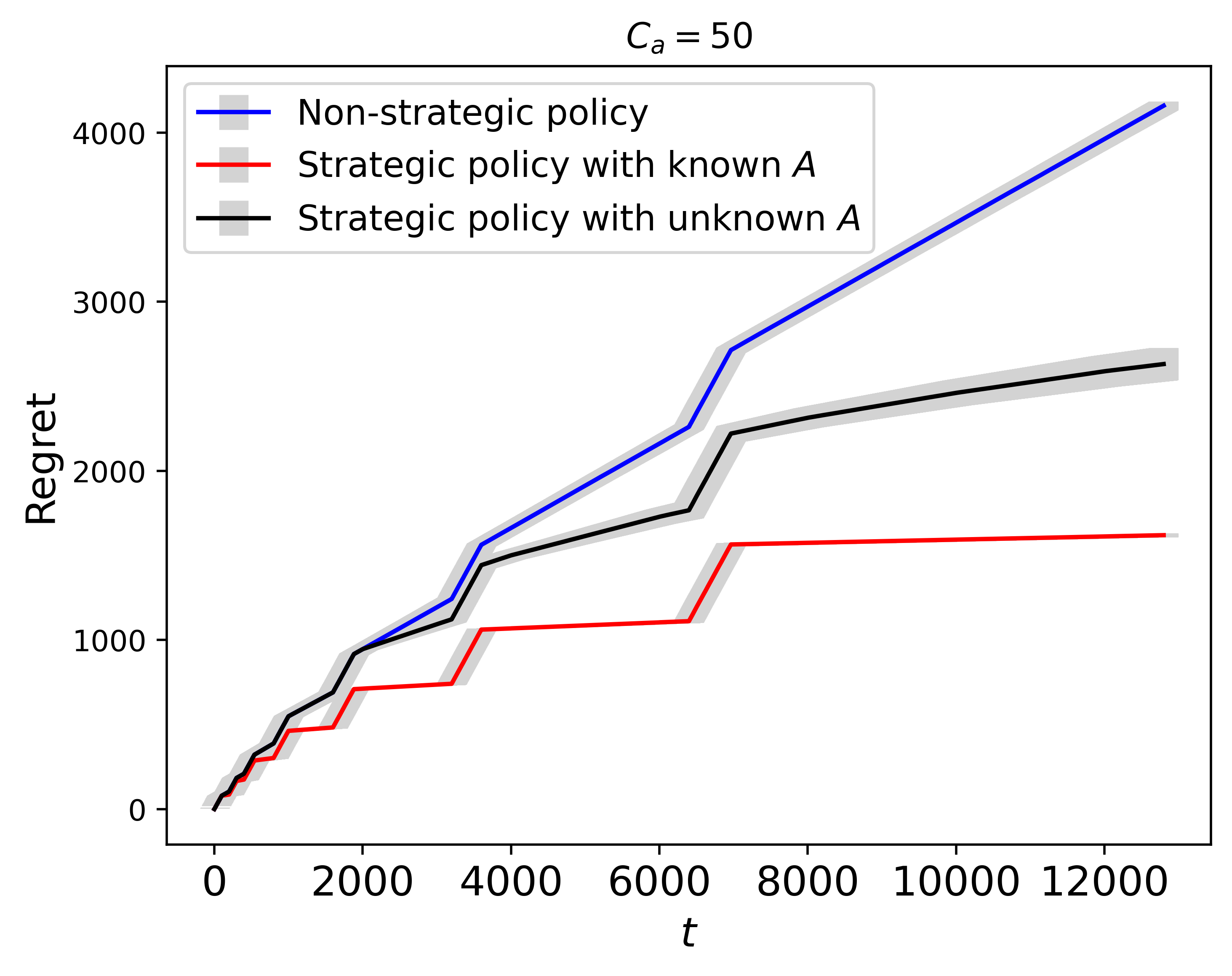}&
        \includegraphics[scale = 0.3420]{./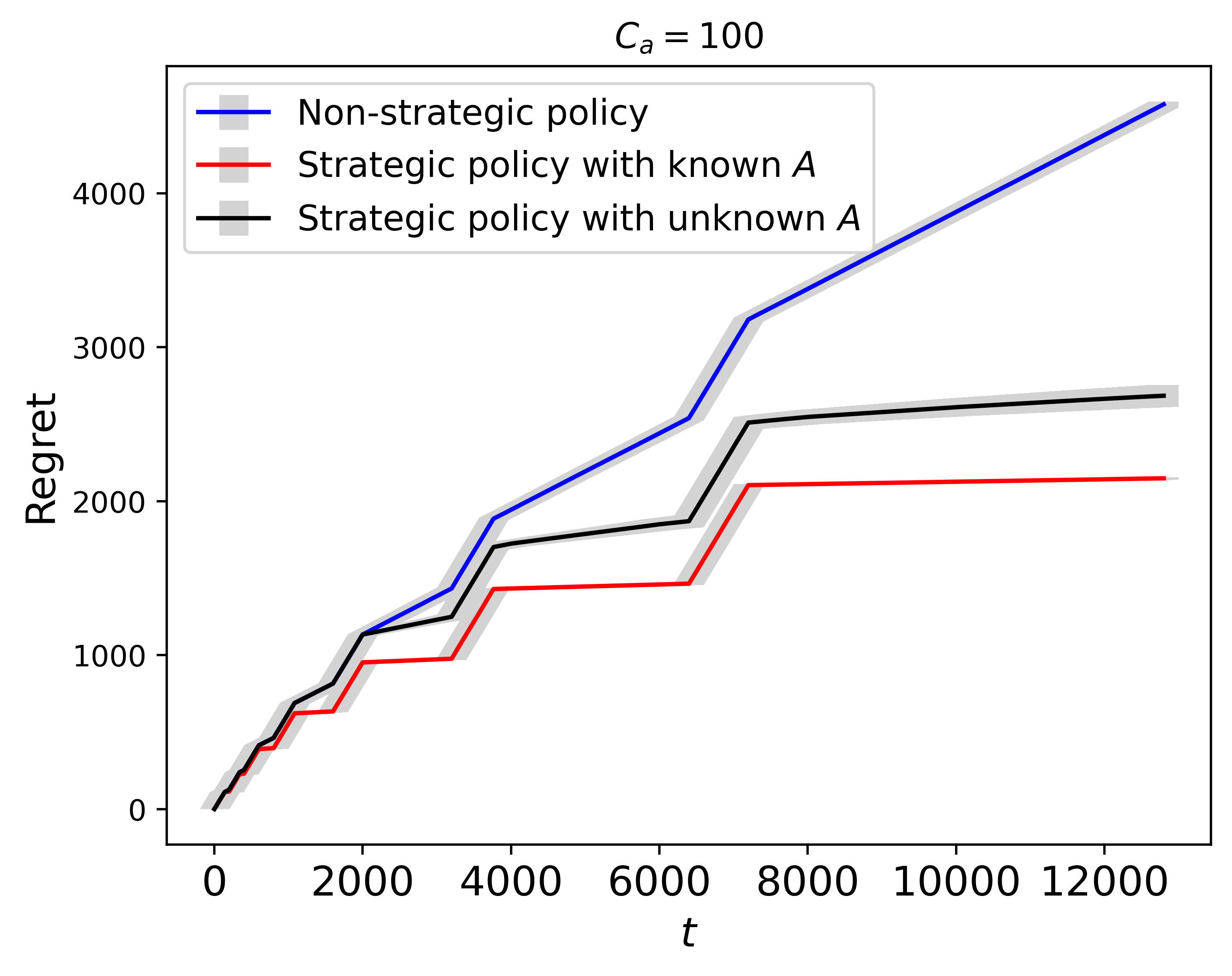}&
        \includegraphics[scale = 0.3420]{./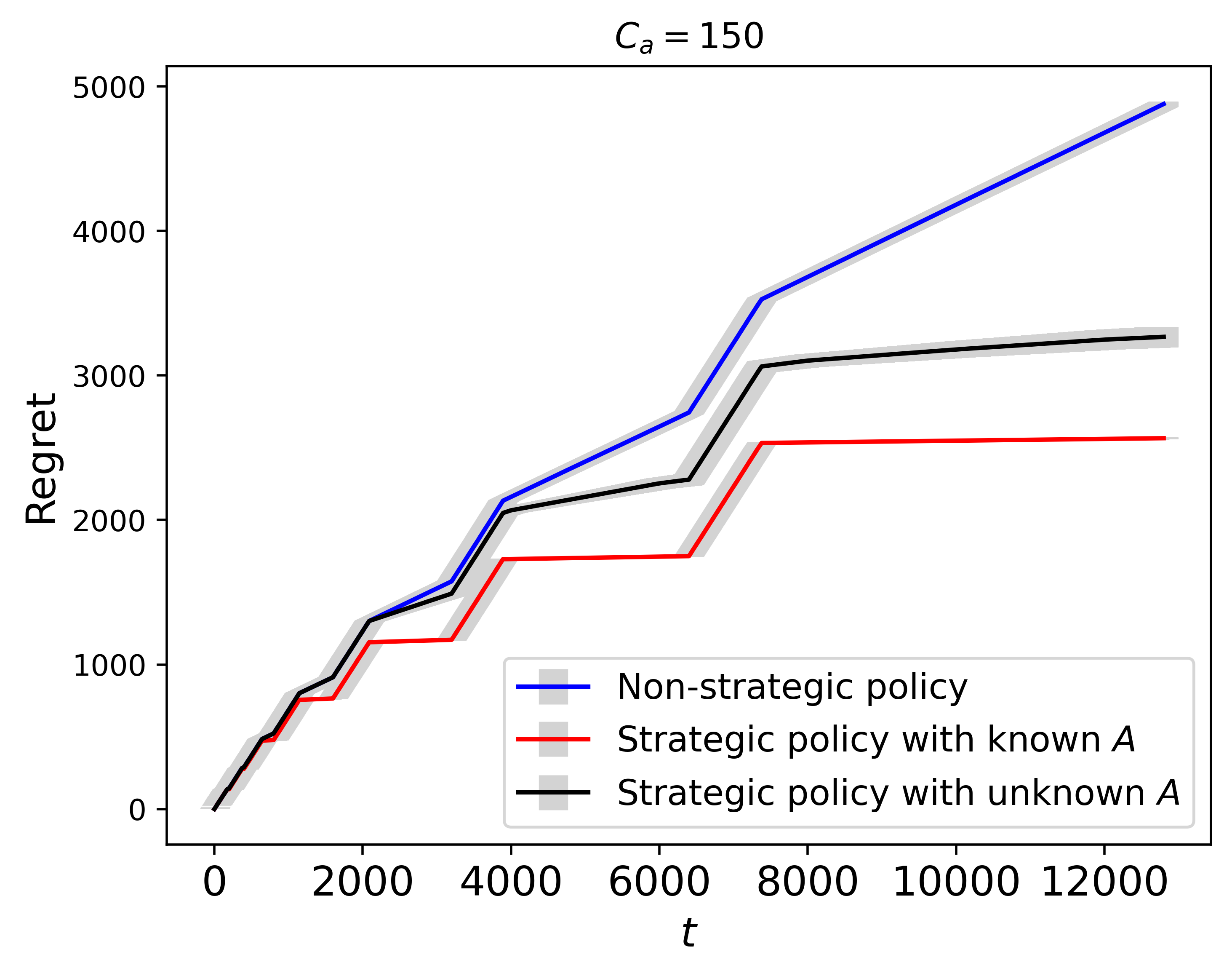}
    \end{tabular}
     \caption{Regret plots for the three policies. The three subplots show the regrets of three different scenarios, $C_a\in \{50,100,150\}$.}
         \label{fig6}
\end{figure}
Overall, our sensitivity analysis indicates that the performance of our policies remains consistent and robust under variations in the hyperparameters $B$, $\ell_0$, and $C_a$. 

\section{Additional Related Literature} \label{stra}
\textbf{Timing and Untruthful Bidding in Pricing and Auction Design.}
Existing strategic work mainly focused on timing and untruthful bidding in pricing and auction design. Timing refers to the time of purchases. In this setting, the buyers are forward looking and time-sensitive, and the strategy for these buyers is choosing the time of purchasing. The private valuations of buyers decay over time and buyers incur monitoring costs. The buyers strategize about the time of purchases to maximize the utility \citep{Chen2018}. In addition, untruthful bidding appears in repeated auctions. In auctions, the strategy used by the buyers is lie, which happens if the buyer accepts the price while the price offered is above his valuation, or when he rejects the price while his valuation is above the offered price \citep{Amin2014, Mohri2015,  Chen2022}. In the contextual auction literature, both the seller and buyers are able to observe the true features \citep{Golrezaei2023}. While the strategic behaviors of timing and untruthful bidding have received considerable attention, the manipulation of features in pricing setting, particularly in the online dynamic pricing setting, has remained relatively unexplored. By including this strategic behavior, our work enriches the understanding of strategic behaviors in dynamic pricing problems, providing a comprehensive framework for considering buyer manipulation in pricing decisions.

%%%%%%%%%%%%%%%
\section{Proof under Non-strategic Pricing Policy}\label{s2}
\subsection{Proof of Theorem \ref{theory3}}
The regret (\ref{bench}) is defined as the maximum gap between a policy and the oracle policy over different $\boldsymbol{\theta}_0\in \Theta$ and $\mathbb{P}_X\in Q(\mathcal{X})$. In order to obtain a lower bound on the regret, it suffices to consider a specific distribution in $Q(\mathcal{X})$. We consider the distribution $F$ as the uniform distribution on (-1/2, 1/2). The marginal cost matrix is $A=I$, and $\|\boldsymbol{\beta}_0\|_1= 1, B=7/16,  \|\tilde{\boldsymbol{x}}_t^0\|_{2}\leq 1/4$. \par 
In order to bound the total regret, we try to bound the regret at each time $t$. The expected revenue at time $t$ during the exploitation phase is
\begin{align*}
r_t(p)&=p[1-F(p-\boldsymbol{\theta}_0^\top \boldsymbol{x}_t^0)]=p \left (1-p+\boldsymbol{\theta}_0^\top \boldsymbol{x}_t^0-\frac{1}{2}\right)=\frac{p}{2}-p^2+p\boldsymbol{\theta}_0^\top \boldsymbol{x}_t^0.
\end{align*}
By the first-order derivative $\frac{dr_t(p)}{dp}=\frac{1}{2}-2p+\boldsymbol{\theta}_0^\top \boldsymbol{x}_t^0=0$, the oracle price is 
$$p^*_t=\frac{1}{4}+\frac{\boldsymbol{\theta}_0^\top \boldsymbol{x}_t^0}{2}:=g(\boldsymbol{\theta}_0^\top \boldsymbol{x}_t^0).$$
Therefore, the expected revenue at time $t$ by the oracle pricing policy is 
\begin{equation}\label{rstar}
r_t(p_t^*)=0.0625+0.25\boldsymbol{\theta}_0^\top \boldsymbol{x}_t^0+0.25(\boldsymbol{\theta}_0^\top \boldsymbol{x}_t^0)^2.    
\end{equation}
We first analyze the regret during the exploration phase. Since the non-strategic price $p_t$ is randomly chosen from the distribution Unif(0, 7/16), The expected revenue at time $t$ using non-strategic pricing policy is
\begin{equation}\label{rn}
\frac{16}{7}\int_{0}^{7/16}\bigg(\frac{p}{2}-p^2+p\boldsymbol{\theta}_0^\top \boldsymbol{x}_t^0\bigg)dp=0.0456+0.2188\boldsymbol{\theta}_0^\top \boldsymbol{x}_t^0.  
\end{equation}
By (\ref{rstar}) and (\ref{rn}), the expected regret at time $t$ during the exploration phase is
\begin{equation}\label{regn1}
\mathbb{E}(R_t)>0.016.    
\end{equation}
Now, we analyze the regret during the exploitation phase. By (\ref{xt}), the manipulated feature is
\begin{equation}\label{xt1}
\tilde{\boldsymbol{x}}_t=\tilde{\boldsymbol{x}}_t^0-A^{-1}\boldsymbol{\beta}_0g'(\boldsymbol{\theta}_0^\top \boldsymbol{x}_t^0)=\tilde{\boldsymbol{x}}_t^0-\frac{\boldsymbol{\beta}_0}{2}.   
\end{equation}
Assume that $t$ is in the $k$-th epoch, and $\hat{\theta}_k$ is the MLE of $\theta_0$. The non-strategic pricing policy is $p_t=\frac{1}{4}+\frac{\hat{\boldsymbol{\theta}}_k^\top \boldsymbol{x}_t}{2}$. The difference of the expected revenues between the oracle policy and the non-strategic pricing policy is
\begin{equation}\label{r}
\begin{aligned}
r_t(p^*_t)-r_t(p_t)&=\frac{p^*_t}{2}-p^{*2}_t+p^*_t\boldsymbol{\theta}_0^\top \boldsymbol{x}_t^0-\left(\frac{p_t}{2}-p_t^2+p_t\boldsymbol{\theta}_0^\top \boldsymbol{x}_t^0\right)\\
&=\left (\frac{1}{2}+\boldsymbol{\theta}_0^\top \boldsymbol{x}_t^0\right)(p^*_t-p_t)-(p_t^*-p_t)(p_t^*+p_t)\\
&=\left (\frac{1}{2}+\boldsymbol{\theta}_0^\top \boldsymbol{x}_t^0-(p_t^*+p_t)\right)(p_t^*-p_t)\\
&= \frac{(\boldsymbol{\theta}_0^\top \boldsymbol{x}_t^0-\hat{\boldsymbol{\theta}}_k^\top \boldsymbol{x}_t)^2}{4}\\
&=\frac{[\boldsymbol{\alpha}_0+\boldsymbol{\beta}_0^\top (\tilde{\boldsymbol{x}}_t+\boldsymbol{\beta}_0/2)- \hat{\boldsymbol{\theta}}_k^\top \boldsymbol{x}_t]^2}{4}\\
&=\frac{[\boldsymbol{\alpha}_0+\boldsymbol{\beta}_0^\top \tilde{\boldsymbol{x}}_t- \hat{\boldsymbol{\theta}}_k^\top \boldsymbol{x}_t+\boldsymbol{\beta}_0^{\top} \boldsymbol{\beta}_0/2 ]^2}{4}\\
&\geq \frac{1}{4}\left\{[(\boldsymbol{\theta}_0-\hat{\boldsymbol{\theta}}_k)^\top \boldsymbol{x}_t]^2+\frac{(\boldsymbol{\beta}_0^{\top} \boldsymbol{\beta}_0)^2}{4}-|(\boldsymbol{\theta}_0-\hat{\boldsymbol{\theta}}_k)^\top \boldsymbol{x}_t\boldsymbol{\beta}_0^\top\boldsymbol{\beta}_0|\right\}\\
&\geq \frac{1}{4}\left[\frac{(\boldsymbol{\beta}_0^{\top} \boldsymbol{\beta}_0)^2}{4}-|(\boldsymbol{\theta}_0-\hat{\boldsymbol{\theta}}_k)^\top \boldsymbol{x}_t\boldsymbol{\beta}_0^\top\boldsymbol{\beta}_0|\right]\\
&:=\frac{1}{4}(J_3-J_4).
\end{aligned}
\end{equation}
We need to find a lower bound of $J_3-J_4$. We fist analyze $J_3$. For $J_3$, we have
\begin{equation}\label{j1}
\mathbb{E}(J_3)=\frac{(\boldsymbol{\beta}_0^{\top} \boldsymbol{\beta}_0)^2}{4}= \frac{1}{4}.
\end{equation}
Now, we analyze $J_4$. By (\ref{xt1}), we have $\|\tilde{\boldsymbol{x}}_t\|\leq \|\tilde{\boldsymbol{x}}_t^0 \|_2+\|\boldsymbol{\beta}_0\|_2/2\leq 3/4$. Therefore, 
\begin{equation}
\begin{aligned}
\mathbb{E}(J_4) &= \mathbb{E}|(\boldsymbol{\theta}_0-\hat{\boldsymbol{\theta}}_k)^\top \boldsymbol{x}_t\boldsymbol{\beta}_0^\top\boldsymbol{\beta}_0| \\
&\leq \frac{3}{4}\mathbb{E}\|(\boldsymbol{\theta}_0-\hat{\boldsymbol{\theta}}_k)\|_2 \\
&\leq \frac{3}{4}\sqrt{\frac{2(d+1)C_{up}^2}{C^2_{down}\lambda_{min}(a_k+1)}}\\
&\leq \frac{3}{4}\sqrt{\frac{2(d+1)C_{up}^2}{C^2_{down}\lambda_{min}\sqrt{C_a \ell_k}}}
\end{aligned}    
\end{equation}
Let $0<\epsilon<0.064$ be a fixed number.
When 
\begin{align*}
\ell_k>\frac{324(d+1)^2C_{up}^4}{C_aC^4_{down}\lambda_{min}^2(1-4\epsilon)^4},
\end{align*}
we have
$\mathbb{E}(J_3-J_4)>\epsilon$. Therefore, he expected regret at time $t$ during the exploitation phase of the episode $k$ is
\begin{equation}\label{regn2}
\mathbb{E}(R_t)=\mathbb{E}\mathbb{E}(R_t|\tilde{\mathcal{H}}_{t-1})=\mathbb{E}[r_t(p_t^*)-r_t(p_t)]> \frac{\epsilon}{4}.
\end{equation}
By (\ref{regn1}) and (\ref{regn2}), the expected regrets at time $t$ during both the exploration and exploitation phases are larger than $\epsilon/4$.
Therefore, when $T>\frac{324(d+1)^2C_{up}^4}{C_aC^4_{down}\lambda_{min}^2(1-4\epsilon)^4}$, we have
$\sum_{t=1}^T\mathbb{E}(R_t)>\frac{\epsilon T}{4}.$

%%%%%%%%%%%%%%%%%%
\section{Proof under Strategic Pricing Policy with Known \texorpdfstring{$A$}{Lg}}\label{s1}
In this section, we first prove Lemma \ref{lemma11}, which provides an upper bound on the estimation error of the maximum likelihood estimator. This lemma serves as a crucial building block for the proof of Theorem \ref{theory2}. Once we establish Lemma \ref{lemma11}, we proceed to prove Theorem \ref{theory2}.
\subsection{Proof of Lemma \ref{lemma11}}
The proof of Lemma \ref{lemma11} is inspired by the proofs in \cite{Koren2015} and \cite{Xu2021}. We first define the log-likelihood function at time period $t$ as
\begin{equation}\label{likeli}
 l_t(\boldsymbol{\theta})=\mathbb{I}(y_t=1)\log (1-F(p_t-\boldsymbol{\theta}^\top \boldsymbol{x}_t^0))+\mathbb{I}(y_t=0)\log (F(p_t-\boldsymbol{\theta}^\top \boldsymbol{x}_t^0)).   
\end{equation}
Next, we define the expected log-likelihood function
$l^e(\boldsymbol{\theta})=\mathbb{E}[l_t(\boldsymbol{\theta})].$ Before proving Lemma \ref{lemma11}, we will first establish two lemmas: Lemma \ref{lemma1}, which provides a bound on the error of the expected log-likelihood function, and Lemma \ref{est1}, which deals with the likelihood error of the maximum likelihood estimator. These lemmas will serve as building blocks for the proof of Lemma \ref{lemma11}. \par
Now, we proceed with the presentation and  proof of Lemma \ref{lemma1}, where we present a lower bound for $l^e(\boldsymbol{\theta}_0)-l^e(\boldsymbol{\theta}), \forall \boldsymbol{\theta} \in \Theta$. 
\begin{lemma}\label{lemma1}
Under Assumptions \ref{a0}-\ref{a2}, we have 
\begin{equation}
l^e(\boldsymbol{\theta}_0)-l^e(\boldsymbol{\theta}) \geq \frac{\lambda_{min}}{2}C_{down}(\boldsymbol{\theta}-\boldsymbol{\theta}_0)^\top (\boldsymbol{\theta}-\boldsymbol{\theta}_0), \forall \boldsymbol{\theta} \in \Theta,
\end{equation}
where $C_{down}$ is defined in (\ref{cdown}), and $\lambda_{min}$ is the minimum eigenvalue of $\Sigma=\mathbb{E}[\boldsymbol{x}_t^0\boldsymbol{x}_t^{0\top}]$.
\end{lemma}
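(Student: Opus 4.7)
The plan is to use a second-order Taylor expansion of $l^e(\cdot)$ around $\boldsymbol{\theta}_0$ and show that the Hessian is uniformly negative definite. The starting point is to compute the expected log-likelihood explicitly. Since $\mathbb{E}[\mathbb{I}(y_t=1)\mid p_t,\boldsymbol{x}_t^0] = 1 - F(p_t - \boldsymbol{\theta}_0^\top \boldsymbol{x}_t^0)$, tower property gives
\begin{equation*}
l^e(\boldsymbol{\theta}) = \mathbb{E}\Big[\bigl(1 - F(p_t-\boldsymbol{\theta}_0^\top \boldsymbol{x}_t^0)\bigr)\log\bigl(1-F(p_t-\boldsymbol{\theta}^\top \boldsymbol{x}_t^0)\bigr) + F(p_t-\boldsymbol{\theta}_0^\top \boldsymbol{x}_t^0)\log F(p_t-\boldsymbol{\theta}^\top \boldsymbol{x}_t^0)\Big].
\end{equation*}

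First I would verify that $\nabla l^e(\boldsymbol{\theta}_0) = \bm{0}$. Differentiating with respect to $\boldsymbol{\theta}$ brings down the score $f/(1-F)\cdot \boldsymbol{x}_t^0$ from the first term and $-f/F\cdot \boldsymbol{x}_t^0$ from the second. Evaluated at $\boldsymbol{\theta}=\boldsymbol{\theta}_0$, the factors of $(1-F)$ and $F$ cancel against the denominators, and the two terms exactly cancel to zero. This is simply the standard fact that the true parameter maximizes the KL-projected likelihood.

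Next, I would compute the Hessian. A direct application of the chain rule yields
\begin{equation*}
\nabla^2 l^e(\boldsymbol{\theta}) = \mathbb{E}\Big[\bigl(1-F(p_t-\boldsymbol{\theta}_0^\top\boldsymbol{x}_t^0)\bigr)\bigl[\log(1-F)\bigr]''(p_t-\boldsymbol{\theta}^\top\boldsymbol{x}_t^0)\,\boldsymbol{x}_t^0\boldsymbol{x}_t^{0\top} + F(\cdot)\bigl[\log F\bigr]''(p_t-\boldsymbol{\theta}^\top\boldsymbol{x}_t^0)\,\boldsymbol{x}_t^0\boldsymbol{x}_t^{0\top}\Big],
\end{equation*}
where the chain-rule sign flip from $-\boldsymbol{x}_t^0$ vanishes upon squaring. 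Now the log-concavity in Assumption \ref{a1} makes both $[\log(1-F)]''$ and $[\log F]''$ nonpositive, and by the definition of $C_{down}$ in \eqref{cdown}, each is bounded above by $-C_{down}$ on $[-W,B]$. Since Assumptions \ref{a0} and \ref{a3} guarantee $p_t - \boldsymbol{\theta}^\top \boldsymbol{x}_t^0 \in [-W,B]$ for all $\boldsymbol{\theta}\in\Theta$ by Hölder's inequality, the convex combination (with weights $F$ and $1-F$ summing to one) of the two second derivatives is bounded above by $-C_{down}$. This yields the pointwise matrix inequality $\nabla^2 l^e(\boldsymbol{\theta}) \preceq -C_{down}\,\mathbb{E}[\boldsymbol{x}_t^0\boldsymbol{x}_t^{0\top}] = -C_{down}\,\Sigma \preceq -C_{down}\,\lambda_{\min} I$ by Assumption \ref{a2}.

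Finally I would apply the mean-value form of Taylor's theorem: for some $\tilde{\boldsymbol{\theta}}$ on the segment between $\boldsymbol{\theta}_0$ and $\boldsymbol{\theta}$,
\begin{equation*}
l^e(\boldsymbol{\theta}) - l^e(\boldsymbol{\theta}_0) = \nabla l^e(\boldsymbol{\theta}_0)^\top(\boldsymbol{\theta}-\boldsymbol{\theta}_0) + \tfrac{1}{2}(\boldsymbol{\theta}-\boldsymbol{\theta}_0)^\top \nabla^2 l^e(\tilde{\boldsymbol{\theta}})(\boldsymbol{\theta}-\boldsymbol{\theta}_0).
\end{equation*}
The first term vanishes by the score identity, and the second is at most $-\tfrac{1}{2}\lambda_{\min}C_{down}\|\boldsymbol{\theta}-\boldsymbol{\theta}_0\|_2^2$, which upon rearranging gives the claim. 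The main obstacle is really just the bookkeeping to confirm that the argument $p_t-\boldsymbol{\theta}^\top\boldsymbol{x}_t^0$ stays in the range $[-W,B]$ where the uniform bound $C_{down}$ is valid, and to carry the chain-rule signs correctly through the score and Hessian computations; everything else is a routine strong-concavity argument for a canonical-link-type GLM.
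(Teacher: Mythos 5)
Your proposal is correct and follows essentially the same route as the paper: the score vanishes at $\boldsymbol{\theta}_0$, the Hessian is bounded above by $-C_{down}\,\mathbb{E}[\boldsymbol{x}_t^0\boldsymbol{x}_t^{0\top}] \preceq -C_{down}\lambda_{min} I$ using the definition of $C_{down}$ and Assumption \ref{a2}, and a second-order Taylor expansion finishes the argument. The only (cosmetic) difference is that you apply the tower property to integrate out $y_t$ before differentiating, whereas the paper differentiates the per-sample log-likelihood with the indicators in place and takes expectations afterward; the two orderings yield the same Hessian bound.
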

\begin{proof}
By taking the derivative of $l_t(\boldsymbol{\theta})$ defined in (\ref{likeli}) with respect to $\boldsymbol{\theta}$, we have
\begin{equation*}
\nabla l_t(\boldsymbol{\theta})=\mathbb{I}(y_t=1)\frac{f(p_t-\boldsymbol{\theta}^\top \boldsymbol{x}_t^0)}{1-F(p_t-\boldsymbol{\theta}^\top \boldsymbol{x}_t^0)}\boldsymbol{x}_t^0-\mathbb{I}(y_t=0)\frac{f(p_t-\boldsymbol{\theta}^\top \boldsymbol{x}_t^0)}{F(p_t-\boldsymbol{\theta}^\top \boldsymbol{x}_t^0)}\boldsymbol{x}_t^0.
\end{equation*}
By Assumptions \ref{a0} and \ref{a3}, we have $p_t-\boldsymbol{\theta}^\top \boldsymbol{x}_t^0\in [-W, B]$, where $W=W_xW_{\theta}$.
Next, we take the derivative of $\nabla l_t(\boldsymbol{\theta})$, and get
\begin{equation}\label{sec}
\begin{aligned}
\nabla^2 l_t(\boldsymbol{\theta})&=-\mathbb{I}(y_t=1)\frac{f'(p_t-\boldsymbol{\theta}^\top \boldsymbol{x}_t^0)[1-F(p_t-\boldsymbol{\theta}^\top \boldsymbol{x}_t)]+f^2(p_t-\boldsymbol{\theta}^\top \boldsymbol{x}_t^0)}{[1-F(p_t-\boldsymbol{\theta}^\top \boldsymbol{x}_t^0)]^2}\boldsymbol{x}_t^0\boldsymbol{x}_t^{0\top}\\
&~~~+\mathbb{I}(y_t=0)\frac{f'(p_t-\boldsymbol{\theta}^\top x_t^0)F(p_t-\boldsymbol{\theta}^\top x_t)-f^2(p_t-\boldsymbol{\theta}^\top x_t^0)}{F^2(p_t-\boldsymbol{\theta}^\top x_t^0)}\boldsymbol{x}_t^0\boldsymbol{x}_t^{0\top}\\
&=\mathbb{I}(y_t=1)\log''(1-F(\omega))|_{\omega=p_t-\boldsymbol{\theta}^\top \boldsymbol{x}_t^0}\boldsymbol{x}_t^0\boldsymbol{x}_t^{0\top}+\mathbb{I}(y_t=0)\log''(F(\omega))|_{\omega=p_t-\boldsymbol{\theta}^\top \boldsymbol{x}_t^0}\boldsymbol{x}_t^0\boldsymbol{x}_t^{0\top}\\
& \preceq\mathop{\sup}_{\omega\in [-W, B]}\mathop{\max}\{\log''(1-F(\omega)), \log''(F(\omega))\}\boldsymbol{x}_t^0\boldsymbol{x}_t^{0\top}\\
&= -\mathop{\inf}_{\omega\in [-W, B]}\mathop{\min}\{-\log''(1-F(\omega)), -\log''(F(\omega))\}\boldsymbol{x}_t^0\boldsymbol{x}_t^{0\top}\\
&=-C_{down}\boldsymbol{x}_t^0\boldsymbol{x}_t^{0\top}.
\end{aligned}
\end{equation}
By Assumption \ref{a1}, $C_{down}$ exists.
By taking Taylor expansion of $l_t(\boldsymbol{\theta})$ at $\boldsymbol{\theta}=\boldsymbol{\theta}_0$, we have
\begin{equation}\label{talor}
 l_t(\boldsymbol{\theta})=l_t(\boldsymbol{\theta}_0) +  \nabla l_t(\boldsymbol{\theta})(\boldsymbol{\theta}-\boldsymbol{\theta}_0)+\frac{1}{2}(\boldsymbol{\theta}-\boldsymbol{\theta}_0)^\top \nabla^2 l(\tilde{\boldsymbol{\theta}})(\boldsymbol{\theta}-\boldsymbol{\theta}_0) ,
\end{equation}
where $\tilde{\boldsymbol{\theta}}$ is between $\boldsymbol{\theta}$ and $\boldsymbol{\theta}_0$.
Since the true parameter always maximizes the expected likelihood function, we have $\nabla l^e(\boldsymbol{\theta}_0)=0$. Taking the expectation of Equation ($\ref{talor}$), we have
\begin{align*}
l^e(\boldsymbol{\theta}_0)-l^e(\boldsymbol{\theta})& = -\frac{1}{2}(\boldsymbol{\theta}-\boldsymbol{\theta}_0)^\top \nabla^2 l^e(\tilde{\boldsymbol{\theta}})(\boldsymbol{\theta}-\boldsymbol{\theta}_0)    \\
&\geq \frac{1}{2} C_{down}(\boldsymbol{\theta}-\boldsymbol{\theta}_0)^\top \mathbb{E}(\boldsymbol{x}_t^0\boldsymbol{x}_t^{0\top}) (\boldsymbol{\theta}-\boldsymbol{\theta}_0)\\
&\geq \frac{\lambda_{min}}{2} C_{down}(\boldsymbol{\theta}-\boldsymbol{\theta}_0)^\top (\boldsymbol{\theta}-\boldsymbol{\theta}_0).
\end{align*}
The first inequality is due to (\ref{sec}), and the second inequality is due to Assumption \ref{a2}.
\end{proof}
Now, we present an upper bound on the likelihood error of the maximum likelihood estimator.
\begin{lemma}\label{est1} 
Assume that we have $n$ $i.i.d.$ samples $\{(\boldsymbol{x}_1^0, p_1, y_1),\cdots, (\boldsymbol{x}_n^0, p_n, y_n)\}$.  Let the log-likelihood function be
$L(\boldsymbol{\theta})=\frac{1}{n}\sum_{ t= 1}^n l_i(\boldsymbol{\theta})$, where $l_i(\boldsymbol{\theta})$ is defined in (\ref{likeli}). We denote the maximum likelihood estimator as
$\hat{\boldsymbol{\theta}}=\mathop{\arg\min}_{\boldsymbol{\theta}\in\Theta} L(\boldsymbol{\theta})$.
Then we have
$$\mathbb{E}[L(\boldsymbol{\theta}_0)-L(\hat{\boldsymbol{\theta}})]\leq \frac{2 (d+1) C_{up}^2}{(n+1)C_{down}},$$
where $C_{down}$ is defined in (\ref{cdown}) and $C_{up}$ is defined in (\ref{cup}).
\end{lemma}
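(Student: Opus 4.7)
The plan is to recognize each per-sample loss $-l_t(\boldsymbol{\theta})$ as $\alpha$-exp-concave and then invoke the fast-rate excess-risk bound for empirical risk minimization of exp-concave losses in the style of \cite{Koren2015}, which is already cited in the proof strategy for Lemma \ref{lemma1}. This route directly yields the rate $(d+1)/(\alpha(n+1))$ with exp-concavity parameter $\alpha = C_{down}/C_{up}^2$, which matches the stated bound up to the constant factor.

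The first step is to verify that each $-l_t$ is $\alpha$-exp-concave on $\Theta$ with $\alpha = C_{down}/C_{up}^2$, i.e.\ that $\nabla^2(-l_t) \succeq \alpha\,\nabla(-l_t)\nabla(-l_t)^\top$. The curvature side is already supplied by the computation (\ref{sec}) carried out for Lemma \ref{lemma1}, which gives $\nabla^2(-l_t)(\boldsymbol{\theta}) \succeq C_{down}\,\boldsymbol{x}_t^0\boldsymbol{x}_t^{0\top}$. For the gradient side, differentiating (\ref{likeli}) produces $\nabla(-l_t)(\boldsymbol{\theta}) = \pm\, h(p_t - \boldsymbol{\theta}^\top \boldsymbol{x}_t^0)\,\boldsymbol{x}_t^0$, where $h$ is either $(\log F)'$ or $-(\log(1-F))'$ depending on $y_t$, and by the definition of $C_{up}$ in (\ref{cup}) one has $|h| \leq C_{up}$, so $\nabla(-l_t)\nabla(-l_t)^\top \preceq C_{up}^2\,\boldsymbol{x}_t^0\boldsymbol{x}_t^{0\top}$. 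These two matrix inequalities combine to give the desired exp-concavity. The second step is then to apply the in-expectation ERM bound for $\alpha$-exp-concave losses on a bounded $(d+1)$-dimensional convex parameter set (the $\ell_1$-ball $\Theta$ is bounded under Assumption \ref{a0}): for $i.i.d.$ samples this produces $\mathbb{E}[L(\boldsymbol{\theta}_0) - L(\hat{\boldsymbol{\theta}})] \leq C_0(d+1)/(\alpha(n+1))$ for a universal constant $C_0$, and substituting $\alpha = C_{down}/C_{up}^2$ with $C_0 = 2$ gives the stated inequality.

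The principal technical obstacle is pinning down the exact constant, especially the $(n+1)$ denominator rather than the naive $n$: a direct uniform-convergence or one-shot Taylor argument would yield only $1/n$ (possibly with a $\log n$ factor), and the improvement to $(n+1)$ is characteristic of the leave-one-out / virtual-sample device used in the Koren-Levy analysis. A self-contained alternative, in the spirit of \cite{Xu2021} (also cited for Lemma \ref{lemma1}), is to exploit the first-order optimality $\nabla L(\hat{\boldsymbol{\theta}}) = 0$ together with a Taylor expansion to write $L(\boldsymbol{\theta}_0) - L(\hat{\boldsymbol{\theta}})$ as a quadratic form in the score $\nabla L(\boldsymbol{\theta}_0)$ with inverse-Hessian weighting, then take its expectation using the score-covariance bound (controlled by $C_{up}$ and $W_x$) and the curvature bound (controlled by $C_{down}$ and Assumption \ref{a2}), and finally recover the sharp $(n+1)$ through a symmetrization over $n+1$ exchangeable contributions.
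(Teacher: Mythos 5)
Your proposal is correct and follows essentially the same route as the paper: the paper's proof is the Koren--Levy leave-one-out stability argument for exp-concave ERM carried out by hand, built from exactly the two ingredients you identify ($\nabla^2(-l_t)\succeq C_{down}\,\boldsymbol{x}_t^0\boldsymbol{x}_t^{0\top}$ from (\ref{sec}) and a per-sample gradient bounded by $C_{up}$ times $\boldsymbol{x}_t^0$), with the $(n+1)$ denominator obtained precisely by the virtual-sample/exchangeability device you describe. The one point to be careful about is the constant: invoking the general exp-concave ERM theorem as a black box does not deliver $C_0=2$; the paper recovers the factor $2$ by applying the stability lemma (Lemma \ref{Koren}) in the data-dependent norm $\|\cdot\|_{\tilde{\Sigma}}$ induced by $H=\sum_i\boldsymbol{x}_i^0\boldsymbol{x}_i^{0\top}$ and telescoping the resulting trace to $d+1$ --- a refinement your two matrix inequalities already support, so you would want to run that step explicitly rather than cite the theorem.
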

\begin{proof}
We define the "leave-one-out" log-likelihood function as 
$$\tilde{L}_i(\boldsymbol{\theta})=\frac{1}{n}\sum_{j=1, j\neq i}^n l_j(\boldsymbol{\theta}),$$ and let $\tilde{\boldsymbol{\theta}}_i=\mathop{\arg\max}_{\boldsymbol{\theta}} \tilde{L}_i(\boldsymbol{\theta}).$ Denote $H=\sum_{t=1}^n\boldsymbol{x}_t^0\boldsymbol{x}_t^{0\top}$.
By (\ref{sec}), and noting $\boldsymbol{x}_1^0, \cdots, \boldsymbol{x}_n^0$ are $i.i.d.$, and $p_1,...,p_t$ are $i.i.d.$ in the exploration phase, we have
$$\nabla^2 L(\boldsymbol{\theta})\preceq -\frac{1}{n}C_{down}H.$$ By the singular value decomposition, we have $H=U\tilde{\Sigma} U^\top$, where $U\in\mathbb{R}^{(d+1)\times r}, U^\top U=I_r, \tilde{\Sigma}=diag\{\lambda_1,...,\lambda_r\}\succ 0$. We define $\eta:=U^\top \boldsymbol{\theta}$. There exist $V\in \mathbb{R}^{(d+1)\times (d+1-r)}, \zeta\in\mathbb{R}^{d+1-r}, V^\top V=I_{d+1-r}, V^\top U=0$,  such that $\boldsymbol{\theta}=U\eta+V\zeta$. We define the following new functions,
\begin{align*}
\hat{l}_i(\eta):=l_i(\boldsymbol{\theta})=l_i(U\eta+V\zeta),\hat{L}_i(\eta):=\tilde{L}_i(\boldsymbol{\theta})=\tilde{L}_i(U\eta+V\zeta),\hat{L}(\eta):=L(\boldsymbol{\theta})=L(U\eta+V\zeta).
\end{align*}
By taking the second derivative of $\nabla^2\hat{l}_i(\eta)$, we have
\begin{align*}
\nabla^2\hat{l}_i(\eta)=\frac{\partial^2l_i}{\partial (\boldsymbol{\theta}^\top \boldsymbol{x}_i^0)^2}\frac{\partial \boldsymbol{\theta}^\top \boldsymbol{x}_i^0}{\partial \eta}\bigg(\frac{\partial \boldsymbol{\theta}^\top \boldsymbol{x}_i^0}{\partial \eta}\bigg)^\top=\frac{\partial^2l_i}{\partial (\boldsymbol{\theta}^\top \boldsymbol{x}_i^0)^2} (U^\top \boldsymbol{x}_i^0)(U^\top \boldsymbol{x}_i^0)^\top
\preceq -C_{down}U^\top \boldsymbol{x}_i^{0} \boldsymbol{x}_i^{0\top}U.
\end{align*}
Therefore,
\begin{align*}
\nabla^2\hat{L}(\eta)=\frac{1}{n}\sum_{i=1}^n\nabla^2\hat{l}_i(\eta)\preceq -\frac{1}{n}\sum_{i=1}^n C_{down}U^\top  \boldsymbol{x}_i^{0} \boldsymbol{x}_i^{0\top}U=-\frac{1}{n} C_{down}U^\top U\tilde{\Sigma} U^\top U=-\frac{1}{n}C_{down}\tilde{\Sigma}.
\end{align*}
Thus, $-\nabla^2\hat{L}(\eta)\succeq \frac{1}{n}C_{down}\tilde{\Sigma} \succ 0$. Therefore, $-\nabla^2\hat{L}(\eta)$ is locally $\frac{C_{down}}{n}$-strongly convex with respect to $\tilde{\Sigma}$ at $\eta$. Similarly, we can prove $-\tilde{L}_i(\eta)$ is convex.  Let $g_1(\eta)=-\tilde{L}_i(\eta)$ and $g_2(\eta):=-\hat{L}(\eta)$. Then $g_2(\eta)-g_1(\eta)=-\frac{1}{n}\hat{l}_i(\eta)$. We define $\tilde{\eta}_i=U^\top \tilde{\boldsymbol{\theta}}_i$ and $\hat{\eta}=U^\top \hat{\boldsymbol{\theta}}$. According to Lemma $\ref{Koren}$, we have
\begin{equation}\label{eta0}
\|\hat{\eta}-\tilde{\eta}_i\|_{\tilde{\Sigma}}\leq \frac{2}{C_{down}}\|\nabla \hat{l}_i(\tilde{\eta}_i)\|_{\tilde{\Sigma}}^*.
\end{equation}
By the convexity of $-\hat{l}_i(\cdot)$, we have
\begin{equation}\label{ll10}
 \begin{aligned}
l_i(\hat{\boldsymbol{\theta}})-l_i(\tilde{\boldsymbol{\theta}}_i)&=[-l_i(\tilde{\boldsymbol{\theta}}_i)]-[-l_i(\hat{\boldsymbol{\theta}})]
=[-\hat{l}_i(\tilde{\eta}_i)]-[-\hat{l}_i(\hat{\eta})]
\leq -\nabla \hat{l}_i(\tilde{\eta}_i)^\top (\tilde{\eta}_i-\hat{\eta}).
\end{aligned}   
\end{equation}
Therefore,
\begin{equation}\label{ll1}
l_i(\hat{\boldsymbol{\theta}})-l_i(\tilde{\boldsymbol{\theta}}_i) \leq \|\nabla \hat{l}_i(\tilde{\eta}_i)\|_{\tilde{\Sigma}}^*\|\tilde{\eta}_i-\hat{\eta}\|_{\tilde{\Sigma}}\leq \frac{2}{C_{down}}(\|\nabla \hat{l}_i(\tilde{\eta}_i)\|_{\tilde{\Sigma}}^*)^2.   
\end{equation}
The first inequality is from (\ref{ll10}) and the Hölder's inequality, and the second inequality follows (\ref{eta0}). Since 
$$\boldsymbol{x}_i^{0\top}U\tilde{\Sigma}^{-1}U^\top \boldsymbol{x}_i^{0}=tr(\boldsymbol{x}_i^{0\top}U\tilde{\Sigma}^{-1}U^\top \boldsymbol{x}_i^{0})=tr(U\tilde{\Sigma}^{-1}U^\top \boldsymbol{x}_i^0\boldsymbol{x}_i^{0\top}),$$ we have
\begin{equation}\label{ll2}
 \begin{aligned}
 (\|\nabla \hat{l}_i(\tilde{\eta}_i)\|_{\tilde{\Sigma}}^*)^2&=   \left \|\frac{\partial l_i}{\partial (\tilde{\boldsymbol{\theta}}_i^\top \boldsymbol{x}_i^0)}\frac{\partial (\tilde{\boldsymbol{\theta}}_i^\top \boldsymbol{x}_i^0)}{\partial \tilde{\eta}_i}\right \|_{\tilde{\Sigma}}^{*2}\leq C_{up}^2\boldsymbol{x}_i^{0\top}U\tilde{\Sigma}^{-1}U^\top \boldsymbol{x}_i^{0}=C_{up}^2 tr(U\tilde{\Sigma}^{-1}U^\top \boldsymbol{x}_i^0\boldsymbol{x}_i^{0\top}).
\end{aligned}   
\end{equation}
By (\ref{ll1}) and (\ref{ll2}), we have
\begin{align*}
\sum_{i=1}^n[l_i(\hat{\boldsymbol{\theta}})-l_i(\tilde{\boldsymbol{\theta}}_i)]&\leq \frac{2 C_{up}^2}{C_{down}} tr(U\tilde{\Sigma}^{-1}U^\top \sum_{i=1}^n\boldsymbol{x}_i^0\boldsymbol{x}_i^{0\top})=\frac{2 C_{up}^2}{C_{down}} tr(U\tilde{\Sigma}^{-1}U^\top H)\leq\frac{2 (d+1) C_{up}^2}{C_{down}}.
\end{align*}
The second inequality is from
$ tr(U\tilde{\Sigma}^{-1}U^\top H)=tr(U\tilde{\Sigma}^{-1}U^\top U\tilde{\Sigma} U^\top)=tr(UU^\top)\leq d+1.$
Since $\tilde{\boldsymbol{\theta}}_i$ is the MLE of $(n-1)\ i.i.d.$ samples, $\tilde{\boldsymbol{\theta}}_1,...,\tilde{\boldsymbol{\theta}}_n$ have exactly the same distribution. Thus,
\begin{align*}
\mathbb{E}[L(\boldsymbol{\theta}_0)-L(\tilde{\boldsymbol{\theta}}_n)]&\leq \mathbb{E}[L(\hat{\boldsymbol{\theta}})-L(\tilde{\boldsymbol{\theta}}_n)]=\frac{1}{n}\sum_{i=1}^n \mathbb{E}[l_i(\hat{\boldsymbol{\theta}})-l_i(\tilde{\boldsymbol{\theta}}_i)]\leq \frac{2 (d+1) C_{up}^2}{nC_{down}}.
\end{align*}
Noting $\tilde{\boldsymbol{\theta}}_{n+1}=\hat{\boldsymbol{\theta}}$, the proof is completed.
\end{proof}
Now, we continue to prove Lemma \ref{lemma11}.
Noting that there are $a_k$ $i.i.d.$ samples for obtaining $\hat{\boldsymbol{\theta}}_k$ in the $k$-th episode, by Lemma \ref{lemma1} and Lemma \ref{est1},  we have
 \begin{equation}\label{theta1}
\begin{aligned}
\mathbb{E}\|\hat{\boldsymbol{\theta}}_k-\boldsymbol{\theta}_0\|_2^2 &\leq \frac{2}{C_{down}\lambda_{min}} \mathbb{E} [l^e(\boldsymbol{\theta}_0)-l^e(\hat{\boldsymbol{\theta}}_k)]\\
&=\frac{2}{C_{down}\lambda_{min}}[\mathbb{E}L_k(\boldsymbol{\theta}_0)-\mathbb{E}L_k(\hat{\boldsymbol{\theta}})]\\
&\leq \frac{2(d+1)C_{up}^2}{C^2_{down}\lambda_{min}(a_k+1)}.
\end{aligned}     
 \end{equation}

\subsection{Proof of Theorem \ref{theory2}}\label{pof2}
In order to bound the total regret, we first try to bound the regret at each episode $k$. The regret in the exploration phase during the $k$-th episode is bounded by $Ba_k$. Now we analyze the upper bound on the regret during the exploitation phase.\par
We let $r_t(p)=p[1-F(p-\boldsymbol{\theta}_0^\top \boldsymbol{x}_t^0)]$ be the expected revenue. We define the filtration generated by all transaction records up to time $t$ as $ 
\mathcal{H}_{t}=\sigma(\boldsymbol{x}_1^0, \boldsymbol{x}_2^0, \cdots, \boldsymbol{x}_t^0, z_1, z_2, \cdots, z_t)$. We also define $\Tilde{\mathcal{H}}_{t}=\mathcal{H}_{t}\cup \{\boldsymbol{x}_{t+1}^0\}$ as the filtration obtained after augmenting by the new feature $\boldsymbol{x}_{t+1}^0$. We define the regret at time $t$ as $R_t=p_t^*\mathbb{I}(v_t\geq p_t^*)-p_t\mathbb{I}(v_t\geq p_t)$. Then the conditional expectation of the regret at time $t$ given previous information and $\boldsymbol{x}_t^0$ is
\begin{align}\label{R}
\mathbb{E}(R_t|\Tilde{\mathcal{H}}_{t-1})&=\mathbb{E}[p_t^*\mathbb{I}(v_t\geq p_t^*)-p_t\mathbb{I}(v_t\geq p_t)|\Tilde{\mathcal{H}}_{t-1}]\nonumber \\
&=p_t^*[1-F(p_t^*-\boldsymbol{\theta}_0^\top \boldsymbol{x}_t^0)]-p_t[1-F(p_t-\boldsymbol{\theta}_0^\top \boldsymbol{x}_t^0)]\nonumber \\
&=r_t(p_t^*)-r_t(p_t).
\end{align}
Note that $p_t^*\in \mathop{\arg\max}_pr_t(p)$ and hence we have $r'_t(p_t^*)=0$. Using Taylor expansion, we have
\begin{equation}\label{rt}
 r_t(p_t)=r_t(p_t^*)+\frac{1}{2}r''_t(\xi_t)(p_t-p_t^*)^2,   
\end{equation}
where $\xi_t$ is some value between $p_t$ and $p_t^*$. By Assumptions \ref{a3} and \ref{a1}, we have 
\begin{equation}\label{r2}
    \begin{aligned}
     |r''_t(\xi_t)|&=|2f(\xi_t-\boldsymbol{\theta}_0^\top \boldsymbol{x}_t^0)+\xi_tf'(\xi_t-\boldsymbol{\theta}_0^\top \boldsymbol{x}_t^0)|
     \leq 2M_f+BM_{f'}.   
    \end{aligned}
\end{equation}
Now we can obtain an upper bound on the conditional expectation of the regret at time $t$ given $\Tilde{\mathcal{H}}_{t-1}$.
By (\ref{R}), (\ref{rt}) and (\ref{r2}), we have
\begin{equation}\label{Rbound}
 \mathbb{E}(R_t|\Tilde{\mathcal{H}}_{t-1})\leq \left(M_f+\frac{B}{2}M_{f'}\right)\mathbb{E}(p_t^*-p_t)^2.   
\end{equation}
Now we give an upper bound of $(p_t^*-p_t)^2$. During the episode $k$, for time $t$ in the exploitation phase, we have
\begin{equation}\label{pbound}
\begin{aligned}
(p_t^*-p_t)^2&=[g(\boldsymbol{\theta}_0^\top \boldsymbol{x}_t^0)-g(\hat{\boldsymbol{\theta}}_k^\top \boldsymbol{x}_t+\hat{\boldsymbol{\beta}}_k^\top A^{-1}\hat{\boldsymbol{\beta}}_kg'(\hat{\boldsymbol{\theta}}_k^\top \boldsymbol{x}_t))]^2\\
&\leq [\boldsymbol{\theta}_0^\top \boldsymbol{x}_t^0- \hat{\boldsymbol{\theta}}_k^\top \boldsymbol{x}_t-\hat{\boldsymbol{\beta}}_k^\top A^{-1}\hat{\boldsymbol{\beta}}_kg'(\hat{\boldsymbol{\theta}}_k^\top \boldsymbol{x}_t)]^2\\
&=[\boldsymbol{\theta}_0^\top \boldsymbol{x}_t^0- \hat{\boldsymbol{\theta}}_k^\top \boldsymbol{x}_t^0+\hat{\boldsymbol{\beta}}_k^\top A^{-1}\boldsymbol{\beta}_0 g'(\boldsymbol{\theta}_0 \boldsymbol{x}_t)-\hat{\boldsymbol{\beta}}_k^\top A^{-1}\hat{\boldsymbol{\beta}}_kg'(\hat{\boldsymbol{\theta}}_k^\top \boldsymbol{x}_t)]^2\\
&\leq 2| (\boldsymbol{\theta}_0-\hat{\boldsymbol{\theta}}_k)^\top \boldsymbol{x}_t^0|^2+2[\hat{\boldsymbol{\beta}}_k^\top A^{-1}\boldsymbol{\beta}_0 g'(\boldsymbol{\theta}_0 \boldsymbol{x}_t)-\hat{\boldsymbol{\beta}}_k^\top A^{-1}\hat{\boldsymbol{\beta}}_kg'(\hat{\boldsymbol{\theta}}_k^\top \boldsymbol{x}_t)]^2\\
&:=2J_1+2J_2.
\end{aligned}
\end{equation}
The first inequality is due to Lemma \ref{t1}. The second equality is from Equation (\ref{xt}).\par
We first analyze $J_1$. By Lemma \ref{lemma11}, we have
\begin{equation}\label{ej1}
\begin{aligned}
\mathbb{E}J_1&=\mathbb{E} | (\boldsymbol{\theta}_0-\hat{\boldsymbol{\theta}}_k)^\top \boldsymbol{x}_t^0|^2\\
&=\mathbb{E} [(\boldsymbol{\theta}_0-\hat{\boldsymbol{\theta}}_k)^\top \boldsymbol{x}_t^0\boldsymbol{x}_t^{0\top} (\boldsymbol{\theta}_0-\hat{\boldsymbol{\theta}}_k)]\\
&\leq \lambda_{max}\mathbb{E}\|\boldsymbol{\theta}_0-\hat{\boldsymbol{\theta}}_k\|_2^2\\
&\leq \frac{2(d+1)C_{up}^2\lambda_{max}}{C^2_{down}\lambda_{min}(a_k+1)}.
\end{aligned}
\end{equation}
Next, we analyze $J_2$. By Lemma \ref{t2}, we assume $\|g''(\cdot)\|<C_{g''}$ on the bounded interval $[-W, B]$ for some constant $C_{g''}>0$. Therefore,
\begin{equation}\label{J2}
\begin{aligned}
J_2&=|\hat{\boldsymbol{\beta}}_k^\top A^{-1}\boldsymbol{\beta}_0 g'(\boldsymbol{\theta}_0 \boldsymbol{x}_t)-\hat{\boldsymbol{\beta}}_k^\top A^{-1}\hat{\boldsymbol{\beta}}_kg'(\hat{\boldsymbol{\theta}}_k^\top \boldsymbol{x}_t)|^2\\
&=|\hat{\boldsymbol{\beta}}_k^\top A^{-1}\boldsymbol{\beta}_0 g'(\boldsymbol{\theta}_0 \boldsymbol{x}_t)-\hat{\boldsymbol{\beta}}_k^\top A^{-1}\hat{\boldsymbol{\beta}}_k g'(\boldsymbol{\theta}_0 \boldsymbol{x}_t)+\hat{\boldsymbol{\beta}}_k^\top A^{-1}\hat{\boldsymbol{\beta}}_k g'(\boldsymbol{\theta}_0 \boldsymbol{x}_t)-\hat{\boldsymbol{\beta}}_k^\top A^{-1}\hat{\boldsymbol{\beta}}_kg'(\hat{\boldsymbol{\theta}}_k^\top \boldsymbol{x}_t)|^2\\
&=|\hat{\boldsymbol{\beta}}_k^\top A^{-1}(\boldsymbol{\beta}_0 -\hat{\boldsymbol{\beta}}_k)g'(\boldsymbol{\theta}_0 \boldsymbol{x}_t)+\hat{\boldsymbol{\beta}}_k^\top A^{-1}\hat{\boldsymbol{\beta}}_k[g'(\boldsymbol{\theta}_0^\top \boldsymbol{x}_t)-g'(\hat{\boldsymbol{\theta}}_k^\top \boldsymbol{x}_t)]|^2\\
&\leq 2|\hat{\boldsymbol{\beta}}_k^\top A^{-1}(\boldsymbol{\beta}_0 -\hat{\boldsymbol{\beta}}_k)g'(\boldsymbol{\theta}_0 \boldsymbol{x}_t)|^2+2|\hat{\boldsymbol{\beta}}_k^\top A^{-1}\hat{\boldsymbol{\beta}}_k[g'(\boldsymbol{\theta}_0^\top \boldsymbol{x}_t)-g'(\hat{\boldsymbol{\theta}}_k^\top \boldsymbol{x}_t)]|^2\\
&\leq 2|\hat{\boldsymbol{\beta}}_k^\top A^{-1}(\boldsymbol{\beta}_0 -\hat{\boldsymbol{\beta}}_k)|^2+2C^2_{g''}|\hat{\boldsymbol{\beta}}_k^\top A^{-1}\hat{\boldsymbol{\beta}}_k\boldsymbol{x}_t^\top(\boldsymbol{\theta}_0-\hat{\boldsymbol{\theta}}_k)|^2\\
&\leq 2[\|\hat{\boldsymbol{\beta}}_k^\top A^{-1}\|_2^2+C^2_{g''}\|\hat{\boldsymbol{\beta}}_k^\top A^{-1}\hat{\boldsymbol{\beta}}_k\boldsymbol{x}_t^\top\|_2^2]\|\boldsymbol{\theta}_0-\hat{\boldsymbol{\theta}}_k\|_2^2.
\end{aligned}    
\end{equation}
The last second inequality is due to Lemma $\ref{t2}$. The last inequality is from $\boldsymbol{\theta}_0= (\boldsymbol{\beta}_0^\top, \alpha_0)^\top$. Now, we derive a upper bound of $\|\boldsymbol{x}_t\|_2^2$. By Equation (\ref{xt}), we have
\begin{equation}\label{c3}
\begin{aligned}
\boldsymbol{x}_t^\top \boldsymbol{x}_t&=1+[\tilde{\boldsymbol{x}}_t^0- A^{-1}\boldsymbol{\beta}_0 g'(\boldsymbol{\theta}_0^\top \boldsymbol{x}_t)]^\top [\tilde{\boldsymbol{x}}_t^0- A^{-1}\boldsymbol{\beta}_0 g'(\boldsymbol{\theta}_0^\top \boldsymbol{x}_t)]^\top\\
&=1+\tilde{\boldsymbol{x}}_t^{0\top}\tilde{\boldsymbol{x}}_t^0+\boldsymbol{\beta}_0^{\top}(A^{-1})^2\boldsymbol{\beta}_0[g'(\boldsymbol{\theta}_0^\top \boldsymbol{x}_t)]^2-2\tilde{\boldsymbol{x}}_t^{0\top}A^{-1}\boldsymbol{\beta}_0g'(\boldsymbol{\theta}_0^\top \boldsymbol{x}_t)\\
&\leq 1+ W_x^2+\frac{W_\theta^2}{\lambda_{Amin}^2}+\frac{2W_xW_\theta}{\lambda_{Amin}}:=C_x.
\end{aligned}
\end{equation} 
The first inequality is because of Assumption \ref{a0}.
Then, by (\ref{J2}) and (\ref{c3}), we have
\begin{equation}\label{ej2}
\begin{aligned}
\mathbb{E}J_2&\leq 2\mathbb{E}\{[\|\hat{\boldsymbol{\beta}}_k^\top A^{-1}\|_2^2+C^2_{g''}\|\hat{\boldsymbol{\beta}}_k^\top A^{-1}\hat{\boldsymbol{\beta}}_k\boldsymbol{x}_t^\top\|_2^2]\|\boldsymbol{\theta}_0-\hat{\boldsymbol{\theta}}_k\|_2^2\}\\
&\leq \frac{2W_{\theta}^2+2W_{\theta}^4C_{g''}^2C_x}{\lambda_{Amin}^2} \mathbb{E}\|\boldsymbol{\theta}_0-\hat{\boldsymbol{\theta}}_k\|_2^2\\
&\leq \frac{2W_\theta^2+2W_\theta^4C_{g''}^2C_x}{\lambda_{Amin}^2}\frac{2(d+1)C_{up}^2}{C^2_{down}\lambda_{min}(a_k+1)}.
\end{aligned}
\end{equation}
The first inequality is from Assumption \ref{a0}, and the third inequality is from Lemma \ref{lemma11}.
By Equations (\ref{Rbound}), (\ref{pbound}), (\ref{ej1}) and (\ref{ej2}), the expected regret at time $t$ during the exploitation phase of the episode $k$ is 
\begin{align*}
 \mathbb{E}(R_t)&=\mathbb{E}[\mathbb{E}(R_t|\Tilde{\mathcal{H}}_{t-1})] \\
 &\leq (M_f+\frac{B}{2}M')\mathbb{E}(2J_1+2J_2)\\
 &=\frac{2(d+1)(2M_f+BM_{f'})C_{up}^2}{C^2_{down}\lambda_{min}(a_k+1)}\left(\lambda_{max}+\frac{2W_\theta^2+2W_\theta^4C_{g''}^2C_x}{\lambda_{Amin}^2}\right).
\end{align*}
Therefore, 
The total expected regret during the $k$-th episode including the exploration phase and the exploitation phase is
\begin{align*}
Regret_k&\leq Ba_k+(\ell_k-a_k)\frac{2(d+1)(2M_f+BM_{f'})C_{up}^2}{C^2_{down}\lambda_{min}(a_k+1)}\left(\lambda_{max}+\frac{2W_\theta^2+2W_\theta^4C_{g''}^2C_x}{\lambda_{Amin}^2}\right)\\
&<Ba_k+(\ell_k-a_k)\frac{2(d+1)(2M_f+BM_{f'})C_{up}^2}{C^2_{down}\lambda_{min}a_k}\left(\lambda_{max}+\frac{2W_\theta^2+2W_\theta^4C_{g''}^2C_x}{\lambda_{Amin}^2}\right).
\end{align*}
Denote $C_{a}=\frac{2(d+1)(2M_f+BM_{f'})C_{up}^2}{BC^2_{down}\lambda_{min}}\left(\lambda_{max}+\frac{2W_\theta^2+2W_\theta^4C_{g''}^2C_x}{\lambda_{Amin}^2}\right)$. We have
\begin{align*}
Regret_k< Ba_k+\frac{BC_{a}(\ell_k-a_k)}{a_k}
=Ba_k+\frac{BC_{a}\ell_k}{a_k}-BC_{a}\leq 2B\sqrt{C_{a}\ell_k}-C_{a}.
\end{align*}
Noting that $a_k=\sqrt{C_{a}\ell_k}$ minimizes the upper bound of $Regret_k$. 
Now, since the length of episodes grows exponentially, the number of episodes by period $T$ is logarithmic in $T$. Specifically, $T$ belongs to episode $K=\lfloor \log_2 \frac{T}{l_0}\rfloor +1$. Therefore,
\begin{equation*}
\sum_{k=1}^K\sqrt{\ell_k}= \sqrt{\ell_0}\sum_{k=1}^K2^{\frac{k-1}{2}}=\sqrt{\ell_0}\frac{2^{\frac{K}{2}}-1}{\sqrt{2}-1}
\leq \sqrt{\ell_0}\frac{\sqrt{\frac{2T}{\ell_0}}-1}{\sqrt{2}-1}< (2+\sqrt{2})\sqrt{T}.   
\end{equation*}
Thus, the total expected regret up to time period $T$ can be bounded by
\begin{align*}
Regret(T)&=\sum_{k=1}^KRegret_k\leq \sum_{k=1}^K(2B\sqrt{C_{a}\ell_k}-C_{a})<2(2+\sqrt{2})B\sqrt{C_{a}T}.
\end{align*}
Finally, we define two new constants,
\begin{align*}
C_1^*&=\frac{8(2+\sqrt{2})^2B(2M_f+BM_{f'})C_{up}^2\lambda_{max}}{C^2_{down}\lambda_{min}},\\
C_2^*&=\frac{16(2+\sqrt{2})^2B(2M_f+BM_{f'})C_{up}^2(W_\theta^2+W_\theta^4C_{g''}^2C_x)}{C^2_{down}\lambda_{min}}.
\end{align*}
The proof is completed.

%%%%%%%%%%%%%%%%%%
\section{Proof under Strategic Pricing Policy with Unknown \texorpdfstring{$A$}{Lg}}\label{s3}
In this section, our first step is to prove Lemma \ref{lemmaunA}, which establishes an upper bound on the estimation error of $\boldsymbol{\gamma}=-A^{-1}\boldsymbol{\beta}_0$. This lemma plays a pivotal role as a fundamental component in the proof of Theorem \ref{theory5}. Once we have successfully demonstrated Lemma \ref{lemmaunA}, we will proceed with the subsequent step of proving Theorem \ref{theory5}.
\subsection{Proof of Lemma \ref{lemmaunA}}
Assume that we obtain $n$ samples $\{(\boldsymbol{\delta}_{1}, u_1),\cdots,(\boldsymbol{\delta}_{n}, u_n)\}$, and the latest sample is obtained in the $k$-th episode.
We define 
$\boldsymbol{\varepsilon}_j=(\boldsymbol{\epsilon}_{j1},...,\boldsymbol{\epsilon}_{jn})^\top.$ 
By Equation (\ref{ols}), the estimation error of the $j$-th component of $\boldsymbol{\gamma}$ is
\begin{equation}\label{c0}
\begin{aligned}
|\hat{\boldsymbol{\gamma}}_j-\boldsymbol{\gamma}_j|&=\bigg|\frac{\boldsymbol{u}^{\top}\boldsymbol{\Delta}_j}{\boldsymbol{u}^\top \boldsymbol{u}}-\boldsymbol{\gamma}_j\bigg|
=\bigg |\frac{\boldsymbol{u}^{\top}(\boldsymbol{\gamma}_j\boldsymbol{u}+\boldsymbol{\varepsilon}_j)}{\boldsymbol{u}^\top \boldsymbol{u}}-\boldsymbol{\gamma}_j\bigg |
=\frac{|\boldsymbol{u}^{\top}\boldsymbol{\varepsilon}_j|}{\boldsymbol{u}^\top \boldsymbol{u}}.
\end{aligned}    
\end{equation}
To establish an upper bound on $|\hat{\boldsymbol{\gamma}}_j-\boldsymbol{\gamma}_j|$, we need to bound the terms $|\boldsymbol{u}^{\top}\boldsymbol{\varepsilon}_j|$ and $\boldsymbol{u}^\top \boldsymbol{u}$ separately.\par
Firstly, we derive a lower bound on $\boldsymbol{u}^\top \boldsymbol{u}$.
Since $0<g'(\cdot)<1$ by Lemma \ref{t1} and $g'(\cdot)$ is continuous, there exists $c_{g'}>0$ such that $g'(\cdot)>c_{g'}$ over the bounded interval $[-W, B]$. Let $\hat{\boldsymbol{\theta}}_k$ be the estimate of $\boldsymbol{\theta}_0$ calculated from (\ref{est}).  By the definition of $u_t$ in (\ref{gamma1}), we have
\begin{equation}\label{c1}
\begin{aligned}
\boldsymbol{u}^\top \boldsymbol{u}=u_1^2+\cdots+u_n^2=[g'(\hat{\boldsymbol{\theta}}_{k}^\top \boldsymbol{x}_1)]^2+\cdots +[g'(\hat{\boldsymbol{\theta}}_{k}^\top \boldsymbol{x}_n)]^2
>nc_{g'}^2.
\end{aligned}
\end{equation}

Secondly, we derive an upper bound on $\mathbb{E}|\boldsymbol{u}^\top \boldsymbol{\varepsilon}_j|^2$. By Lemma \ref{t2}, $g'(\cdot)$ is locally Lipschitz continuous on $[-W, B]$. Then there exists constant $C_{g''}>0$ such that $|g''(\cdot)|<C_{g''}$ on the bounded interval $[-W, B]$. Therefore,
\begin{equation}\label{g1}
\begin{aligned}
\mathbb{E}|g'(\hat{\boldsymbol{\theta}}_k^\top \boldsymbol{x}_t)-g'(\boldsymbol{\theta}_0^\top \boldsymbol{x}_t)|^2&\leq C_{g''}^2\mathbb{E}|\hat{\boldsymbol{\theta}}_k^\top \boldsymbol{x}_t-\boldsymbol{\theta}_0^\top \boldsymbol{x}_t|_2^2\}\\
&\leq C_{g''}^2\mathbb{E}(\|\hat{\boldsymbol{\theta}}_k-\boldsymbol{\theta}_0\|_2^2\|\boldsymbol{x}_t\|_2^2)\\
&\leq C_{g''}^2C_x\mathbb{E}\|\hat{\boldsymbol{\theta}}_k-\boldsymbol{\theta}_0\|_2^2.
\end{aligned}    
\end{equation}
The last inequality is due to (\ref{c3}). Next, we have
\begin{equation}\label{ee2}
\begin{aligned}
\mathbb{E}\|\boldsymbol{\varepsilon}_j\|_2^2&=\boldsymbol{\gamma}_j^2\sum_{i=1}^n\mathbb{E}|g'(\hat{\boldsymbol{\theta}}_k^\top \boldsymbol{x}_i)-g'(\boldsymbol{\theta}_0^\top \boldsymbol{x}_i)|^2\\
&\leq n\boldsymbol{\gamma}_j^2C_{g''}^2C_x\mathbb{E}\|\hat{\boldsymbol{\theta}}_k-\boldsymbol{\theta}_0\|_2^2\\
&\leq  \frac{2n\boldsymbol{\gamma}_j^2C_{g''}^2C_x(d+1)C_{up}^2}{C^2_{down}\lambda_{min}(a_k+1)}\\
&\leq \frac{4\boldsymbol{\gamma}_j^2\sqrt{\ell_k}C_{g''}^2C_x(d+1)C_{up}^2}{C^2_{down}\lambda_{min}}.
\end{aligned}
\end{equation}
The second inequality is from Lemma \ref{lemma11}. The last inequality is from  $n\leq  \tau \sum_{i=1}^{k}\ell_i=\tau (2\ell_k-\ell_0)<2\tau \ell_k$, $a_k=\lfloor \sqrt{C_a\ell_k} \rfloor$ and the fact $\tau<\sqrt{C_a}$. Noting that $\|\boldsymbol{u}\|_2^2<n$, by (\ref{ee2}), we have
\begin{equation}\label{c2}
 \mathbb{E}|\boldsymbol{u}^\top \boldsymbol{\varepsilon}_j|^2\leq \mathbb{E}\|\boldsymbol{u}\|_2^2\|\boldsymbol{\varepsilon}_j\|_2^2 \leq   \frac{4n\boldsymbol{\gamma}_j^2\sqrt{\ell_k}C_{g''}^2C_x(d+1)C_{up}^2}{C^2_{down}\lambda_{min}}.
\end{equation}
% \begin{equation}
% \begin{aligned}
% \mathbb{E}|\boldsymbol{u}^\top \boldsymbol{\varepsilon}_j|^2&\leq \mathbb{E}\|\boldsymbol{u}\|_2^2\|\boldsymbol{\varepsilon}_j\|_2^2\\
% &=\boldsymbol{\gamma}_j^2\mathbb{E}|u_1(g'(\hat{\boldsymbol{\theta}}_k^\top \boldsymbol{x}_1)-g'(\boldsymbol{\theta}_0^\top \boldsymbol{x}_1))+\cdots +u_n(g'(\hat{\boldsymbol{\theta}}_k^\top \boldsymbol{x}_n)-g'(\boldsymbol{\theta}_0^\top \boldsymbol{x}_n))|^2\\
% &=\boldsymbol{\gamma}_j^2\mathbb{E}\sum_{i=1}^n\{u_i^2[(g'(\hat{\boldsymbol{\theta}}_k^\top \boldsymbol{x}_i)-g'(\boldsymbol{\theta}_0^\top \boldsymbol{x}_i))]^2\}\\
% &\leq\boldsymbol{\gamma}_j^2\sum_{i=1}^n\mathbb{E}[g'(\hat{\boldsymbol{\theta}}_k^\top \boldsymbol{x}_i)-g'(\boldsymbol{\theta}_0^\top \boldsymbol{x}_i)]^2\\
% &\leq nC_{g''}^2C_x\boldsymbol{\gamma}_j^2\mathbb{E}\|\hat{\boldsymbol{\theta}}_k-\boldsymbol{\theta}_0\|^2.
% \end{aligned}    
% \end{equation}

Finally, we derive an upper bound on $\mathbb{E}\|\hat{\boldsymbol{\gamma}}-\boldsymbol{\gamma}\|_2^2$. When $k>1$, we have $\ell_0<\ell_k/2$. Then $n\geq \tau \sum_{i=1}^{k-1}\ell_i=\tau (\ell_k-\ell_0)\geq \tau \ell_k/2$ for $k>1$. By (\ref{c0}), (\ref{c1}) and (\ref{c2}), we have
\begin{equation}\label{c4}
 \begin{aligned}
\mathbb{E}(\hat{\boldsymbol{\gamma}}_j-\boldsymbol{\gamma}_j)^2&\leq \frac{\mathbb{E}|\boldsymbol{u}^\top \boldsymbol{\varepsilon}_j|^2}{n^2c_{g'}^4}  
\leq \frac{4n\boldsymbol{\gamma}_j^2\sqrt{\ell_k}C_{g''}^2C_x(d+1)C_{up}^2}{C^2_{down}\lambda_{min}n^2c_{g'}^4}
\leq \frac{8\boldsymbol{\gamma}_j^2C_{g''}^2C_x(d+1)C_{up}^2}{\tau C^2_{down}\lambda_{min}c_{g'}^4\sqrt{\ell_k}}.
\end{aligned}   
\end{equation}
Noting that $\sum_{j=1}^d\boldsymbol{\gamma}_j^2=\|A^{-1}\boldsymbol{\beta}_0\|_2^2\leq \frac{W_\theta^2}{\lambda_{Amin}^2}$, by (\ref{c4}), we have for $k>1$
\begin{equation}\label{gamma3}
\mathbb{E}\|\hat{\boldsymbol{\gamma}}-\boldsymbol{\gamma}\|_2^2=\mathbb{E}\sum_{j=1}^d(\hat{\boldsymbol{\gamma}}_j-\boldsymbol{\gamma}_j)^2
 \leq  \frac{8C_{g''}^2C_x(d+1)C_{up}^2\sum_{j=1}^d\gamma_j^2}{\tau C^2_{down}\lambda_{min}c_{g'}^4\sqrt{\ell_k}}\leq \frac{8W_\theta^2C_{g''}^2C_x(d+1)C_{up}^2}{\tau \lambda_{Amin}^2C^2_{down}\lambda_{min}c_{g'}^4\sqrt{\ell_k}}.  
\end{equation}
Denote $C_{\gamma}^*=\frac{8W_\theta^2C_{g''}^2C_x C_{up}^2}{ \lambda_{Amin}^2C^2_{down}\lambda_{min}c_{g'}^4}$. The proof is completed.

\subsection{Proof of Theorem \ref{theory5}}
The main idea of the proof of Theorem \ref{theory5} is similar to the proof of Theorem \ref{theory2}. The regret in the exploration phase during the $k$-th episode is bounded by $Ba_k$. Thus, our focus now shifts to analyzing the upper bound of the regret during the exploitation phase. By referring to equation (\ref{Rbound}), we observe that the conditional expectation of regret at time $t$ in the exploitation phase can be bounded by $(p_t^* - p_t)^2$. Consequently, our proof begins by deriving a bound for $(p_t^* - p_t)^2$. \par 
During the episode $k$, for $t$ in the exploitation phase, we have
\begin{equation}\label{pbound1}
\begin{aligned}
(p_t^*-p_t)^2&=[g(\boldsymbol{\theta}_0^\top \boldsymbol{x}_t^0)-g(\hat{\boldsymbol{\theta}}_k^\top \boldsymbol{x}_t-\hat{\boldsymbol{\beta}}_k^\top \hat{\boldsymbol{\gamma}}g'(\hat{\boldsymbol{\theta}}_k^\top \boldsymbol{x}_t))]^2\\
&\leq [\boldsymbol{\theta}_0^\top \boldsymbol{x}_t^0- \hat{\boldsymbol{\theta}}_k^\top \boldsymbol{x}_t+\hat{\boldsymbol{\beta}}_k^\top \hat{\boldsymbol{\gamma}}g'(\hat{\boldsymbol{\theta}}_k^\top \boldsymbol{x}_t)]^2\\
&=[\boldsymbol{\theta}_0^\top \boldsymbol{x}_t^0- \hat{\boldsymbol{\theta}}_k^\top \boldsymbol{x}_t^0+\hat{\boldsymbol{\beta}}_k^\top A^{-1}\boldsymbol{\beta}_0 g'(\boldsymbol{\theta}_0 \boldsymbol{x}_t)+\hat{\boldsymbol{\beta}}_k^\top \hat{\boldsymbol{\gamma}}g'(\hat{\boldsymbol{\theta}}_k^\top \boldsymbol{x}_t)]^2\\
&\leq 2| (\boldsymbol{\theta}_0-\hat{\boldsymbol{\theta}}_k)^\top \boldsymbol{x}_t^0|^2+2[\hat{\boldsymbol{\beta}}_k^\top \boldsymbol{\gamma} g'(\boldsymbol{\theta}_0 \boldsymbol{x}_t)-\hat{\boldsymbol{\beta}}_k^\top \hat{\boldsymbol{\gamma}}g'(\hat{\boldsymbol{\theta}}_k^\top \boldsymbol{x}_t)]^2\\
&:=2J_1+2J_5.
\end{aligned}
\end{equation}
The first inequality is because of Lemma \ref{t1}. The second equality follows Equation (\ref{xt}). Noting that $J_1$ in (\ref{pbound1}) is exactly the same with that in (\ref{pbound}). Therefore, we only need to find a upper bound on $J_5$. \par
Now, we analyze $J_5$. By Lemma \ref{t2}, we assume $\|g''(\cdot)\|<C_{g''}$ on the bounded interval $[-W, B]$ for some constant $C_{g''}>0$. From (\ref{pbound1}), we have
\begin{equation}\label{j3}
 \begin{aligned}
J_5&=[\hat{\boldsymbol{\beta}}_k^\top \boldsymbol{\gamma} g'(\boldsymbol{\theta}_0^\top \boldsymbol{x}_t)-\hat{\boldsymbol{\beta}}_k^\top \hat{\boldsymbol{\gamma}}g'(\hat{\boldsymbol{\theta}}_k^\top \boldsymbol{x}_t)]^2\\
&=|\hat{\boldsymbol{\beta}}_k^\top (\boldsymbol{\gamma} -\hat{\boldsymbol{\gamma}})g'(\boldsymbol{\theta}_0^\top \boldsymbol{x}_t)+\boldsymbol{\hat{\beta}}_k^\top \hat{\boldsymbol{\gamma}}[g'(\boldsymbol{\theta}_0^\top \boldsymbol{x}_t)-g'(\hat{\boldsymbol{\theta}}_k^\top \boldsymbol{x}_t)]|^2\\
&\leq 2|\boldsymbol{\hat{\beta}}_k^\top (\boldsymbol{\gamma} -\hat{\boldsymbol{\gamma}})g'(\boldsymbol{\theta}_0^\top \boldsymbol{x}_t)|^2+2|\hat{\boldsymbol{\beta}}_k^\top \hat{\boldsymbol{\gamma}}[g'(\boldsymbol{\theta}_0^\top \boldsymbol{x}_t)-g'(\hat{\boldsymbol{\theta}}_k \boldsymbol{x}_t)]|^2\\
&\leq 2|\boldsymbol{\hat{\beta}}_k^\top (\boldsymbol{\gamma} -\hat{\boldsymbol{\gamma}})|^2+2C^2_{g''}|\hat{\boldsymbol{\beta}}_k^\top \hat{\boldsymbol{\gamma}}\boldsymbol{x}_t^\top(\boldsymbol{\theta}_0-\hat{\boldsymbol{\theta}}_k)|^2\\
&\leq 2\|\boldsymbol{\hat{\beta}}_k\|^2\|\boldsymbol{\gamma} -\hat{\boldsymbol{\gamma}}\|_2^2+2C^2_{g''}\|\hat{\boldsymbol{\beta}}_k^\top\hat{\boldsymbol{\gamma}}\boldsymbol{x}_t^\top\|_2^2\|\boldsymbol{\theta}_0-\hat{\boldsymbol{\theta}}_k\|_2^2.
\end{aligned}   
\end{equation}
The last second inequality is due to Lemma $\ref{t2}$. Now, we derive a upper bound of $\|\hat{\boldsymbol{\gamma}}\|_2^2$. Assume that  $\hat{\boldsymbol{\gamma}}$ is estimated from $n$ samples. We have
\begin{equation}\label{ue}
|\boldsymbol{u}^{\top}\boldsymbol{\varepsilon}_j|=|\boldsymbol{\gamma}_j\sum_{t=1}^nu_t[g'(\hat{\boldsymbol{\theta}}_k^\top \boldsymbol{x}_t)-g'(\boldsymbol{\theta}_0^\top \boldsymbol{x}_t)]| \leq n|\gamma_j|.    
\end{equation}
The last inequality is due to $0<u_t=g'(\hat{\boldsymbol{\theta}}_k\boldsymbol{x}_t)<1$ by Lemma \ref{t1}, and hence $|g'(\hat{\boldsymbol{\theta}}_k\boldsymbol{x}_t)-g'(\boldsymbol{\theta}^0\boldsymbol{x}_t)|<1$ By (\ref{ols}), we have
\begin{equation}\label{gamma2}
|\hat{\boldsymbol{\gamma}}_j|=\bigg|\frac{\boldsymbol{u}^{\top}\boldsymbol{\Delta}_j}{\boldsymbol{u}^\top \boldsymbol{u}}\bigg|
=\bigg|\frac{\boldsymbol{u}^{\top}(\boldsymbol{\gamma}_j\boldsymbol{u}+\boldsymbol{\varepsilon}_j)}{\boldsymbol{u}^\top \boldsymbol{u}}\bigg|
\leq |\boldsymbol{\gamma}_j|+\bigg|\frac{\boldsymbol{u}^{\top}\boldsymbol{\varepsilon}_j}{\boldsymbol{u}^\top \boldsymbol{u}}\bigg|
\leq \bigg(1+\frac{1}{c_{g'}^2}\bigg)|\boldsymbol{\gamma}_j|.
\end{equation}
The last inequality is due to (\ref{c1}) and (\ref{ue}).
Therefore,
\begin{equation}\label{gamma}
\begin{aligned}
\|\hat{\boldsymbol{\gamma}}\|_2^2&=\sum_{j=1}^d \hat{\boldsymbol{\gamma}}_j^2
\leq \bigg(1+\frac{1}{c_{g'}^2}\bigg)^2\sum_{t=1}^d\boldsymbol{\gamma}_j^2
\leq \bigg(1+\frac{1}{c_{g'}^2}\bigg)^2\frac{W_{\theta}^2}{\lambda_{Amin}^2}.
\end{aligned}
\end{equation}
The last inequality is due to $\sum_{t=1}^d\boldsymbol{\gamma}_j^2=\|\boldsymbol{\gamma}\|_2^2=\|A^{-1}\boldsymbol{\theta}_0\|_2^2\leq \frac{W_\theta^2}{\lambda_{Amin}^2}$.
Then, by (\ref{j3}) and (\ref{gamma}), we have for $k>1$
\begin{equation}\label{ej21}
\begin{aligned}
\mathbb{E}J_5&\leq 2\mathbb{E}\{ \|\boldsymbol{\hat{\beta}}_k\|^2_2\|\boldsymbol{\gamma} -\hat{\boldsymbol{\gamma}}\|_2^2+C^2_{g''}\|\hat{\boldsymbol{\beta}}_k\|_2^2\|\hat{\boldsymbol{\gamma}}\|_2^2\|\boldsymbol{x}_t\|_2^2\|\boldsymbol{\theta}_0-\hat{\boldsymbol{\theta}}_k\|_2^2\}\\
&\leq 2W_{\theta}^2\mathbb{E}\|\boldsymbol{\gamma} -\hat{\boldsymbol{\gamma}}\|_2^2+2C^2_{g''}\bigg(1+\frac{1}{c_{g'}^2}\bigg)^2\frac{W_{\theta}^4C_x}{\lambda_{Amin}^2}\mathbb{E}\|\boldsymbol{\theta}_0-\hat{\boldsymbol{\theta}}_k\|_2^2\\\
&\leq \frac{16W_\theta^4C_{g''}^2C_x(d+1)C_{up}^2}{\tau \lambda_{Amin}^2C^2_{down}\lambda_{min}c_{g'}^4\sqrt{\ell_k}}+\bigg(1+\frac{1}{c_{g'}^2}\bigg)^2\frac{2C^2_{g''}W_{\theta}^4C_x}{\lambda_{Amin}^2} \frac{2(d+1)C_{up}^2}{C^2_{down}\lambda_{min}(a_k+1)}\\
&=\frac{4W_\theta^4C_{g''}^2C_xC_{up}^2(d+1)}{\lambda_{Amin}^2C^2_{down}\lambda_{min} c_{g'}^4}\bigg[\frac{4}{\tau\sqrt{\ell_k}}+\frac{(1+c_{g'})^2}{a_k+1}\bigg].
\end{aligned}
\end{equation}
The second inequality is due to (\ref{c3}) and (\ref{gamma}). The third inequality follows Lemma \ref{lemma11} and \ref{lemmaunA}. By equations (\ref{Rbound}), (\ref{ej1}), (\ref{pbound1}) and (\ref{ej21}), for $k>1$, the expected regret at time $t$ is 
\begin{align*}
 \mathbb{E}(R_t)&=\mathbb{E}[\mathbb{E}(R_t|\Tilde{\mathcal{H}}_{t-1})] \\
 &\leq \bigg(M_f+\frac{B}{2}M_{f'}\bigg)\mathbb{E}(2J_1+2J_5)\\
 &\leq (2M_f+BM_{f'})\bigg\{\frac{2(d+1)C_{up}^2\lambda_{max}}{C^2_{down}\lambda_{min}(a_k+1)}+\frac{4W_\theta^4C_{g''}^2C_xC_{up}^2(d+1)}{\lambda_{Amin}^2C^2_{down}\lambda_{min} c_{g'}^4}\bigg[\frac{4}{\tau\sqrt{\ell_k}}+\frac{(1+c_{g'})^2}{a_k+1}\bigg]\bigg\}\\
 &= \frac{2(2M_f+BM_{f'})(d+1)C_{up}^2}{C^2_{down}\lambda_{min}}\left \{ \frac{\lambda_{max}}{a_k+1}+\frac{2W_\theta^4C_{g''}^2C_x}{\lambda_{Amin}^2 c_{g'}^4}\bigg[\frac{4}{\tau\sqrt{\ell_k}}+\frac{(1+c_{g'})^2}{a_k+1}\bigg]\right\}.
\end{align*}
To simplify the above formula, we define
\begin{align*}
C_1&=\frac{2(2M_f+BM_{f'})(d+1)C_{up}^2}{C^2_{down}\lambda_{min}}\bigg[\lambda_{max}+\frac{2W_\theta^4C_{g''}^2C_x(1+c_{g'})^2}{\lambda_{Amin}^2 c_{g'}^4}\bigg], \\
C_2&=\frac{2(2M_f+BM_{f'})(d+1)C_{up}^2}{C^2_{down}\lambda_{min}}\frac{8W_\theta^4C_{g''}^2C_x}{\tau \lambda_{Amin}^2 c_{g'}^4}=\frac{16(2M_f+BM_{f'})(d+1)C_{up}^2W_\theta^4C_{g''}^2C_x}{\tau \lambda_{Amin}^2 c_{g'}^4C^2_{down}\lambda_{min}}.
\end{align*}
Therefore,
\begin{equation}
\mathbb{E}(R_t)\leq\frac{C_1}{a_k}+\frac{C_2}{\sqrt{\ell_k}}.    
\end{equation}
Therefore, The total expected regret during the $k$-th episode including the exploration phase
and the exploitation phase is
\begin{align*}
Regret_k&\leq Ba_k+(\ell_k-a_k)\left (\frac{C_1}{a_k}+\frac{C_2}{\sqrt{\ell_k}}\right)
<Ba_k+\frac{C_1\ell_k}{a_k}+C_2\sqrt{\ell_k}.
\end{align*}
We choose $a_k=\sqrt{\frac{C_1\ell_k}{B}}$, which minimizes the upper bound of $Regret_k$.
Therefore,
\begin{align*}
Regret_k&< 2\sqrt{BC_1\ell_k}+C_2\sqrt{\ell_k}=(2\sqrt{BC_1}+C_2)\sqrt{\ell_0}2^{\frac{k-1}{2}}.
\end{align*}
Since the length of episodes grows exponentially, the number of episodes by period $T$ is logarithmic in $T$. Specifically, $T$ belongs to episode $K=\lfloor \log \frac{T}{l_0}\rfloor +1$.
The total expected regret can be bounded by
\begin{align*}
Regret(T)&=(2\sqrt{BC_1}+C_2)\sqrt{\ell_0}\sum_{k=1}^K2^{\frac{k-1}{2}}\\
&=(2\sqrt{BC_1}+C_2)\sqrt{\ell_0}\frac{2^{\frac{K}{2}}-1}{\sqrt{2}-1}\\
&\leq (2\sqrt{BC_1}+C_2)\sqrt{\ell_0}\frac{\sqrt{\frac{2T}{l_0}}-\sqrt{2}}{\sqrt{2}-1}\\
&< (2\sqrt{BC_1}+C_2)(2+\sqrt{2})\sqrt{T}.
\end{align*}
Finally, we define two new constants,
\begin{align*}
C_3^*&=(4+2\sqrt{2})\sqrt{\frac{BC_1}{d+1}}=\frac{(4\sqrt{2}+4)C_{up}}{C_{down}}\sqrt{\frac{(2M_f+BM_{f'})B}{\lambda_{min}}\bigg[\lambda_{max}+\frac{2W_\theta^4C_{g''}^2C_x(1+c_{g'})^2}{\lambda_{Amin}^2 c_{g'}^4}\bigg]},\\
C_4^*&=\frac{(2+\sqrt{2})C_2\tau \lambda_{Amin}^2}{d+1}=\frac{16(2+\sqrt{2})(2M_f+BM_{f'})C_{up}^2W_\theta^4C_{g''}^2C_x}{ c_{g'}^4C^2_{down}\lambda_{min}}.
\end{align*} The proof is completed.
\section{Technical Lemmas}\label{s4}
\begin{lemma}\label{t1}
If $1-F$ is log-concave, the pricing function $g(\cdot)$ is 1-Lipschitz continuous.
\end{lemma}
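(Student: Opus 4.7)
The plan is to show that $g'(v) \in [0,1)$ pointwise, which immediately yields the 1-Lipschitz property. The natural route is through the hazard rate $h(v) = f(v)/[1-F(v)]$, because the log-concavity of $1-F$ has a clean, classical consequence for $h$, and because both $\phi$ and $g$ can be expressed neatly in terms of $h$.

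First, I would record the standard fact: $\log(1-F)$ being concave means $[\log(1-F)]' = -f/(1-F) = -h$ is non-increasing, so the hazard rate $h$ is non-decreasing (hence $h' \geq 0$ wherever defined). Using $\phi(v) = v - (1-F(v))/f(v) = v - 1/h(v)$, a direct computation gives
\begin{equation*}
\phi'(v) = 1 + \frac{h'(v)}{h(v)^2} \geq 1.
\end{equation*}
Assumption~\ref{a1} guarantees $f > 0$ (since $F$ is strictly increasing) and that the relevant derivatives exist, so $\phi$ is continuously differentiable with $\phi'(v) \geq 1 > 0$; in particular $\phi$ is strictly increasing and hence invertible on its range, with
\begin{equation*}
(\phi^{-1})'(u) \;=\; \frac{1}{\phi'(\phi^{-1}(u))} \;\in\; (0,1].
\end{equation*}

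Differentiating $g(v) = v + \phi^{-1}(-v)$ then gives $g'(v) = 1 - (\phi^{-1})'(-v) \in [0,1)$. By the mean value theorem this implies $|g(v_1) - g(v_2)| \leq |v_1 - v_2|$ for all $v_1, v_2$ in the domain, which is exactly the 1-Lipschitz property (and as a byproduct shows $g$ is non-decreasing, a fact used elsewhere in the regret analysis).

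The one subtle point, which I would address briefly, is verifying that $g$ is indeed well-defined, i.e., that $-v$ lies in the range of $\phi$ for the relevant $v \in [-W, B]$; this follows from the first-order optimality derivation of the oracle price in (\ref{price}) combined with Assumption~\ref{a3}, so no substantial obstacle arises. The main conceptual step is really the identity $\phi'(v) = 1 + h'(v)/h(v)^2$, after which the Lipschitz bound is immediate.
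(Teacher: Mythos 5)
Your proposal is correct and follows essentially the same route as the paper's proof: both express $\phi$ via the hazard rate, use log-concavity of $1-F$ to get a non-decreasing hazard and hence $\phi'(v) = 1 + h'(v)/h(v)^2 \geq 1$, and then differentiate $g(v) = v + \phi^{-1}(-v)$ to conclude $g'(v) \in [0,1)$. Your brief remark on well-definedness of $\phi^{-1}(-v)$ is a small addition the paper omits, but it does not change the substance of the argument.
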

\begin{proof}
We write the virtual valuation function as $\phi(v)=v-1/\lambda(v)$ where $\lambda(v)=\frac{f(v)}{1-F(v)}=-\log'(1-F(v)$ is the hazard function. Since $1-F$ is log-concave, the hazard function $\lambda(v)$ is increasing, $i.e.$, $\lambda'(v)\geq 0$. Then, 
\begin{equation}\label{phi}
\phi'(v)=1- \bigg [\frac{1}{\lambda(v)}\bigg]'=1+\frac{\lambda'(v)}{\lambda^2(v)}>1.    
\end{equation}
Since $g(v)=v+\phi^{-1}(-v)$, we have $g'(v)=1-1/\phi'(\phi^{-1}(-v)).$ By equation (\ref{phi}), we obtain $0<g'(v)<1$. Therefore, $g(\cdot)$ is 1-Lipschitz continuous.
\end{proof}
\begin{lemma}\label{t2}
If $1-F$ is log-concave, the first derivative $g'(\cdot)$ is locally Lipschitz continuous on $[-W, B]$.
\end{lemma}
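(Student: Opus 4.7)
The plan is to show that $g$ is $C^2$ on $[-W, B]$ with $g''$ bounded; the mean value theorem then immediately upgrades this to (local) Lipschitz continuity of $g'$ on this compact interval, with Lipschitz constant $\sup |g''|$. Starting from the formula $g'(v) = 1 - 1/\phi'(\phi^{-1}(-v))$ established in the proof of Lemma \ref{t1}, I set $u := \phi^{-1}(-v)$. Differentiating $\phi(u) = -v$ gives $du/dv = -1/\phi'(u)$, and a direct chain-rule computation yields
\[ g''(v) = -\frac{\phi''(u)}{[\phi'(u)]^{3}}. \]

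It therefore suffices to bound $|\phi''|$ from above and $\phi'$ away from zero on the relevant range of $u$. The lower bound on $\phi'$ is immediate from Lemma \ref{t1}: log-concavity of $1-F$ forces the hazard rate $\lambda(v) = f(v)/[1-F(v)]$ to be non-decreasing, hence $\phi'(u) = 1 + \lambda'(u)/\lambda(u)^{2} \geq 1$. For the upper bound on $|\phi''|$, I differentiate $\phi(v) = v - [1-F(v)]/f(v)$ twice to express $\phi''$ as a rational function in $f$, $f'$, $f''$, and $1-F$. Assumption \ref{a1} caps $|f|$, $|f'|$, $|f''|$ on $[-W, B]$, while the strict monotonicity of $F$ in Assumption \ref{a1} combined with Assumption \ref{a3} keeps the arguments strictly inside the support of the noise distribution, producing positive lower bounds on $f$ and $1-F$ by continuity on a compact interval.

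The main obstacle I anticipate is making the last step fully rigorous: verifying that $u = \phi^{-1}(-v)$ actually ranges in a compact subinterval on which $f$ and $1-F$ admit positive lower bounds. This is where the bounded-valuation setup pays off: since $u = g(v) - v$ with $g(v)$ bounded by $B$ (optimal price is bounded by the valuation cap) and $|v| \leq W$, the variable $u$ lies in a fixed closed subinterval strictly inside the noise support, and the required uniform positive lower bounds on $f$ and $1-F$ follow from continuity and compactness. Combining these bounds gives $|g''(v)| \leq M_{g''}$ on $[-W, B]$ for a finite constant $M_{g''}$ depending only on the bounds in Assumptions \ref{a0}, \ref{a3}, and \ref{a1}, which is exactly the local Lipschitz property claimed.
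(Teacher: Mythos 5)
Your proposal follows essentially the same route as the paper: both write $g''(v) = -\phi''(u)/[\phi'(u)]^3$ with $u=\phi^{-1}(-v)$, use log-concavity of $1-F$ to get $\phi'>1$, and bound $\phi''$ via the explicit rational expression in $f,f',f'',1-F$ controlled by Assumption \ref{a1}. If anything, you are slightly more careful than the paper, which tacitly divides by $f^3(u)$ and asserts boundedness from the upper bounds in Assumption \ref{a1} alone, whereas you explicitly flag and justify the needed positive lower bound on $f$ over the compact range of $u$.
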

\begin{proof}
Noting $\phi(v)=v-[1-F(v)]/f(v)$, by (\ref{phi}), we have
$$\phi'(v)=1-\frac{-f^2(v)-[1-F(v)]f'(v)}{f^2(v)}=\frac{2f^2(v)+[1-F(v)]f'(v)}{f^2(v)}>1.$$ 
Thus,
\begin{equation}
\label{e1}
    2f^2(v)+[1-F(v)]f'(v)>f^2(v).
\end{equation}
\begin{align*}
\phi''(v)&=\frac{(-f(v)f'(v)+(1-F(v))f''(v))f^2-2(1-F(v))f'(v)f(v)f'(v)}{f(v)^4}\\
&=\frac{-f(v)^2f'(v)+(1-F(v))(f''(v)f(v)-2f'^2(v))}{f^3(v)}.
\end{align*}

 Let $u=\phi^{-1}(-v)$. Since $0<g(v)=v+\phi^{-1}(-v)\leq B$, we have $-v<\phi^{-1}(-v)\leq B-v$.
\begin{align*}
g''(v)&=-\frac{\phi''(u)}{[\phi'(u)]^2}\frac{1}{\phi'(\phi^{-1}(-v))}\\
&=-\frac{\phi''(u)}{[\phi'(u)]^3}\\
&-\frac{-f^2(u)f'(u)+(1-F(u))(f''(u)f(u)-2f'^2(u))}{f^3(u)}\frac{f^6(u)}{(2f^2(u)+(1-F(u))f'(u))^3}\\
&=\frac{f^3(u)[f^2(u)f'(u)-(1-F(u))(f''(u)f(u)-2f'^2(u))]}{(2f^2(u)+(1-F(u))f'(u))^3}.
\end{align*}
By (\ref{e1}), we have
\begin{align*}
|g''(v)|&\leq \frac{|f^3(u)[f^2(u)f'(u)-(1-F(u))(f''(u)f(u)-2f'^2(u))]|}{f^6(u)}\\
&=\frac{|f^2(u)f'(u)-(1-F(u))(f''(u)f(u)-2f'^2(u))|}{f^3(u)}.
\end{align*}
By assumption \ref{a1}, $g''(v)$ is bounded. Therefore, $g'(\cdot)$ is locally Lipschitz continuous.
\end{proof}
%\section{Supporting Lemmas}\label{s5}
We next present a lemma from \cite{Koren2015} as our supporting Lemma.
\begin{lemma}\label{Koren}
(Lemma 5, \cite{Koren2015}) Let $g_1, g_2: \mathcal{K}\rightarrow \mathbb{R}$ be two convex functions defined over a closed and convex domain $\mathcal{K}\subseteq \mathbb{R}^d$, and let $x_1=\mathop{\arg\min}_{x\in \mathcal{K}}g_1(x)$ and $x_2=\mathop{\arg\min}_{x\in \mathcal{K}}g_2(x)$. Assume that $g_2$ is locally $\sigma$-strongly-convex at $x_1$ with respect to a norm $\|\cdot\|$. Then, for $h=g_2-g_1$, we have
\begin{align*}
\|x_2-x_1\|\leq \frac{2}{\sigma}\|\nabla h(x_1)\|^*,
\end{align*}
where $\|\cdot\|^*$ is a dual norm.
\end{lemma}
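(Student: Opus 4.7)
The plan is to derive the bound from two ingredients: the first-order optimality condition for the constrained minimizer $x_1$ of $g_1$, and the strong-convexity-plus-optimality chain for $g_2$ applied at the anchor point $x_1$. Writing $\nabla g_2(x_1)=\nabla g_1(x_1)+\nabla h(x_1)$ lets the $g_1$-contribution be killed by optimality, so only $\nabla h(x_1)$ survives.

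Concretely, I would proceed in four steps. First, by local $\sigma$-strong convexity of $g_2$ at $x_1$ (which I interpret as the quadratic lower bound $g_2(y)\ge g_2(x_1)+\langle \nabla g_2(x_1),y-x_1\rangle+\tfrac{\sigma}{2}\|y-x_1\|^2$ holding at least for $y=x_2$, e.g.\ along the segment from $x_1$ to $x_2$), plug in $y=x_2$ and use $g_2(x_2)\le g_2(x_1)$ from optimality of $x_2$; rearranging gives
\[
\langle \nabla g_2(x_1),\,x_1-x_2\rangle \;\ge\; \tfrac{\sigma}{2}\|x_2-x_1\|^2.
\]
Second, split $\nabla g_2(x_1)=\nabla g_1(x_1)+\nabla h(x_1)$, and invoke the first-order optimality condition for the constrained minimizer $x_1$ of the convex $g_1$ over $\mathcal K$, namely $\langle \nabla g_1(x_1),x-x_1\rangle\ge 0$ for every $x\in\mathcal K$; applied to $x=x_2$ this gives $\langle \nabla g_1(x_1),x_1-x_2\rangle\le 0$, so it can only help the inequality above:
\[
\langle \nabla h(x_1),\,x_1-x_2\rangle \;\ge\; \tfrac{\sigma}{2}\|x_2-x_1\|^2.
\]
Third, apply the Hölder/duality inequality $\langle a,b\rangle\le \|a\|^*\|b\|$ to the left-hand side with $a=\nabla h(x_1)$ and $b=x_1-x_2$, then divide by $\|x_2-x_1\|$ (trivial if it is zero) to obtain $\|x_2-x_1\|\le \tfrac{2}{\sigma}\|\nabla h(x_1)\|^*$. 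Fourth, check that nothing goes wrong in edge cases: the $\|x_2-x_1\|=0$ case is immediate, and if $x_1$ lies on the boundary of $\mathcal K$ the optimality condition is the variational inequality form used above rather than $\nabla g_1(x_1)=0$, which is exactly what we need.

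The only real subtlety, and the step I expect to be the main obstacle in a fully rigorous write-up, is pinning down what \emph{locally} $\sigma$-strongly convex at $x_1$ means in a way that still justifies applying the quadratic lower bound to the (possibly distant) point $x_2$. The cleanest route is to formalize ``local'' along the segment $[x_1,x_2]\subset\mathcal K$ (which lies in $\mathcal K$ by convexity), and use the one-dimensional function $\varphi(t)=g_2(x_1+t(x_2-x_1))$: convexity of $g_2$ together with a strong-convexity lower bound at $t=0$ yields $\varphi(1)\ge \varphi(0)+\varphi'(0)+\tfrac{\sigma}{2}\|x_2-x_1\|^2$, which is exactly the inequality used in Step~1. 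Once that interpretation is settled the argument is short and purely algebraic, and everything else reduces to the three lines above.
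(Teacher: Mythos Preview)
The paper does not supply its own proof of this lemma: it is stated as a supporting technical result and attributed to \cite{Koren2015} (their Lemma~5), with no argument given in the supplement. So there is nothing in the paper to compare against at the level of proof steps.

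That said, your proposal is correct and is essentially the standard argument for this inequality. The chain (strong-convexity lower bound at $x_1$ applied to $x_2$) $+$ (optimality of $x_2$ for $g_2$) $+$ (variational-inequality optimality of $x_1$ for $g_1$ to discard the $\nabla g_1(x_1)$ term) $+$ (duality/H\"older to pass from inner product to norms) is exactly the intended route, and your handling of the constrained case via $\langle \nabla g_1(x_1), x-x_1\rangle\ge 0$ rather than $\nabla g_1(x_1)=0$ is the right generality. Your caveat about the meaning of ``locally $\sigma$-strongly convex at $x_1$'' is well placed: the argument needs the quadratic lower bound to hold at $y=x_2$, and the cleanest reading (also the one implicitly used when the lemma is applied in the paper) is strong convexity along the segment $[x_1,x_2]\subset\mathcal K$, exactly as you formalize via the one-dimensional restriction $\varphi(t)=g_2(x_1+t(x_2-x_1))$.
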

\end{document}